\newtheorem{theorem}{Theorem}
\appto\TPTnoteSettings{\small}
\DeclareMathOperator{\atantwo}{atan2}
 \newtheorem{cor}{Corollary}[section]
 \newtheorem{prop}{Proposition}[section]
 \newtheorem{defn}{Definition}[section]
 \newtheorem{rem}{Remark}[section]
 \newcommand{\Real}{\mathbb{R}}
 \DeclareMathOperator{\sign}{sign}
\begin{document}

\title{
{\small{This paper has been accepted for publication by IEEE Transactions on Robotics}}\\ \vspace{-6pt}
\Huge{\hspace{6pt} Generalized Multi-Speed Dubins Motion Model} 
\thanks{This  work  was  supported  by  US  Office  of  Naval  Research  under  Award Number  N000141613032.  Any  opinions  or  findings  herein  are  those  of  the authors and do not necessarily reflect the views of the sponsoring agencies.}
\thanks{$^\dag$Dept. of Electrical and Computer Engineering, University of Connecticut, Storrs, CT 06269, USA.}
\thanks{$^\ddagger$Naval Undersea Warfare Center, Newport, RI 02841, USA}
\thanks{Digital Object Identifier 10.1109/TRO.2025.3554436}
\thanks{© 2025 IEEE.  Personal use of this material is permitted.  Permission from IEEE must be obtained for all other uses, in any current or future media, including reprinting/republishing this material for advertising or promotional purposes, creating new collective works, for resale or redistribution to servers or lists, or reuse of any copyrighted component of this work in other works.}
}

\author{ \begin{tabular}{ccc}
{James P. Wilson$^\dag$} & {Shalabh Gupta$^\dag$} & {Thomas A. Wettergren$^\ddagger$}\\
\end{tabular}
\\ \vspace{-12pt}
}

\maketitle
\begin{abstract}
The paper develops a novel motion model, called Generalized Multi-Speed Dubins Motion Model (GMDM), which extends the Dubins model by considering multiple speeds. While the Dubins model produces time-optimal paths under a constant-speed constraint, these paths could be suboptimal if this constraint is relaxed to include multiple speeds. This is because a constant speed results in a large minimum turning radius, thus producing paths with longer maneuvers and larger travel times. In contrast, multi-speed relaxation allows for slower speed sharp turns, thus producing more direct paths with shorter maneuvers and smaller travel times. Furthermore, the inability of the Dubins model to reduce speed could result in fast maneuvers near obstacles, thus producing paths with high collision risks. 

In this regard, GMDM provides the motion planners the ability to jointly optimize time and risk by allowing the change of speed along the path. GMDM is built upon the six Dubins path types considering the change of speed on path segments. It is theoretically established that GMDM provides full reachability of the configuration space for any speed selections. Furthermore, it is shown that the Dubins model is a  specific case of GMDM for constant speeds. The solutions of GMDM are analytical and suitable for real-time applications. The performance of GMDM in terms of solution quality (i.e., time/time-risk cost) and computation time is comparatively evaluated against the existing motion models in obstacle-free as well as obstacle-rich environments via extensive Monte Carlo simulations. The results show that in obstacle-free environments, GMDM produces near time-optimal paths with significantly lower travel times than the Dubins model while having similar computation times. In obstacle-rich environments, GMDM produces time-risk optimized paths with substantially lower collision risks. 
\end{abstract}

\begin{IEEEkeywords}
Motion planning, Kinodynamic constraints, Dubins vehicles, Multi-speed vehicles, Time-risk cost
\end{IEEEkeywords}

\thispagestyle{empty}

\vspace{-6pt}
\section{Introduction}
The paper develops a motion model, called Generalized Multi-Speed Dubins Motion Model (GMDM), for multi-speed vehicles. GMDM provides a higher fidelity than the Dubins model by allowing the change of speed along the path. The capability to reduce speed from its max value enables GMDM to 1) create sharp turns, thus resulting in overall shorter travel times than the Dubins paths, and 2) reduce collision risks around obstacles, thus resulting in safe maneuvering. It is theoretically established that GMDM achieves full reachability of the configuration space between any start and goal poses for any arbitrary set of speeds. It is also shown that GMDM reduces to the Dubins model for constant speeds. Furthermore, GMDM provides computationally efficient analytical solutions, thus making it suitable for real-time applications.

\begin{figure}[t!]
    \centering
    \subfloat{
    \centering
        \includegraphics[width=0.86\columnwidth]{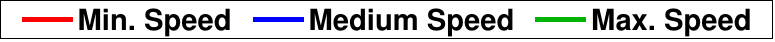}\label{fig:intro_legend}}\vspace{1pt}\\
        \setcounter{subfigure}{0}
    \centering
    \subfloat[Shorter-time path than Dubins in obstacle-free environments.]{
        \includegraphics[width=0.22\textwidth]{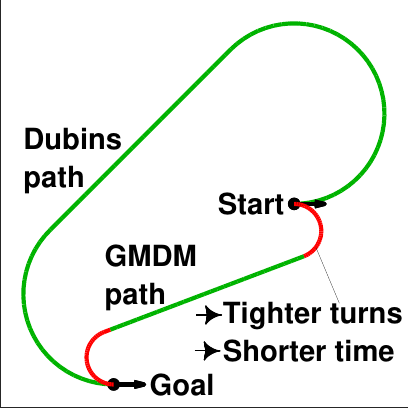}
        \label{fig:intro_a}
    }
    \subfloat[Safer path than Dubins in obstacle-rich environments.]{
        \includegraphics[width=0.22\textwidth]{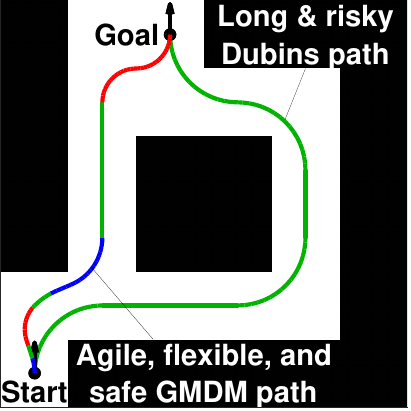}
        \label{fig:intro_b}
    }
    \caption{Comparison of GMDM with the Dubins paths.}  \vspace{-12pt}
    \label{fig:intro}
\end{figure}

\vspace{-6pt}
\subsection{Literature Review}

A fundamental problem in path planning is to find the minimum time path from a start pose to a goal pose while considering kinodynamic constraints of the vehicle. Dubins~\cite{D57}\cite{Shkel2001_DubinsDerivation} showed that in absence of obstacles, the shortest path for a curvature-constrained constant-speed vehicle between a pair of poses must be one of the following six canonical path types: LSL, RSR, LSR, RSL, LRL and RLR, where L(R) refers to a left (right) turn with the maximum curvature, and S indicates a straight line segment. Reeds-Shepp curves \cite{reeds1990optimal} extended Dubins curves by considering backward velocity. These models have analytical solutions that are easy to compute and implement. Further research considered field-of-view constraints \cite{Paolo2010}, environmental disturbances \cite{Mittal2020DubinsCurrents,McGee2007,Dolinskaya2012,Wolek2015DubinsCurrent}, multiple vehicles \cite{Loizou2008,CARE_Song2020}, obstacle-avoidance \cite{NPHard2}, 
the moving-target interception problem~\cite{BT13}\cite{meyer2015dubins}\cite{ding2019curvature}, the traveling salesman problem \cite{Savla2008_DubinsTSP,Bull02011_DubinsTSP,Ny2012_DubinsTSPDiscrete,Cohen2017_DubinsTSPProgram,Vana2020}, the orienteering problem \cite{Penicka2017,Tsiogkas2018}, and the coverage problem \cite{shen2019,song2018}. These problems, however, do not consider multiple speeds which are essential for time-optimal risk-aware motion planning \cite{Song2019_tstar}.

Recently, Wolek, et al. \cite{WCW16} developed a motion model for time-optimal planning in obstacle-free environments. Their solution is based on two speeds, i.e., the min and max speeds, which are sufficient for time-optimality in obstacle-free environments. However, the solutions of this model are not closed-form and require nonlinear solvers. Kucerov, et al. \cite{Kucerov2020_MultiRadiiDubins,Kucerov2021_MultiSpeedDubins} proposed multiple speeds and turning radii to find time-optimal paths for aircraft. However, Kucerov's model does not consider the LRL and RLR path types and requires higher computation times than the Dubins model without providing performance guarantees.  Finally, there exist other kinodynamic models that consider acceleration \cite{Donald1993_Kinodynamic} and curvature \cite{Boissonnat2003_Kinodynamic} constraints, but the high-dimensional nature of these problems make them infeasible for practical implementation \cite{Elbanhawi2014_SampleReview}.

Other motion models enforce curvature continuity (e.g., smooth transition from an L to an R segment) by considering a constraint on its derivative. Fermat's spiral has been used to ensure that transitions between Dubins segments are curvature-continuous \cite{Lekkas2013_FermatSpiral}. Fraichard and Scheuer \cite{Fraichard2004_CCSteer} extended Reeds-Shepp curves by using Euler's spiral, where the curvature changes linearly with respect to arc-length. Bruyninckx and Reynaerts \cite{Bruyninckx1997_PythagoreanCurves} enforced a continuously differentiable path by finding fifth-order Pythagorean hodographs that satisfy the continuity constraints. Qu et al. \cite{Qu2004_Spline} used piecewise-constant polynomials to construct complex paths that are twice-differentiable. Faigl and Vana \cite{Faigl2018} used Bézier curves to generate smooth paths to travel through a set of waypoints. 

\vspace{-12pt}
\subsection{Motivation}
The constant speed constraint in the Dubins model severely restricts the vehicle's maneuverability. In obstacle-free environments, the Dubins model produces suboptimal paths that are longer with larger travel times due to its inability to create sharp turns by reducing the speed. Fig.~\ref{fig:intro_a} shows an example where the GMDM path turns at the minimum speed to rapidly orient the vehicle towards the goal, thus producing a path which is both shorter and quicker than the Dubins path. 

The Wolek's motion model \cite{WCW16} is an improvement over the Dubins model as it produces time-optimal paths in obstacle-free environments. This is achieved by utilizing extremal (i.e., maximum and minimum) speeds, which are sufficient for time-optimality in obstacle-free environments. However, unlike Dubins, Wolek's solutions require numerical solvers for nonlinear equations, thus limiting their use in many real-time applications. 
On the other hand, GMDM provides analytical solutions that can be computed in real-time while approaching the time-optimal solutions of the Wolek's model. 

In obstacle-rich environments, risk evaluation becomes critical in motion planning~\cite{Pereira2013_SafteyOcean,Hernandez2016_Saftey2,Liu2017_Saftey3,Marchidan2022, SMART_Shen2023,CTCPP_Shen_2022}. Both time and risk costs are considered by the T$^\star$ algorithm \cite{Song2019_tstar} for time-optimal risk-aware planning. However, the solution quality of a high-level motion planner (e.g., RRT$^*$ and T$^\star$)  depends on the underlying motion model used to connect any two waypoints. Thus, a motion planner using the Dubins model lacks the capability to reduce risk and often produces long and risky paths. Moreover, due to constant speed, the Dubins paths lack flexible maneuvering around obstacles. Fig.~\ref{fig:intro_b} shows an example where the Dubins path goes through a long narrow corridor which is risky, while the GMDM path is shorter, quicker and safer by virtue of changing the speed. 

Similarly, a motion planner using the Wolek's model produces sub-optimal results in obstacle-rich environments because the two extremal speeds are not sufficient to minimize the time-risk cost. 
For instance, the straight line (S) segments in the Wolek's model always have the max speed for time-optimality; however, these segments should adopt lower speeds to reduce risk when approaching an obstacle. Similarly, the turn segments should adopt different slower speeds to tightly wrap around the obstacles of different geometries.

As such, the desired motion model should enable the selection of appropriate speeds for each path segment in order to produce time-risk-optimal paths that are both fast and safe. The speed selection should consider both travel time and risk based on the vehicle's orientation and distance from the obstacle. The model should preferably have analytical solutions for real-time computation. Finally, the model should be easy to understand and implement. To the best of our knowledge, no existing model in literature possesses all of these attributes. 

In this regard, this paper develops a motion model, GMDM, which is a generalization of the Dubins model that incorporates multiple speeds such that any path segment can select an appropriate speed. The model was first introduced in our prior work \cite{Wilson2019_GDubins}, where preliminary results showed that the model provides paths with reduced travel times and risks in obstacle-rich environments as compared to those obtained with existing motion models. This paper significantly improves upon our prior work in several ways as described below.

\vspace{-9pt}
\subsection{Contributions}
The main contributions of this paper are as follows:
\begin{itemize}
    \item Model development 
    \begin {itemize} 
    \item A fundamental extension of the Dubins motion model to GMDM considering multiple speeds for each path segment (i.e., L, S, and R) that enables both time and risk analysis in motion planning.
    \end{itemize}
    \item Model solution
       \begin{itemize}
    \item  Derivation of analytical solutions of the forward and inverse problems of the GMDM paths, which allow for real-time implementation.
    \item Theoretical guarantee of full reachability for any pair of start and goal poses for any selection of speeds.
    \item Analytical result to show that the Dubins model is a  specific case of GMDM for constant speeds. 
    \end{itemize}
\item Model validation 
\begin {itemize}
\item Comparative evaluation of GMDM against the Dubins and the Wolek's models in terms of solution quality (i.e., time/time-risk cost) and computation time in both obstacle-free and obstacle-rich environments.
\item Numerical results to show that in obstacle-free environments, the solution quality of GMDM approaches the time-optimal solutions of the Wolek’s model while enabling significantly faster computation.
\end{itemize}
\end{itemize}

\vspace{-12pt}  
\subsection{Organization}
    The rest of the paper is organized as follows. 
    { Section~\ref{sec:probform} formulates the time-risk optimal motion planning problem.} 
    Section~\ref{sec:DeriveMultispeedDubins} presents the details, solution, and properties of GMDM. Section~\ref{sec:MultispeedDubinsReachability} presents the reachability analysis of GMDM. Section~\ref{sec:results} presents the results in obstacle-free and obstacle-rich environments. 
    { Section~\ref{sec:limits} discusses the practical considerations of GMDM.}
    Section~\ref{sec:MultispeedDubinsConclusion} concludes the paper with recommendations for future work. Appendices \ref{AppendixA}, \ref{AppendixB}, \ref{AppendixC} derive the analytical results of Section~\ref{sec:DeriveMultispeedDubins} and \ref{sec:MultispeedDubinsReachability}.  

\section{Problem Description}\label{sec:probform}
{ 
\label{app:tstar}

Let $\mathscr{A}\subset \Real^2$ be a space possibly occupied with obstacles. Consider a vehicle maneuvering in this space whose motion is described as
\begin{subequations}
\label{eq:vehiclemotion}
\begin{align}
 \hspace{-0pt}   \dot{x}(t) &= v(t)\cos{\theta(t)}, \\
 \hspace{-0pt}   \dot{y}(t) &= v(t)\sin{\theta(t)}, \\
 \hspace{-0pt}   \dot{\theta}(t) &= \omega(t),  &&  
\end{align}
\end{subequations}
where $\textbf{p}(t)\triangleq(x(t),y(t),\theta(t)) \in \textup{SE}(2)$ is the vehicle pose at time $t$; $v(t)\in$ $\mathbb{V}=$ $[v_{min},v_{max}]$ is its speed in m/s at time $t$, where $v_{min},v_{max}\in \Real^+$; and $\omega(t)\in$ $\Omega =$$[-\omega_{max},\omega_{max}]$ is its angular speed (i.e., turning rate) in rad/s at time $t$, where $\omega_{max}\in \Real^+$, $\omega>0$ denotes a left turn, and conversely $\omega<0$ denotes a right turn.  
The curvature of the vehicle is defined as $\kappa(t)=|\omega(t)|/v(t)$, where $0\leq \kappa(t)\leq \omega_{max}/v_{min}$. The turning radius is $r(t)=1/\kappa(t)$,
where $\kappa(t)=0$ corresponds to forward movement on a straight line. 

Let $\mathbf{p}_{s}$ and $\mathbf{p}_{g}$ be the start and goal poses of the overall planning problem. The objective is to find the optimal control $\textbf{u}(t)\triangleq (v(t),\omega(t))\in \mathbb{U}$, where $\mathbb{U}= \mathbb{V} \times \Omega$, that satisfies the above constraints and drives the vehicle from $\mathbf{p}_{s}$ and $\mathbf{p}_{g}$, while avoiding obstacles and minimizing the time/time-risk cost. Let $\Gamma$ denote the set of all feasible paths from  $\mathbf{p}_{s}$ to $\mathbf{p}_{g}$.

The time-risk cost function \cite{Song2019_tstar} for evaluating any path $\gamma\in\Gamma$ is as follows. The risk cost is computed by estimating the collision time of the vehicle with any obstacles tangent to its current direction. Let $d_c(s)$ be the distance to the nearest obstacle in the direction of the pose at $\gamma(s)$. Let $v(s)$ be the speed of the vehicle at $s$. Thus, $t_c(s)\triangleq d_c(s)/v(s)$ is the collision time for the vehicle to hit the obstacle tangential to its current trajectory. The risk cost at $s$ is defined as:
\begin{equation}
    \mathscr{R}(s) =
    \begin{cases}
        1+\frac{t^\star}{t_c(s)}\log{\left(\frac{t^\star}{t_c(s)}\right)} & \text{if } t_c(s)\leq t^\star \\
        1 & \text{otherwise},
    \end{cases}
\end{equation}
where $t^\star$ is the risk-free collision threshold in seconds (i.e., the time for the vehicle to stop, maneuver around, or regain its control).  
The time cost is computed from the lengths and velocities of path segments. Thus, the joint cost function considering both risk and time is defined as:
\begin{equation}
\label{eq:tstarcostfun}
    J(\gamma) = \int_\gamma \big(\mathscr{R}(s)\big)^\lambda\cdot \frac{1}{v(s)} ds,
\end{equation}
where $\lambda\geq 0$ is a user-defined risk-weight. Thus,
the overall problem is to find the collision-free path $\gamma^*$ with minimal cost $J(\gamma^*)$ such that $J(\gamma^*) \leq J(\gamma)\; \forall \gamma\in\Gamma$. In this paper, we consider and evaluate both time-optimal ($\lambda = 0$) and time-risk optimal ($\lambda > 0$) motion planning problems. Typically, for planning a path in complex scenarios, a high-level planner is used with an underlying motion model (e.g., the Dubins model or GMDM).  

}

\section{GMDM}\label{sec:DeriveMultispeedDubins} 
This section presents the analytical details of GMDM. The Dubins model consists of a set of six constant speed path types: LSL, RSR, LSR, RSL, LRL and RLR. These can be solved for the duration spent on each path segment to obtain the minimum-time path between any two given poses in obstacle free environments. GMDM generalizes the Dubins model by relaxing the constant speed constraint. Specifically, GMDM allows the motion planner to select the speeds for L, S, and R segments of the Dubins path types.  Note that the different speeds for L and R segments lead to different turning radii, thus enhancing path maneuverability. On the other hand, reducing the speed on any segment near obstacles reduces the collision risk, thus enhancing path safety. This allows the motion planner to minimize the time and time-risk costs in obstacle-free and obstacle-rich environments, respectively.

\vspace{-6pt}

\vspace{-3pt}
\subsection{Motion Primitives}
Let $\textbf{u}\triangleq (v,\omega)\in \mathbb{U}$ be a constant input applied to the vehicle in (\ref{eq:vehiclemotion}) at pose $\textbf{p}(t)$ for a certain time duration $\tau \in \{\mathbb{R}^+\cup {0}\}$. Let $ \textup{M}:   \textup{SE}(2) \times  \mathbb{U} \times\{\mathbb{R}^{+}\cup{0}\} \rightarrow  \textup{SE}(2)$ be the motion primitive that describes the evolution of the pose $\textbf{p}(t)$ subject to the input $\textbf{u}$ for a time duration $\tau$. Thus, $\textbf{p}(t+\tau)= \textup{M}_{\textbf{u},\tau}(\textbf{p}(t))$. The motion primitive $\textup{M}$ is of two types as follows:  
\begin{equation}
    \label{eq:piecewiseElementaryMotion}
   \hspace{-0pt}      \textup{M}_{\textbf{u},\tau}(\textbf{p}(t)) = \begin{cases}
             \textup{C}_{\textbf{u},\tau}(\textbf{p}(t)) & \omega\neq 0\\
            \textup{S}_{\textbf{u},\tau}(\textbf{p}(t)) & \omega=0,
        \end{cases}
\end{equation}
where $ \textup{C}_{\textbf{u},\tau}(\cdot)$ and $ \textup{S}_{\textbf{u},\tau}(\cdot)$ denote the turning ($\omega\neq 0$) and straight line ($\omega=0$) motions, respectively.  
The turning motion $\textup{C}_{\textbf{u},\tau}(\cdot)$ is again of two types: left turn $\textup{L}_{\textbf{u},\tau}(\cdot)$ ($\omega > 0$) and right turn $ \textup{R}_{\textbf{u},\tau}(\cdot)$ ($\omega < 0$). 
Fig.~\ref{fig:PrimitivesGeometric} shows the motion primitives for straight, left, and right turn maneuvers, from where we can derive $\textbf{p}(t+\tau)$ geometrically as follows:

\vspace{6pt}
\begin{itemize}
\item For the turning motion: $\textbf{p}(t+\tau)  = 
     \textup{C}_{\textbf{u},\tau}(\textbf{p}(t))$, s.t.
\begin{subequations}\vspace{0pt}
\label{eq:CMotionPrimitive}
\begin{align}
 x(t+\tau)  & =  x(t)-\frac{v}{\omega}\Big(\sin{\theta(t)}-\sin{\big(\theta(t)+\omega\tau\big)}\Big),\\
 y(t+\tau)  &   =  y(t)+\frac{v}{\omega}\Big(\cos{\theta(t)}-\cos{\big(\theta(t)+\omega\tau\big)}\Big),\\
 \theta(t+\tau)   &   =  \theta(t)+\omega\tau.
   \end{align}
\end{subequations}
    \item For the straight line motion: $\textbf{p}(t+\tau)  =  \textup{S}_{\textbf{u},\tau}(\textbf{p}(t))$, s.t. 
    \begin{subequations}\vspace{0pt}
    \label{eq:SMotionPrimitive}
    \begin{align}    
 \hspace{-0pt} x(t+\tau) &  = x(t) +v\tau\cos{\theta(t)}, \\
\hspace{-0pt}  y(t+\tau)  & =  y(t) + v\tau\sin{\theta(t)}, \\
\hspace{-0pt} \theta(t+\tau) &  = \theta(t). &&
\end{align}
\end{subequations}
\end{itemize}

\begin{figure}[t!]
    \centering
    \subfloat[Straight line motion.]{
\includegraphics[width=0.1738\textwidth]{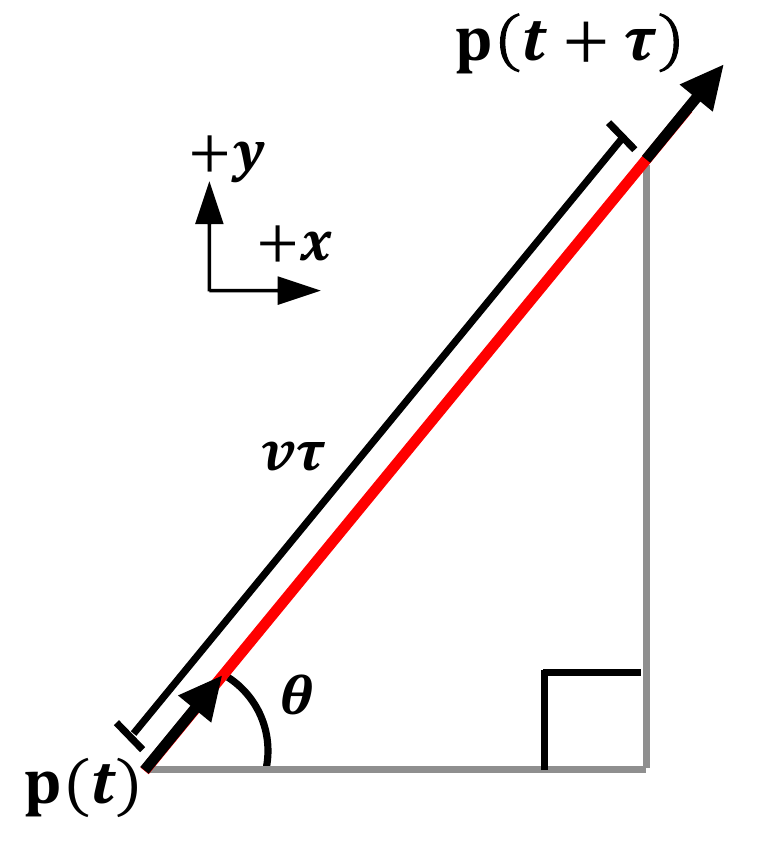}
        \label{fig:primitive_s}
    }\hspace{-10pt}
    \subfloat[Left turn motion.]{
        \includegraphics[width=0.1422\textwidth]{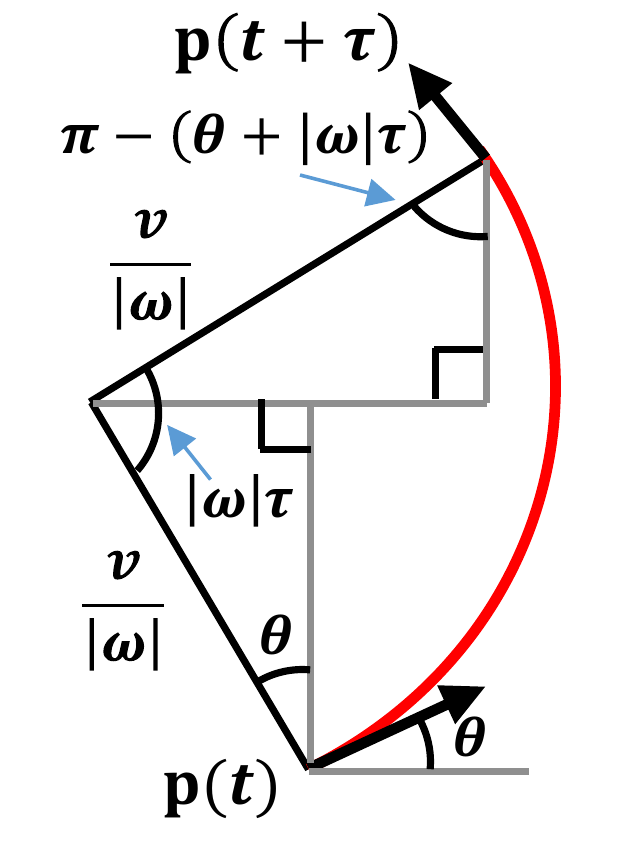}
        \label{fig:primitive_l}
    }\hspace{-10pt}
    \subfloat[Right turn motion.]{
        \includegraphics[width=0.1299\textwidth]{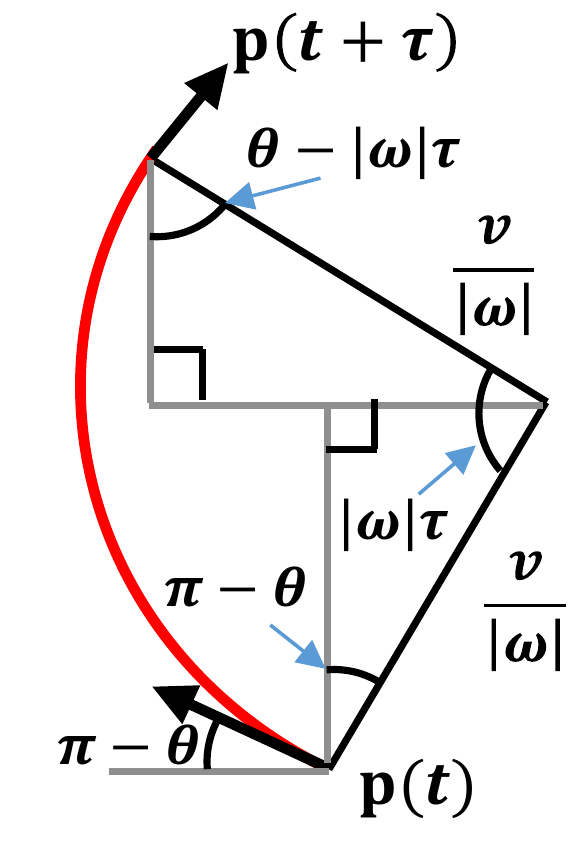}
        \label{fig:primitive_r}
    }
    \caption{Motion primitives of GMDM.} 
    \label{fig:PrimitivesGeometric}\vspace{-6pt}
\end{figure}

\begin{figure*}[t!]
    \centering
    \subfloat[CSC paths.]{
        \includegraphics[width=0.62\textwidth]{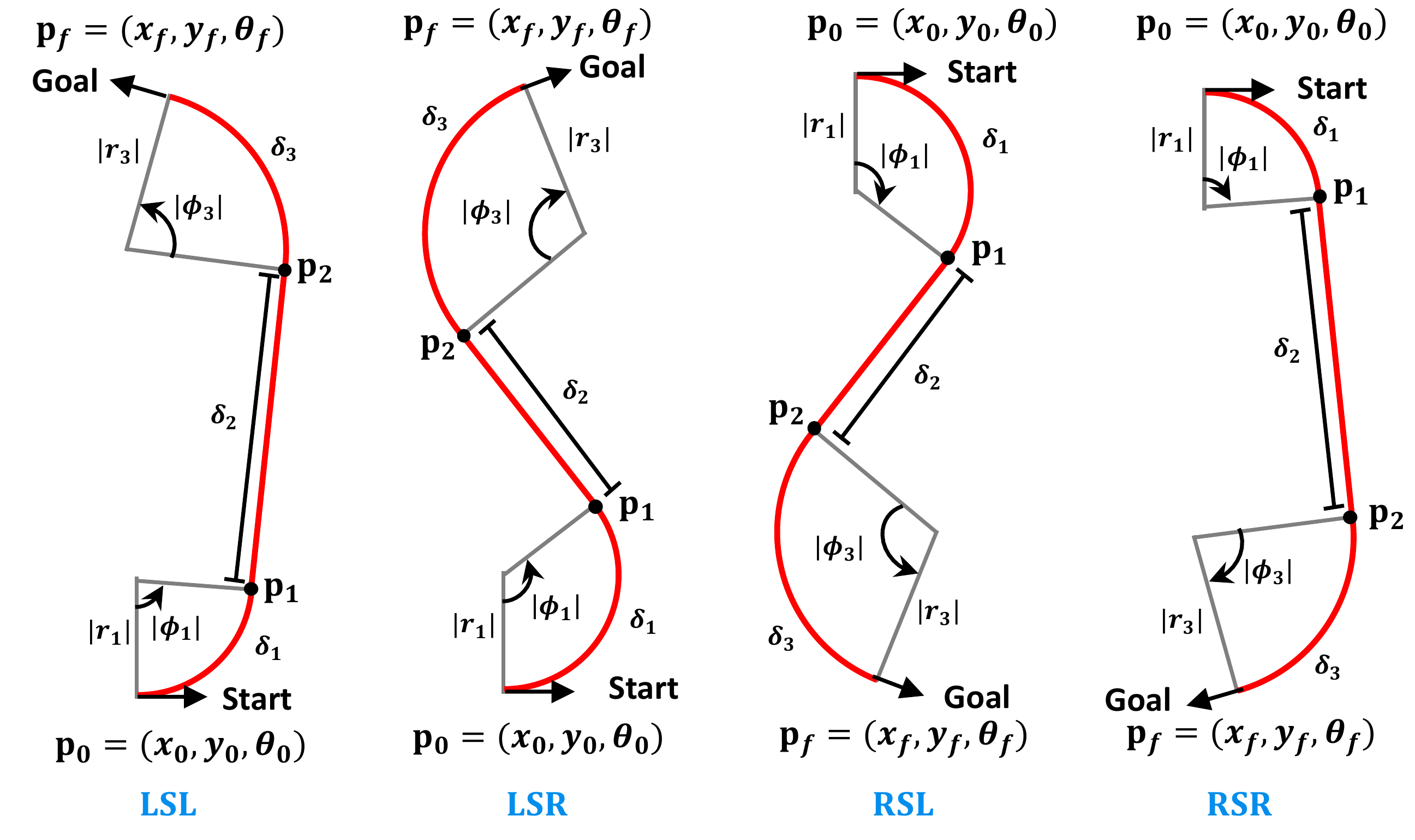}
        \label{fig:gmdmpath_csc}
    }
    \subfloat[CCC paths.]{
        \includegraphics[width=0.25\textwidth]{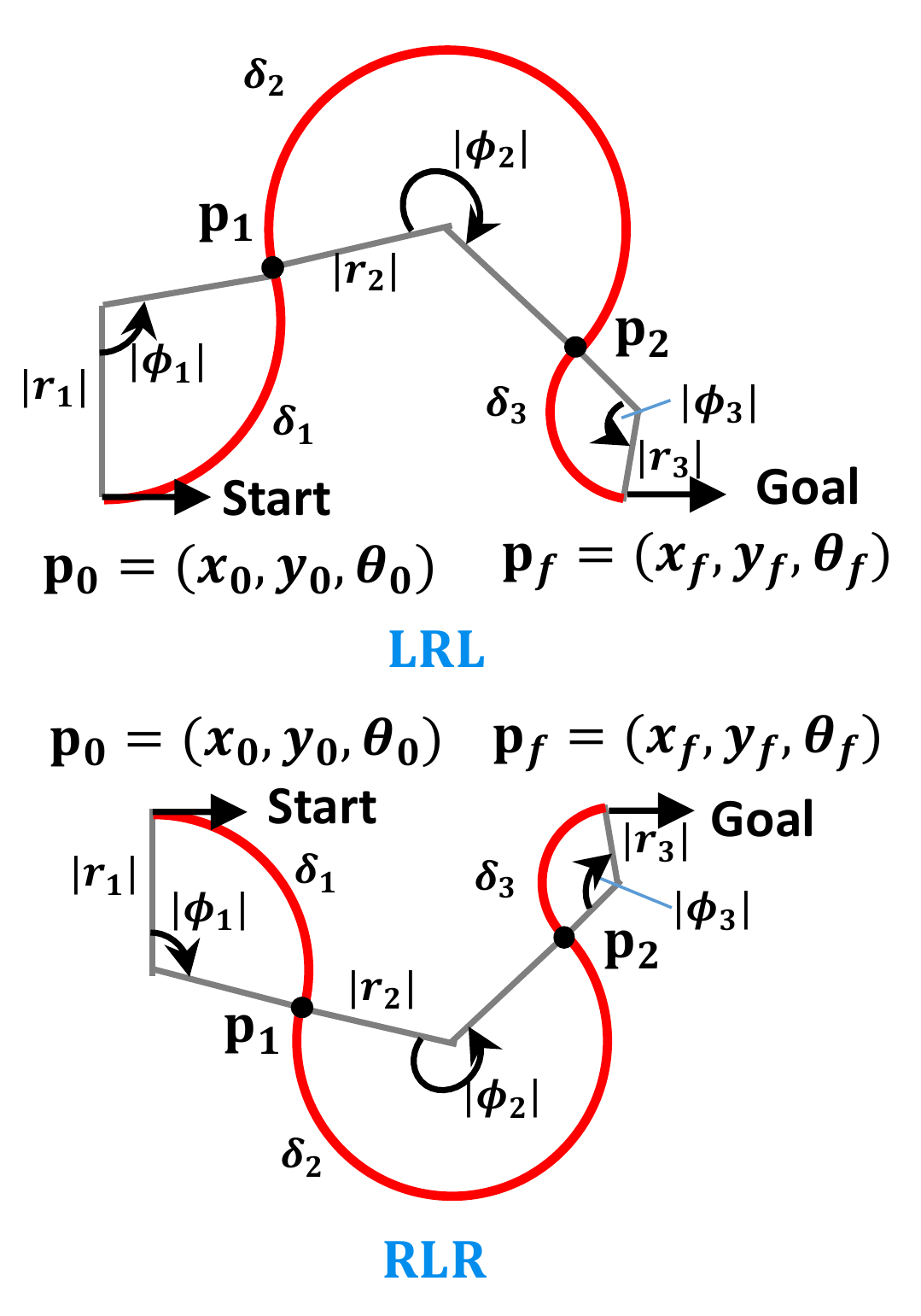}
        \label{fig:gmdmpath_ccc}
    }
    \caption{GMDM paths with different control inputs on each segment.} 
    \label{fig:gmdmpathtypes}\vspace{-12pt}
\end{figure*}

\vspace{-18pt}
\subsection{Motion Model}\vspace{-0pt}
Based on the motion primitives of (\ref{eq:CMotionPrimitive}) and (\ref{eq:SMotionPrimitive}), we now describe GMDM. Let $\textbf{p}_0=(x_0,y_0,\theta_0$) and $\textbf{p}_f=(x_f,y_f,\theta_f)$ denote the start and goal poses, respectively. The objective is to find the GMDM path between these poses. Similar to Dubins, a GMDM path connecting $\textbf{p}_0$ and $\textbf{p}_f$ consists of three path segments labeled by $i=1,2,3$. Let the start and end poses of segment $i$ be denoted by $\textbf{p}_{i-1}=(x_{i-1},y_{i-1},\theta_{i-1})$ and $\textbf{p}_{i}=(x_{i},y_{i},\theta_{i})$, respectively. Note that $\textbf{p}_f=\textbf{p}_3$. Let $\textbf{u}_i=(v_i,\omega_i)\in \mathbb{U}$ be the input to vehicle on segment $i$, applied for a time duration $\tau_i \in \{\mathbb{R}^+\cup {0}\}$. Then, $\textbf{p}_{i-1}$ evolves on segment $i$ to $\textbf{p}_{i}$ s.t. $\textbf{p}_{i}= \textup{M}_{\textbf{u}_i,\tau_i}(\textbf{p}_{i-1})$. Thus, each segment $i$ follows the motion primitive of either a turn $\textup{C}$ or a straight line $\textup{S}$.

This leads to two fundamental classes of GMDM paths: CSC (i.e., LSL, LSR, RSL and RSR) and CCC (i.e., LRL and RLR). For notation simplicity, we avoid the subscripts on the motion primitives and use them only  when needed. A CSC path first turns (either left or right), then goes straight, and finally turns (either left or right) before reaching the final pose. Similarly, a CCC path makes three turning maneuvers before reaching the final pose, where $\sign\omega_1\neq \sign\omega_2$ and $\sign\omega_2\neq \sign\omega_3$ (i.e., consecutive turning motions must be in different directions). Note: the other complex path types (e.g., paths with more than three segments) 
are not included in GMDM and are beyond the scope of this work.

Furthermore, while GMDM is built on the six Dubins path types, it allows a different speed on each path segment. Thus, the total number of GMDM path types depend on the number speeds allowed for each path segment. We show later that GMDM provides full reachability of the configuration space for any speed selections. Figure~\ref{fig:gmdmpathtypes} shows the GMDM paths with a different control input on each segment.

\vspace{-5pt}
\subsection{Model Analysis}\vspace{-0pt}
For GMDM analysis, we discuss the forward and inverse problems. First, we define the following parameters.
\begin{subequations}
    \label{eq:param1}
    \begin{align}
    \begin{split}
    r_{i}\triangleq\frac{v_i}{\omega_i}, \  i=1,2,3,
    \end{split}\\
\begin{split}
    r_{ij}\triangleq r_i-r_j, \  i,j=1,2,3,
    \end{split}\\
    \begin{split} 
    \delta_i \triangleq v_i\tau_i, \  i=1,2,3,
    \end{split}\\
    \begin{split}
    \phi_i \triangleq \omega_i\tau_i, \  i=1,2,3,
    \end{split}\\
    \begin{split}
        \theta_{ij}\triangleq \bmod(\theta_i-\theta_j,2\pi\sign{(\omega_i)}), \ i,j=0,1,2,3,
    \end{split}
    \end{align}
   \end{subequations} 
{where $\bmod(a,m)\triangleq a-m\lfloor \frac{a}{m}\rfloor \ \forall a,m\in\Real$ \cite{knuth1997_mod}.} Note that $|r_i|$, $|\phi_i|$ and $\delta_i$, $i=1,2,3$,  represent the turning radius, rotation and length of a path segment $i$, respectively. The rotations satisfy the constraint $|\phi_i|< 2\pi$.

\vspace{6pt}
\subsubsection{The forward problem analysis}
\begin{defn}[Forward problem]
The forward problem aims to find the final pose $\textbf{p}_f$ of a GMDM path given the start pose $\textbf{p}_0$, the control inputs $\textbf{u}_i$ and time durations $\tau_i$ of its segments $i=1,2,3$. 
\end{defn}
The final pose of a GMDM path is obtained by applying the motion primitive for each segment consecutively as follows: 
\begin{equation}\label{GMDMposeevolution}
\textbf{p}_f= \textup{M}_{u_3,\tau_3}( \textup{M}_{u_2,\tau_2}( \textup{M}_{u_1,\tau_1}(\textbf{p}_0))).
\end{equation}

\vspace{3pt}
\begin{prop}[CSC forward]\label{propcscforward}
Given $\textbf{p}_0$ and $(\textbf{u}_i,\tau_i)$, $i=1,2,3$, the final pose $\textbf{p}_f$ of a CSC path is given as
\begin{subequations}\label{eqcscforward}
    \begin{align}
    \begin{split}\label{propxfcsc}
        x_f ={}& x_{0}-r_{1}\sin\theta_0-r_{31}\sin(\theta_0+\phi_1) + \\
        & \delta_2\cos(\theta_0+\phi_1) + r_{3}\sin(\theta_0+\phi_1+\phi_3),
    \end{split}\\
    \begin{split}\label{propyfcsc}
        y_f ={}&  y_{0}+r_{1}\cos\theta_0+ r_{31}\cos(\theta_0+\phi_1) + \\
        & \delta_{2}\sin(\theta_0+\phi_1) -r_{3}\cos(\theta_0+\phi_1+\phi_3),
    \end{split}\\
    \begin{split}\label{propthetafcsc}
        \theta_f ={}& \bmod(\theta_0+\phi_1+\phi_3,2\pi).
    \end{split}
    \end{align}
    \end{subequations}
\end{prop}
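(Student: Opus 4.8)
The plan is to prove Proposition~\ref{propcscforward} by direct composition of the three motion primitives given in \eqref{GMDMposeevolution}, tracking the intermediate poses $\textbf{p}_1$ and $\textbf{p}_2$ explicitly and then simplifying the result using the parameter definitions in \eqref{eq:param1}. Since a CSC path has a turn on segment~1, a straight line on segment~2, and a turn on segment~3, I would apply $\textup{C}_{\textbf{u}_1,\tau_1}$ first (using \eqref{eq:CMotionPrimitive}), then $\textup{S}_{\textbf{u}_2,\tau_2}$ (using \eqref{eq:SMotionPrimitive}), then $\textup{C}_{\textbf{u}_3,\tau_3}$ (using \eqref{eq:CMotionPrimitive} again).

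First I would compute $\textbf{p}_1 = \textup{C}_{\textbf{u}_1,\tau_1}(\textbf{p}_0)$: this gives $\theta_1 = \theta_0 + \phi_1$, $x_1 = x_0 - r_1(\sin\theta_0 - \sin\theta_1)$, and $y_1 = y_0 + r_1(\cos\theta_0 - \cos\theta_1)$, where I have substituted $v_1/\omega_1 = r_1$ and $\omega_1\tau_1 = \phi_1$. Next, since segment~2 is straight, $\textbf{p}_2 = \textup{S}_{\textbf{u}_2,\tau_2}(\textbf{p}_1)$ yields $\theta_2 = \theta_1$, $x_2 = x_1 + \delta_2\cos\theta_1$, $y_2 = y_1 + \delta_2\sin\theta_1$, using $v_2\tau_2 = \delta_2$. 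Finally, applying the turn on segment~3, $\textbf{p}_f = \textbf{p}_3 = \textup{C}_{\textbf{u}_3,\tau_3}(\textbf{p}_2)$ gives $\theta_f = \theta_2 + \phi_3 = \theta_0 + \phi_1 + \phi_3$, $x_f = x_2 - r_3(\sin\theta_2 - \sin(\theta_2+\phi_3))$, and $y_f = y_2 + r_3(\cos\theta_2 - \cos(\theta_2+\phi_3))$.

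Then I would back-substitute $\theta_1 = \theta_0 + \phi_1$ everywhere and collect terms. For $x_f$: the $x_0$ term carries through; the $\sin\theta_0$ term appears only from segment~1 with coefficient $-r_1$; the $\sin(\theta_0+\phi_1)$ terms come from segment~1 ($+r_1$) and segment~3 ($-r_3$), combining to $(r_1 - r_3)\sin(\theta_0+\phi_1) = -r_{31}\sin(\theta_0+\phi_1)$; the $\delta_2\cos(\theta_0+\phi_1)$ term comes from segment~2; and the $\sin(\theta_0+\phi_1+\phi_3)$ term comes from segment~3 with coefficient $+r_3$. This matches \eqref{propxfcsc} exactly, and the analogous bookkeeping for $y_f$ matches \eqref{propyfcsc}. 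For $\theta_f$, the raw composition gives $\theta_0 + \phi_1 + \phi_3$; reducing modulo $2\pi$ (which leaves the pose in $\textup{SE}(2)$ unchanged) gives \eqref{propthetafcsc}.

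The computation is entirely routine trigonometric bookkeeping; there is no genuine obstacle, only the need to be careful about signs when combining the $\sin(\theta_0+\phi_1)$ and $\cos(\theta_0+\phi_1)$ contributions from segments~1 and~3 and to confirm that $r_1 - r_3 = -r_{31}$ under the convention $r_{ij} \triangleq r_i - r_j$. I would present the proof compactly by writing out the three intermediate poses and then stating the collected result, relegating the term-by-term collection to a single displayed line or two rather than grinding through it in prose.
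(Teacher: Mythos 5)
Your proposal is correct and follows essentially the same route as the paper's own proof: compose $\textup{C}_{\textbf{u}_1,\tau_1}$, $\textup{S}_{\textbf{u}_2,\tau_2}$, and $\textup{C}_{\textbf{u}_3,\tau_3}$ in sequence, track the intermediate poses $\textbf{p}_1$ and $\textbf{p}_2$, and collect the $\sin(\theta_0+\phi_1)$ and $\cos(\theta_0+\phi_1)$ contributions from segments~1 and~3 into the $r_{31}=r_3-r_1$ coefficient. Your sign bookkeeping ($r_1-r_3=-r_{31}$ in $x_f$ and $r_3-r_1=r_{31}$ in $y_f$) matches the paper exactly.
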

\begin{proof} See Appendix \ref{proofpropcscforward}.
\end{proof}

\begin{prop}[CCC forward]\label{propcccforward}
Given $\textbf{p}_0$ and $(\textbf{u}_i,\tau_i)$, $i=1,2,3$, the final pose $\textbf{p}_f$ of a  CCC path is given as  
\begin{subequations}\label{eqcccforward}
    \begin{align}
    \begin{split} \label{propxfccc}
        x_f ={}& x_0-r_{1}\sin\theta_0+r_{12}\sin(\theta_0+\phi_1)
        + \\ & r_{23}\sin(\theta_0+\phi_1+\phi_2) 
         +r_{3}\sin{(\theta_0+\phi_1+\phi_2+\phi_3)},
    \end{split}\\
    \begin{split} \label{propyfccc}
        y_f ={}&  y_0+r_{1}\cos\theta_0-r_{12}\cos(\theta_0+\phi_1)- \\ &
r_{23}\cos(\theta_0+\phi_1+\phi_2) - r_{3}\cos{(\theta_0+\phi_1+\phi_2+\phi_3)},
    \end{split}\\
    \begin{split} \label{propthetafccc}
        \theta_f ={}&  \bmod(\theta_0+\phi_1+\phi_2+\phi_3,2\pi).
    \end{split}
    \end{align}
    \end{subequations}
\end{prop}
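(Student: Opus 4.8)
The plan is to prove Proposition~\ref{propcccforward} by direct composition of the three motion primitives, exactly as indicated by \eqref{GMDMposeevolution}, and then simplify using trigonometric identities. Since a CCC path consists of three consecutive turning maneuvers, each segment uses the turning motion primitive $\textup{C}$ of \eqref{eq:CMotionPrimitive}. First I would apply $\textup{C}_{\textbf{u}_1,\tau_1}$ to $\textbf{p}_0$ to obtain $\textbf{p}_1$; by \eqref{eq:CMotionPrimitive} and the definitions $r_1 = v_1/\omega_1$, $\phi_1 = \omega_1\tau_1$ in \eqref{eq:param1}, this gives $x_1 = x_0 - r_1(\sin\theta_0 - \sin(\theta_0+\phi_1))$, $y_1 = y_0 + r_1(\cos\theta_0 - \cos(\theta_0+\phi_1))$, and $\theta_1 = \theta_0+\phi_1$. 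Then I would apply $\textup{C}_{\textbf{u}_2,\tau_2}$ to $\textbf{p}_1$, substituting $\theta_1 = \theta_0+\phi_1$, to get $\textbf{p}_2$ in terms of $r_2$, $\phi_2$, and finally apply $\textup{C}_{\textbf{u}_3,\tau_3}$ to $\textbf{p}_2$ to get $\textbf{p}_f = \textbf{p}_3$.

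The key algebraic step is the telescoping/regrouping of the resulting sums. After the third composition, $x_f$ will be a sum of terms of the form $\pm r_k \sin(\cdot)$ where the arguments are the partial sums $\theta_0$, $\theta_0+\phi_1$, $\theta_0+\phi_1+\phi_2$, and $\theta_0+\phi_1+\phi_2+\phi_3$. Specifically, each primitive $\textup{C}_{\textbf{u}_k,\tau_k}$ contributes $-r_k\sin(\text{entry angle}) + r_k\sin(\text{exit angle})$ to the $x$-coordinate. Collecting the coefficient of $\sin(\theta_0+\phi_1)$, which receives $+r_1$ from segment~1's exit and $-r_2$ from segment~2's entry, yields $r_{12} = r_1 - r_2$; collecting the coefficient of $\sin(\theta_0+\phi_1+\phi_2)$ yields $r_{23} = r_2 - r_3$; the coefficient of $\sin\theta_0$ is simply $-r_1$ and that of $\sin(\theta_0+\phi_1+\phi_2+\phi_3)$ is $+r_3$. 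This reproduces \eqref{propxfccc} exactly, and the analogous collection for the $y$-coordinate (with $\cos$ replacing $\sin$ and signs swapped per \eqref{eq:CMotionPrimitive}) gives \eqref{propyfccc}. The heading equation \eqref{propthetafccc} follows immediately since each $\textup{C}$ primitive adds $\omega_k\tau_k = \phi_k$ to the heading, and the final $\bmod(\cdot,2\pi)$ simply normalizes the orientation to $[0,2\pi)$.

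The main obstacle — though it is more bookkeeping than conceptual — is keeping the nested angle arguments straight through the two substitutions, since $\theta_1$ and $\theta_2$ themselves expand into sums that must be pushed inside the sines and cosines of the later primitives without error. I would manage this by never expanding $\theta_1$ and $\theta_2$ until the very end, carrying them symbolically, and only then substituting $\theta_1 = \theta_0+\phi_1$ and $\theta_2 = \theta_0+\phi_1+\phi_2$; this makes the telescoping pattern of the radius coefficients transparent and avoids sign mistakes. One should also note that, unlike the CSC case where the middle (straight) segment contributes a $\delta_2\cos\theta_1$ / $\delta_2\sin\theta_1$ translation term, here all three segments are turns, so no $\delta_i$ terms appear — every segment's displacement is captured purely through the $r_k$ and the chord between entry and exit angles. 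Since the paper defers the detailed computation to an appendix, I would present the proof at the level of "compose the three primitives of \eqref{eq:CMotionPrimitive} and collect terms," with the coefficient-collection argument spelled out as above, and refer to the corresponding appendix for the full expansion.
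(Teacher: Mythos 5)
Your proposal is correct and follows essentially the same route as the paper's own proof: sequential composition of the three turning primitives of \eqref{eq:CMotionPrimitive} per \eqref{GMDMposeevolution}, with the $r_{12}$ and $r_{23}$ coefficients arising exactly from the telescoping collection of the $\pm r_k\sin(\cdot)$ (and $\pm r_k\cos(\cdot)$) entry/exit terms that you describe. The only cosmetic difference is that the paper substitutes and regroups at each intermediate pose ($\textbf{p}_1$, then $\textbf{p}_2$, then $\textbf{p}_f$) rather than collecting all coefficients at the end, which changes nothing of substance.
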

\begin{proof} See Appendix \ref{proofpropcccforward}.
\end{proof}

\begin{figure*}
\centering
 \subfloat[CSC paths.]{
   \hspace{0pt} \includegraphics[width=0.60\textwidth]{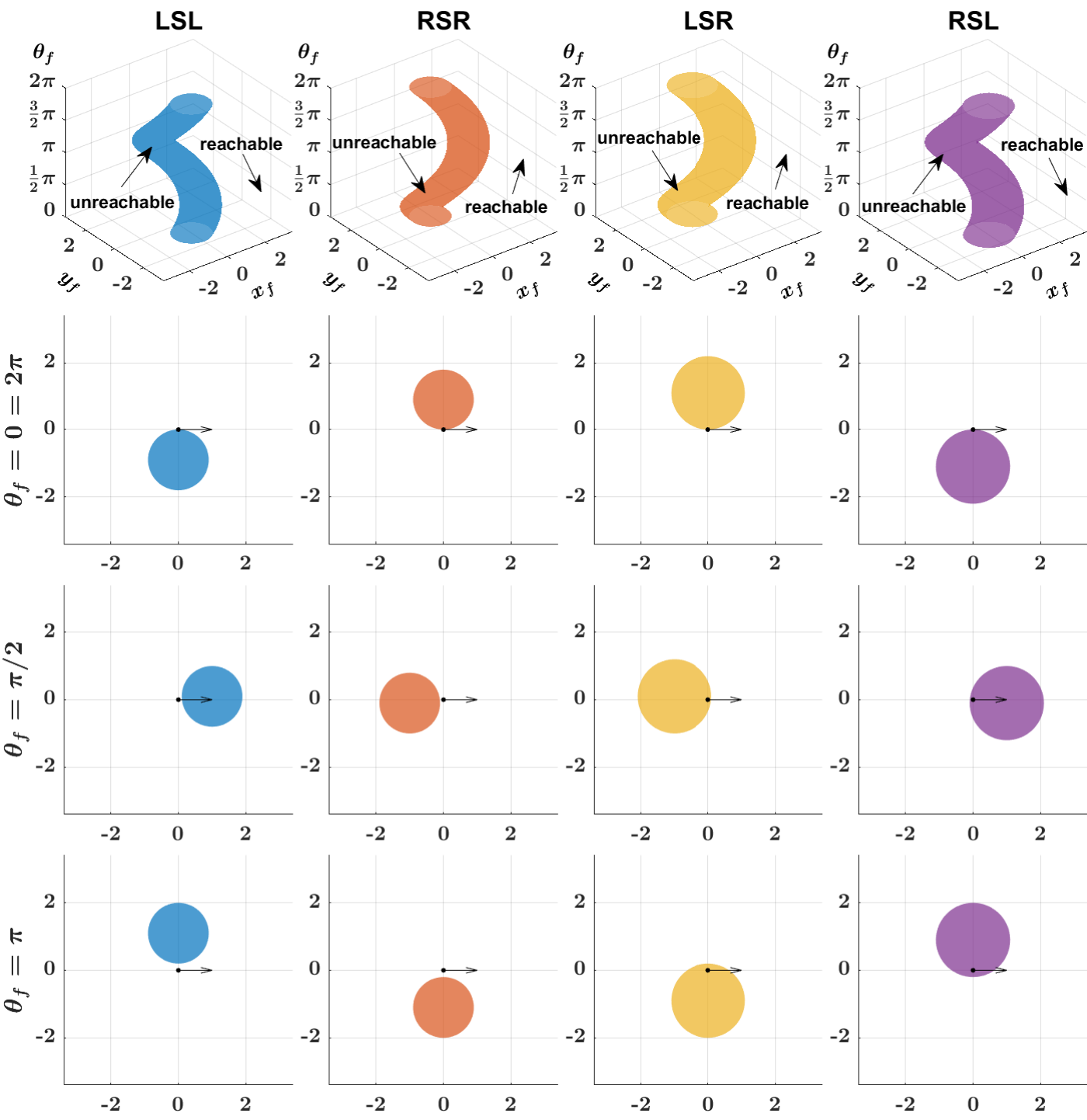}
        \label{fig:ReachCSC} 
    }\hspace{0pt}
 \subfloat[CCC paths.]{
    \includegraphics[width=0.30\textwidth]{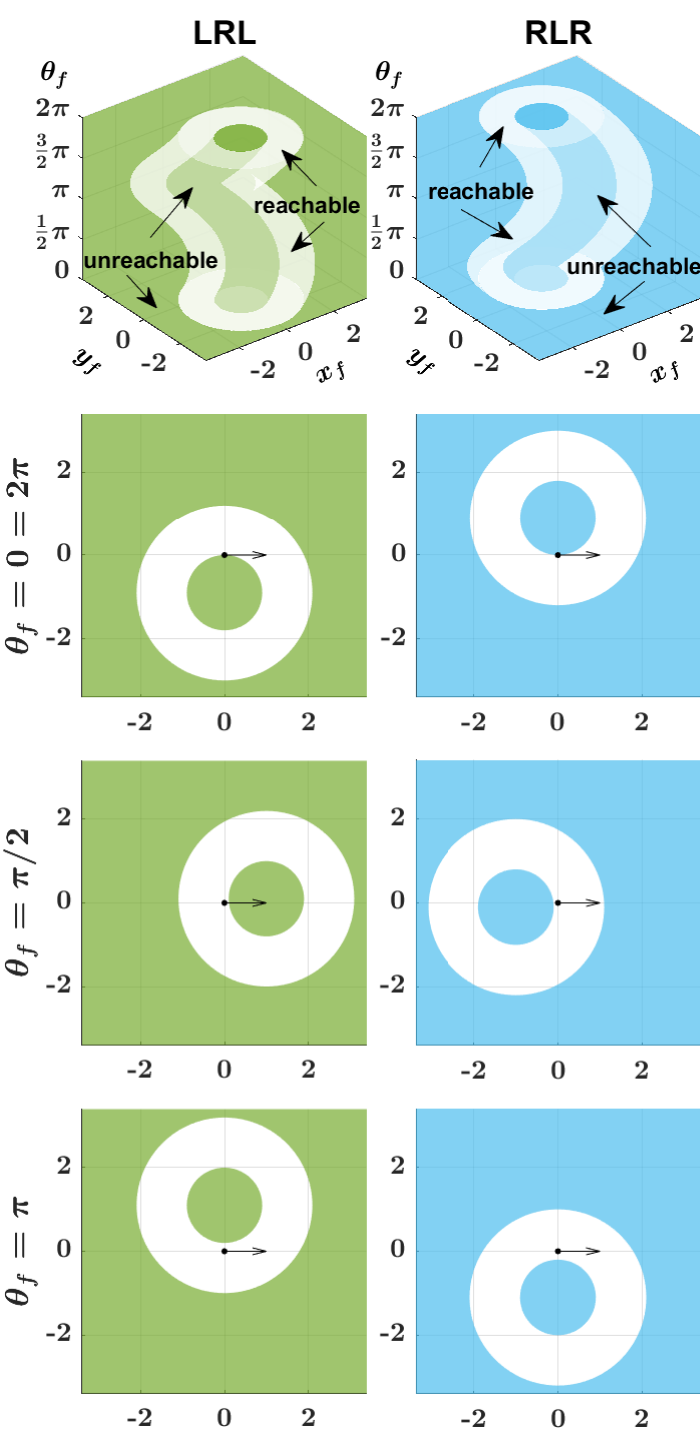}
    \label{fig:ReachCCC} 
}
\caption{Visualization of the reachable (white) and unreachable (colored) sets from $\textbf{p}_0=(0,0,0)$ for the (a) CSC and (b) CCC paths. The top row shows the reachability in the SE(2) space. The bottom three rows show the cross-sections of $x_f$-$y_f$ planes for different $\theta_f$. The plots are drawn for the control inputs $(v_1,v_2,v_3)=(0.1,0.5,1)$ m/s and $|\omega_i|\in\{0,1\}$ rad/s, which correspond to $(|r_1|,|r_3|)=(0.1,1.0)$ m for all paths and $|r_2|=0.5$ m for the CCC paths.}
\label{fig:ReachGMDM}\vspace{-6pt}
\end{figure*}

\vspace{3pt}
\subsubsection{The inverse problem analysis}
\begin{defn}[Inverse problem]
The inverse problem aims to find the time durations $\tau_i$ of GMDM path segments $i=1,2,3$, given the start and goal poses ($\textbf{p}_0$, $\textbf{p}_f$) and the control inputs $\textbf{u}_i$. The total travel time of the GMDM path is given as $\mathscr{T}=\sum^3_{i=1}\tau_i$.
\end{defn}

Let us define the following parameters:
\begin{subequations}
    \label{eq:AB}
    \begin{align}
    \begin{split}
    a \triangleq x_f-x_0+r_{1}\sin\theta_0-r_{3}\sin\theta_f,
    \end{split}\\
    \begin{split}
    b \triangleq y_f-y_0-r_{1}\cos\theta_0+ r_{3}\cos\theta_f,
    \end{split}
    \end{align}
   \end{subequations}  
which are known for the inverse problem.

\vspace{0pt}
\begin{prop}[CSC inverse]\label{propcscinverse}
Given $\textbf{p}_0$, $\textbf{p}_f$ and $\textbf{u}_i$, the time durations  $\tau_i$, $i=1,2,3$, of CSC path segments are given as
     \begin{equation} \label{eq:CSCsolns}
         \tau_1= \frac{\theta_{10}}{\omega_1},
         \tau_2= \frac{\delta_2}{v_2}, \ \textrm {and} \
          \tau_3= \frac{\theta_{31}}{\omega_3},
     \end{equation}
 where
    \begin{equation}
        \label{eq:CSCsolntheta1prop}
        \theta_1=\arcsin\Big(\frac{-r_{31}}{\sqrt{a^2+b^2}}\Big)-\atantwo(-b,a),
\end{equation} 
and 
\begin{equation} \label{eq:CSCsolndelta2prop}
        \delta_2 = \sqrt{a^2+b^2-r^2_{31}}.
    \end{equation}
\begin{proof} See Appendix \ref{proofpropcscinverse}.
\end{proof}      
\end{prop}

\vspace{-6pt}
\begin{prop}[CCC inverse]\label{propcccinverse}
Given $\textbf{p}_0$, $\textbf{p}_f$ and $\textbf{u}_i$, the time durations  $\tau_i$, $i=1,2,3$, of CCC path segments are given as
     \begin{equation}\label{eq:CCCsolns}
         \tau_1= \frac{\theta_{10}} {\omega_1}, 
         \tau_2=\frac{\theta_{21}}{\omega_2}, 
         \tau_3= \frac{\theta_{32}}{\omega_3}, 
     \end{equation}
where 
\begin{equation} \label{eq:CCCsolntheta1prop}
        \theta_1=\pi-\arcsin\Big(\frac{a^2+b^2+r_{12}^2-r_{23}^2}{2r_{12}\sqrt{a^2+b^2}}\Big)-\atantwo(-b,a),
    \end{equation}
and
\begin{equation}\label{eq:CCCsolntheta2prop}
        \theta_2=\pi-\arcsin\Big(\frac{a^2+b^2+r_{23}^2-r_{12}^2}{2r_{23}\sqrt{a^2+b^2}}\Big)-\atantwo(-b,a).
    \end{equation}
\end{prop}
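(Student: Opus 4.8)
The plan is to invert the CCC forward equations of Proposition~\ref{propcccforward} by the same geometric/algebraic route used for the CSC case. First I would note that the time durations in \eqref{eq:CCCsolns} follow immediately from the definitions $\phi_i=\omega_i\tau_i$ and $\theta_{ij}=\bmod(\theta_i-\theta_j,2\pi\sign\omega_i)$ once the intermediate headings $\theta_1,\theta_2$ are known: since $\theta_0$ is given and $\theta_3=\theta_f$ is given, and since $\theta_f=\bmod(\theta_0+\phi_1+\phi_2+\phi_3,2\pi)$ fixes $\phi_3$ once $\phi_1,\phi_2$ are pinned down, the whole problem reduces to solving for the two unknowns $\theta_1$ and $\theta_2$. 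So the real content is the derivation of \eqref{eq:CCCsolntheta1prop} and \eqref{eq:CCCsolntheta2prop}.

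For that, I would start from \eqref{propxfccc}--\eqref{propyfccc}, move the known boundary terms to the left, and use the parameters $a,b$ from \eqref{eq:AB} (which already absorb $x_f-x_0+r_1\sin\theta_0$ and $y_f-y_0-r_1\cos\theta_0$; note the $r_3$ terms in $a,b$ must be reconciled with the $r_3\sin(\theta_0+\phi_1+\phi_2+\phi_3)=r_3\sin\theta_f$ term, which is exactly why $a,b$ are written with $\theta_f$). After this reduction the two equations take the form
\begin{align*}
a &= r_{12}\sin\theta_1 + r_{23}\sin\theta_2,\\
b &= -r_{12}\cos\theta_1 - r_{23}\cos\theta_2,
\end{align*}
using $\theta_0+\phi_1=\theta_1$ and $\theta_0+\phi_1+\phi_2=\theta_2$. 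This is the standard ``sum of two rotated vectors of fixed lengths'' situation: $(a,b)$ is the vector sum of one vector of length $|r_{12}|$ at angle $\theta_1$ (rotated by $\pi/2$) and one of length $|r_{23}|$ at angle $\theta_2$. Squaring and adding eliminates the cross terms up to a single $\cos(\theta_1-\theta_2)$, giving $a^2+b^2 = r_{12}^2 + r_{23}^2 + 2r_{12}r_{23}\cos(\theta_1-\theta_2)$, i.e. the law of cosines for the triangle with sides $|r_{12}|$, $|r_{23}|$, $\sqrt{a^2+b^2}$. Then I would get $\theta_1$ by isolating the $r_{23}\sin\theta_2$, $r_{23}\cos\theta_2$ pair, squaring, and solving the resulting linear-in-$(\sin\theta_1,\cos\theta_1)$ relation — equivalently, projecting out the $\theta_2$ component — which yields $a^2+b^2+r_{12}^2-r_{23}^2 = 2r_{12}(a\sin\theta_1 - b\cos\theta_1) = 2r_{12}\sqrt{a^2+b^2}\,\sin(\theta_1+\atantwo(-b,a))$, and hence \eqref{eq:CCCsolntheta1prop} after taking $\arcsin$ and choosing the branch $\pi - \arcsin(\cdot)$. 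The expression for $\theta_2$ in \eqref{eq:CCCsolntheta2prop} is obtained symmetrically by swapping the roles of the two turning arcs (interchanging $r_{12}\leftrightarrow r_{23}$).

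The main obstacle I anticipate is not the algebra but the branch selection and sign bookkeeping: the $\arcsin$ inversion is two-valued, and one must argue that the $\pi-\arcsin(\cdot)$ branch (rather than $\arcsin(\cdot)$ itself) is the one consistent with the CCC geometry and the sign conventions on $\omega_i$ (recall $\sign\omega_1\neq\sign\omega_2\neq\sign\omega_3$, so $r_{12}$ and $r_{23}$ have definite signs). I would justify the branch by checking the configuration where the path degenerates to the classical Dubins CCC solution, or by a continuity/orientation argument on the triangle, and I would flag that — as in Dubins — a real solution exists only when the argument of each $\arcsin$ lies in $[-1,1]$, which is precisely the reachability condition examined separately in Section~\ref{sec:MultispeedDubinsReachability}. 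The remaining step, recovering $\tau_1=\theta_{10}/\omega_1$, $\tau_2=\theta_{21}/\omega_2$, $\tau_3=\theta_{32}/\omega_3$, is then a direct substitution using $\theta_{ij}=\bmod(\theta_i-\theta_j,2\pi\sign\omega_i)$ to guarantee $\tau_i\geq 0$ and $|\phi_i|<2\pi$.
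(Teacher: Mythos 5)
Your proposal is correct and follows essentially the same route as the paper: reduce to the two-equation system in $(\theta_1,\theta_2)$, isolate and square to eliminate one angle, and apply the linear-sinusoid identity with the $\pi-\arcsin(\cdot)$ branch. The only difference is cosmetic: the paper pins that branch by invoking the known property $|\phi_2|\in(\pi,2\pi)$ of CCC paths (citing the Dubins literature) rather than your degenerate-case/continuity argument, but both resolve the same ambiguity.
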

\begin{proof} See Appendix \ref{proofpropcccinverse}.
\end{proof}

\begin{figure*}[t!]
    \centering
    \subfloat{
    \centering
        \includegraphics[width=0.32\textwidth]{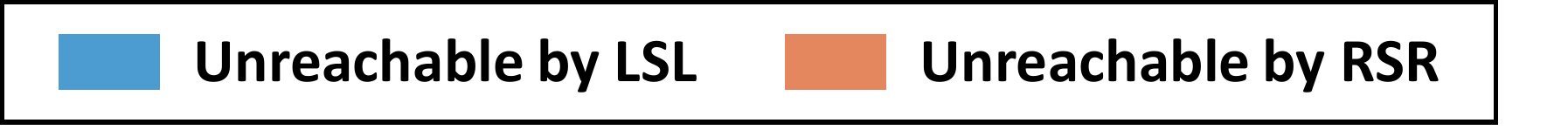}
    }
    \vspace{-5pt}\\
    \setcounter{subfigure}{0}
    \centering
    \subfloat[Visualization in SE(2) space.]{
        \includegraphics[width=0.3\textwidth]{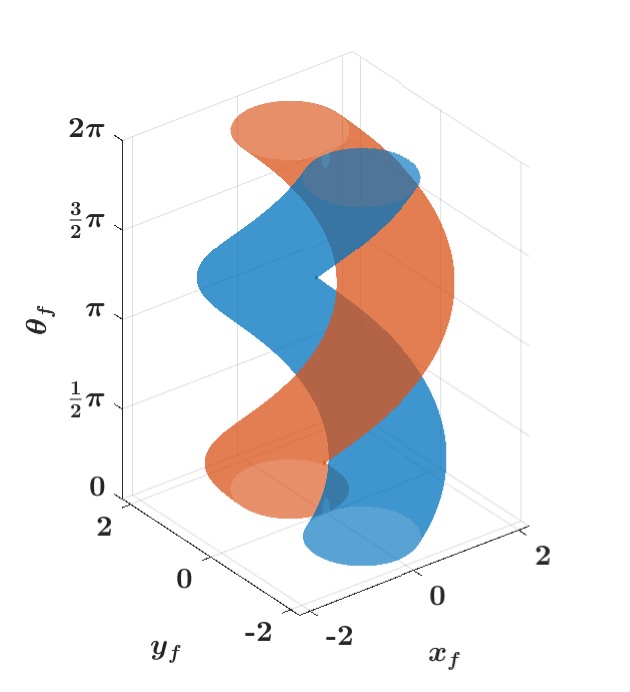}
        \label{fig:reachthma}
    }
    \subfloat[Visualization of the $x_f$-$y_f$ planes for different $\theta_f$.]{
        \includegraphics[width=0.6\textwidth]{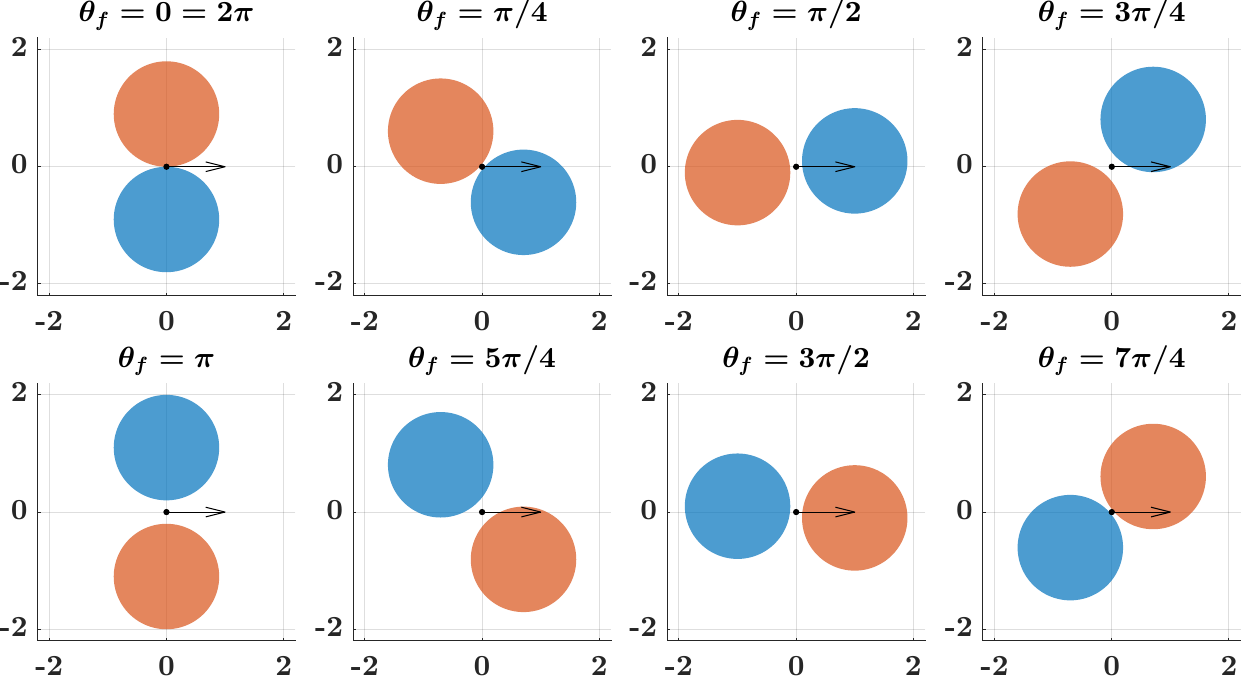}
        \label{fig:reachthmb}
    }
    \caption{Visualization of the unreachable regions of LSL and RSR path types. Since these regions are disjoint, full reachability is achieved by GMDM as per   Theorem~\ref{Th:GMDM_full_reachability}. The plots are drawn for $\textbf{p}_0=(0,0,0)$ and $(|r_1|, |r_3|) = (0.1,1.0)$. }
    \label{fig:ReachTheorem}\vspace{-6pt}
\end{figure*}

\begin{cor}
The  \textup{GMDM} path types reduce to the Dubins set \cite{Shkel2001_DubinsDerivation} when 
\begin{itemize}
\item [1.] $|r_1|=|r_3|$ for CSC.
\item [2.] $|r_1|=|r_2|=|r_3|$ for CCC.
\end{itemize}
\end{cor}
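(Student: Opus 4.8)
The plan is to verify the corollary by direct substitution of the stated constraints into the forward-problem formulas of Propositions~\ref{propcscforward} and \ref{propcccforward} (or equivalently the inverse-problem formulas of Propositions~\ref{propcscinverse} and \ref{propcccinverse}), and then to match the resulting expressions against the standard Dubins equations of~\cite{Shkel2001_DubinsDerivation}. The key observation is that the only place where the multi-speed generalization enters the geometry is through the \emph{difference} parameters $r_{ij} = r_i - r_j$: when all turning radii on the turn segments have equal magnitude (and, for CSC, the straight segment is speed-free in the sense of only contributing a translation), these differences collapse and the formulas should reduce to their single-radius Dubins counterparts.

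First I would handle the CSC case. Setting $|r_1| = |r_3| = \rho$ makes $r_{31} = r_3 - r_1$ equal to $0$ when $\sign\omega_1 = \sign\omega_3$ (the LSL/RSR types) and equal to $\pm 2\rho$ when $\sign\omega_1 \neq \sign\omega_3$ (the LSR/RSL types) — in either case it takes exactly the value appearing in the classical Dubins derivation. In Proposition~\ref{propcscforward}, the term $-r_{31}\sin(\theta_0+\phi_1)$ in $x_f$ (and its counterpart in $y_f$) then either vanishes or becomes the familiar offset, and the remaining terms $-r_1\sin\theta_0$, $\delta_2\cos(\theta_0+\phi_1)$, $r_3\sin(\theta_0+\phi_1+\phi_3)$ are precisely the three-segment Dubins displacement with common radius $\rho$. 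From the inverse side, equations~\eqref{eq:CSCsolntheta1prop} and~\eqref{eq:CSCsolndelta2prop} with $r_{31}=0$ give $\theta_1 = -\atantwo(-b,a)$ and $\delta_2 = \sqrt{a^2+b^2}$, which are exactly the Dubins LSL/RSR inverse relations; the case $|r_{31}| = 2\rho$ reproduces the Dubins LSR/RSL relations. One must also check that $a,b$ in~\eqref{eq:AB} reduce to the standard Dubins quantities, which is immediate once $r_1 = r_3$ in magnitude with the appropriate signs.

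Next I would treat the CCC case analogously: setting $|r_1| = |r_2| = |r_3| = \rho$ with the alternating-sign constraint $\sign\omega_1 = \sign\omega_3 = -\sign\omega_2$ forces $r_{12} = r_1 - r_2$ and $r_{23} = r_2 - r_3$ both to have magnitude $2\rho$ and $r_{12}^2 = r_{23}^2$. Substituting into~\eqref{eq:CCCsolntheta1prop} and~\eqref{eq:CCCsolntheta2prop}, the term $r_{12}^2 - r_{23}^2$ drops out of both arcsine arguments, which then become $\frac{a^2+b^2}{2 \cdot 2\rho \sqrt{a^2+b^2}} = \frac{\sqrt{a^2+b^2}}{4\rho}$ — exactly the form of the classical LRL/RLR angle condition in~\cite{Shkel2001_DubinsDerivation}. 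The forward formula~\eqref{eqcccforward} collapses similarly: $r_{12}$ and $r_{23}$ become $\pm 2\rho$ and the expression reduces to the standard single-radius CCC arc composition.

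The main obstacle — really the only subtlety, since the algebra is routine — is bookkeeping the \emph{signs} of the radii $r_i = v_i/\omega_i$ correctly across the six path types, because the corollary is stated in terms of magnitudes $|r_i|$ whereas the formulas use signed $r_i$ (and $r_{ij}$). I would be careful to enumerate the sign patterns (L means $\omega_i > 0$ so $r_i > 0$; R means $\omega_i < 0$ so $r_i < 0$) for each of LSL, RSR, LSR, RSL, LRL, RLR, confirm in each case that the signed differences $r_{31}$ (CSC) and $r_{12}, r_{23}$ (CCC) take exactly the values that appear in the corresponding Dubins path formula, and then invoke the forward-problem propositions to conclude that the GMDM path coincides with the Dubins path of the same type. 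Since the Dubins model uses a single radius $\rho = v/\omega_{\max}$ for all turn segments, the constraints $|r_1|=|r_3|$ (CSC) and $|r_1|=|r_2|=|r_3|$ (CCC) are exactly what is needed, and no further hypotheses are required.
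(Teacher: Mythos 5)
Your proposal is correct and takes essentially the same route as the paper: the paper's proof is a one-line "plug conditions 1 and 2 into the forward equations \eqref{eqcscforward} and \eqref{eqcccforward}," which is precisely your substitution argument, and your additional sign bookkeeping ($r_{31}=0$ for LSL/RSR versus $|r_{31}|=2\rho$ for LSR/RSL, and $r_{12}^2=r_{23}^2$ for LRL/RLR) correctly fills in the details the paper leaves implicit.
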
 
\begin{proof}
The Dubins set \cite{Shkel2001_DubinsDerivation} follows by plugging conditions 1 and 2 in (\ref{eqcscforward}) and (\ref{eqcccforward}), respectively.
\end{proof}

\subsection{Computation of Optimal GMDM Path}

The above description first introduced GMDM and then provided the solutions for the forward and inverse problems for a GMDM path given the control inputs $\textbf{u}_i$ for path segments. The optimal control inputs $\textbf{u}_i^*$ for producing a GMDM path are selected by a higher-level planner. For practical implementation, the optimal control inputs are determined as follows. Given the start and goal poses, first a discrete set of control inputs are used to generate different GMDM path types. 
For example, consider $v_i\in\{v_{min},v_{max}\}$ and $|\omega_i|\in\{0, \omega_{max}\}$, then the $6$ Dubins path types generate a total of $6\times2^3\times1 =48$ different GMDM path types. Then, the inverse solution for each of these path types is determined, and its path quality (i.e., time/time-risk \cite{Song2019_tstar} cost) is evaluated. Due to the closed-form nature of the solutions, they can be computed quickly in real time. Finally, the control $\textbf{u}_i^*$ that provides the best path quality is selected to yield the GMDM path. Section~\ref{sec:results} shows several examples for different applications.

\vspace{-6pt}
\section{Reachability Analysis}
\label{sec:MultispeedDubinsReachability} \vspace{-0pt}
This section presents the reachability analysis of GMDM.

\begin{defn}[Reachability] A pose $\textbf{p}_f$ is said to be reachable { from pose $\textbf{p}_0$} if there exists a GMDM path from $\textbf{p}_0$ to $\textbf{p}_f$. 
\end{defn}

Let $\mathcal{R}_{j}\subseteq \text{SE(2)}$ denote the reachable set of a GMDM path type $j$, $j$ = CSC or CCC, that includes all final poses $\textbf{p}_f$ that are reachable from the start pose $\textbf{p}_0$. Let
\begin{subequations}
    \label{eq:CD}
\begin{align}
c \triangleq x_0-r_{1}\sin\theta_0+r_{3}\sin\theta_f,\\
d \triangleq y_0+r_{1}\cos\theta_0-r_{3}\cos\theta_f.
\end{align}

\end{subequations}

\begin{theorem}[CSC Reachability]\label{Th:GMDM_CSC_reachability}
The reachable set $\mathcal{R}_{CSC}$ of GMDM satisfies the following
\begin{equation}
       \label{eq:Multispeed_CSC_Reach_Solution}
        \begin{split}
         (x_f-c)^2 + (y_f-d)^2  \geq &  r_{31}^2.   
        \end{split}
    \end{equation}
\end{theorem}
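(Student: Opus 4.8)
The plan is to work directly from the CSC forward equations in Proposition~\ref{propcscforward} and eliminate the free parameter $\delta_2$. Recall that a pose $\textbf{p}_f$ lies in $\mathcal{R}_{CSC}$ precisely when there exist admissible $\phi_1, \phi_3, \delta_2$ (with $\delta_2 \geq 0$) realizing $(x_f,y_f,\theta_f)$ through \eqref{eqcscforward}. The angle $\theta_f$ fixes $\phi_1+\phi_3 = \theta_f - \theta_0 \pmod{2\pi}$ via \eqref{propthetafcsc}, so the turn angles are coupled but each is otherwise free; this is why the radii $r_1$ and $r_3$ enter but their individual rotations do not constrain the planar position beyond what $\delta_2$ already does.

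First I would rearrange \eqref{propxfcsc} and \eqref{propyfcsc} to isolate the terms that do not depend on $\delta_2$. Using the definitions \eqref{eq:CD} of $c$ and $d$, and noting $r_3\sin(\theta_0+\phi_1+\phi_3) = r_3\sin\theta_f$ and $r_3\cos(\theta_0+\phi_1+\phi_3)=r_3\cos\theta_f$ by the $\theta_f$ constraint, the forward equations collapse to
\begin{align*}
x_f - c &= -r_{31}\sin(\theta_0+\phi_1) + \delta_2\cos(\theta_0+\phi_1),\\
y_f - d &= r_{31}\cos(\theta_0+\phi_1) + \delta_2\sin(\theta_0+\phi_1).
\end{align*}
Squaring and adding these two equations, the cross terms cancel and the trigonometric identity $\sin^2+\cos^2=1$ gives $(x_f-c)^2 + (y_f-d)^2 = r_{31}^2 + \delta_2^2$. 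Since $\delta_2 = v_2\tau_2$ with $v_2>0$ and $\tau_2\geq 0$, we have $\delta_2^2 \geq 0$, which yields the claimed inequality \eqref{eq:Multispeed_CSC_Reach_Solution}. In fact this shows the inequality is tight: equality holds exactly when $\delta_2=0$, i.e. when the straight segment degenerates.

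I expect the only subtlety — rather than a genuine obstacle — to be bookkeeping around the $\bmod$ operations: one must check that $c$ and $d$ as defined in \eqref{eq:CD} indeed absorb the $r_3\sin\theta_f$, $r_3\cos\theta_f$ contributions consistently with \eqref{propthetafcsc}, and that writing $\theta_0+\phi_1$ as a free angle is legitimate (it is, since $\phi_1$ ranges over an interval of length at least $2\pi$ in aggregate across the reachable configurations, or at minimum over enough of the circle that the argument $\theta_0+\phi_1$ is unconstrained for the purpose of the magnitude identity). Since the statement only claims the ``$\geq$'' direction (a necessary condition for reachability), the parameter-range issues do not even need to be resolved here; they would only matter for the converse. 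I would therefore keep the proof to the three-line algebraic reduction above and defer any reachability-completeness discussion to the surrounding text or the companion theorem on disjointness of the unreachable sets.
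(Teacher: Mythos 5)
Your proof is correct. The algebraic core is in fact identical to the paper's: after subtracting $(c,d)$ and using $\theta_f\equiv\theta_0+\phi_1+\phi_3$, both arguments come down to the pair of equations $-r_{31}\sin\theta_1+\delta_2\cos\theta_1=x_f-c$ and $r_{31}\cos\theta_1+\delta_2\sin\theta_1=y_f-d$ (these are exactly (\ref{eq:CSC_mod}) in Appendix~\ref{proofpropcscinverse}, since $a=x_f-c$ and $b=y_f-d$), which one squares and adds to obtain $(x_f-c)^2+(y_f-d)^2=r_{31}^2+\delta_2^2$. The difference is the logical packaging: you read this identity off the forward equations of Proposition~\ref{propcscforward} and conclude necessity directly from $\delta_2^2\geq 0$, whereas the paper argues through the inverse solution of Proposition~\ref{propcscinverse}, observing that $\theta_1$ and $\delta_2$ are real-valued exactly when the $\arcsin$ argument in (\ref{eq:CSCsolntheta1prop}) lies in $[-1,1]$, i.e.\ $a^2+b^2\geq r_{31}^2$. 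Your version is the cleaner route to the stated ``$\geq$'' direction and sidesteps the mod/branch bookkeeping you flag, and your observation that equality holds precisely when the S segment degenerates is a nice sharpening. What the paper's phrasing buys is the converse: that every pose satisfying the inequality is actually attained by some admissible $(\tau_1,\tau_2,\tau_3)$, which is what Theorem~\ref{Th:GMDM_full_reachability} implicitly relies on when it deduces full reachability from the disjointness of the two unreachable discs. Your explicit deferral of that sufficiency direction is consistent with the literal theorem statement, but it does leave that downstream dependence to be discharged elsewhere.
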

\begin{proof}
See Appendix \ref{prf:GMDM_CSC_reachability}.
\end{proof}
Fig. \ref{fig:ReachCSC} visualizes the reachable set $\mathcal{R}_{CSC}$ derived from (\ref {eq:Multispeed_CSC_Reach_Solution}). The top row of Fig. \ref{fig:ReachCSC} shows $\mathcal{R}_{CSC}$ in SE(2) space with four reachability plots corresponding to LSL, RSR, LSR and RSL path types. As seen in each of these plots, the reachable region of a path type lies outside a tube including its boundary while the region inside this tube is unreachable. The bottom three rows of Fig. \ref{fig:ReachCSC} show the cross sections of $x_f$-$y_f$ planes for different $\theta_f$. As seen from each of these cross-sections, the reachable region of a path type lies outside an open circle with center $(c,d)$ and radius $|r_{31}|$.

\begin{rem} The control inputs $\textbf{u}_i$, $i=1,2,3$, affect the radii $|r_i|$, which in turn affect the center ($c,d$) and radius $|r_{31}|$ of the circular region  in (\ref{eq:Multispeed_CSC_Reach_Solution}). Hence, the reachability of a CSC path type depends on the choice of controls. 
\end{rem}

\begin{cor}
The Dubins LSL and RSR path types each provide full reachability of the SE(2) space.
\end{cor}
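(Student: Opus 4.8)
The statement to prove is that the Dubins LSL and RSR path types each provide full reachability of SE(2). This is a corollary to Theorem~\ref{Th:GMDM_CSC_reachability}, which gives the reachable set of a CSC path type as the exterior (including boundary) of a circle of radius $|r_{31}|$ centered at $(c,d)$ in each $\theta_f$ cross-section. The key observation is that for Dubins paths, $|r_1| = |r_3|$ (constant speed, same turning radius magnitude), which for the LSL and RSR types — where $\sign\omega_1 = \sign\omega_3$ — forces $r_1 = r_3$, hence $r_{31} = r_1 - r_3 = 0$.

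My plan is to invoke Theorem~\ref{Th:GMDM_CSC_reachability} directly with $r_{31} = 0$. First I would note that for the LSL type both turns are left turns so $\omega_1, \omega_3 > 0$, and for a constant-speed Dubins vehicle $v_1 = v_3 = v_{max}$ and $|\omega_1| = |\omega_3| = \omega_{max}$, giving $r_1 = v_1/\omega_1 = v_3/\omega_3 = r_3$; symmetrically for RSR both turns are right turns with $\omega_1,\omega_3 < 0$, again yielding $r_1 = r_3$. Substituting $r_{31}=0$ into \eqref{eq:Multispeed_CSC_Reach_Solution} gives $(x_f-c)^2 + (y_f-d)^2 \geq 0$, which holds for every $(x_f,y_f)$. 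Since $\theta_f$ is unconstrained by the reachability inequality (the forward relation \eqref{propthetafcsc} shows $\theta_f$ can be set to any value by choosing $\phi_1,\phi_3$ appropriately, and the bound does not restrict $\theta_f$), every pose in SE(2) lies in $\mathcal{R}_{\text{LSL}}$ and in $\mathcal{R}_{\text{RSR}}$.

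The one point deserving a little care — and the only real obstacle — is to confirm that the degenerate circle of radius zero does not hide an implicit feasibility constraint elsewhere: namely that the rotation magnitudes satisfy $|\phi_i| < 2\pi$ and the segment length $\delta_2 \geq 0$ remain satisfiable for the chosen $(x_f,y_f,\theta_f)$. Here I would appeal to the CSC inverse solution (Proposition~\ref{propcscinverse}): with $r_{31}=0$, \eqref{eq:CSCsolndelta2prop} gives $\delta_2 = \sqrt{a^2+b^2} \geq 0$ automatically, \eqref{eq:CSCsolntheta1prop} gives a well-defined $\theta_1$ since $\arcsin(0)=0$, and the turn durations $\tau_1 = \theta_{10}/\omega_1$, $\tau_3 = \theta_{31}/\omega_3$ are well-defined and nonnegative because $\theta_{ij}$ is taken modulo $2\pi\sign(\omega_i)$, keeping $|\phi_i| < 2\pi$. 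Hence the inverse problem has a valid solution for every target pose, which is exactly full reachability. I would present the argument in two or three sentences, since essentially all the work has already been done by the theorem and the inverse-problem proposition; the corollary is just the specialization $r_{31}=0$.

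Alternatively, and even more simply, one can cite the cross-section description already given in the text immediately after Theorem~\ref{Th:GMDM_CSC_reachability}: the unreachable region is an \emph{open} disk of radius $|r_{31}|$, which is empty when $r_{31}=0$. Either route is a two-line proof. The main thing to get right is the identification $|r_1|=|r_3| \Rightarrow r_{31}=0$ for LSL/RSR specifically (this fails for LSR/RSL, where $\sign\omega_1 \neq \sign\omega_3$ makes $r_1 = -r_3$ and $r_{31} = 2r_1 \neq 0$ — which is consistent with the known fact that LSR and RSL alone do not cover all of SE(2)), so I would state that contrast explicitly to motivate why only LSL and RSR appear in the corollary.
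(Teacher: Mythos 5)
Your proposal is correct and follows essentially the same route as the paper: for LSL and RSR the same-sign turns give $r_1=r_3$, so $r_{31}=0$, and substituting into the inequality of Theorem~\ref{Th:GMDM_CSC_reachability} makes it hold for every pose in SE(2). The extra checks you add (well-posedness of the inverse solution and the contrast with LSR/RSL) are sound but not needed beyond what the paper's two-line argument already establishes.
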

\begin{proof}
    For the Dubins LSL and RSR path types, $r_1=r_3$, thus $r_{31}=0$. From (\ref{eq:Multispeed_CSC_Reach_Solution}) we get $(x_f-c)^2 + (y_f-d)^2  \geq 0$, which is true for all $p_f=(x_f,y_f,\theta_f)\in SE(2)$. 
\end{proof}

\begin{theorem}[CCC Reachability]\label{Th:GMDM_CCC_reachability}
The reachable set $\mathcal{R}_{CCC}$ of GMDM satisfies the following:
    \begin{equation}
        \label{eq:Multispeed_CCC_Reach_Solution}
        \begin{split}
         r_{31}^2 \leq & \big(x_f-c\big)^2 + \big(y_f-d\big)^2
        \leq \big(r_{12} - r_{23}\big)^2
        \end{split}
    \end{equation}
\end{theorem}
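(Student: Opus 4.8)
The plan is to derive the claimed two-sided bound by substituting the closed-form CCC forward solution of Proposition~\ref{propcccforward} into the quantity $(x_f-c)^2+(y_f-d)^2$ and reducing it to a single-parameter trigonometric expression. First I would exploit the terminal-angle identity $\theta_0+\phi_1+\phi_2+\phi_3\equiv\theta_f\ (\mathrm{mod}\ 2\pi)$ from (\ref{propthetafccc}) to replace the last trigonometric terms in (\ref{propxfccc}) and (\ref{propyfccc}) by $r_3\sin\theta_f$ and $-r_3\cos\theta_f$. Subtracting the definitions (\ref{eq:CD}) of $c$ and $d$ then cancels the $x_0$, $y_0$, $r_1$, and $r_3$ contributions, leaving
\[
x_f-c = r_{12}\sin(\theta_0+\phi_1)+r_{23}\sin(\theta_0+\phi_1+\phi_2),\qquad
y_f-d = -r_{12}\cos(\theta_0+\phi_1)-r_{23}\cos(\theta_0+\phi_1+\phi_2).
\]

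Next I would square and add these two expressions and apply the identity $\cos\alpha\cos\beta+\sin\alpha\sin\beta=\cos(\alpha-\beta)$ with $\alpha=\theta_0+\phi_1$ and $\beta=\theta_0+\phi_1+\phi_2$, so that $\alpha-\beta=-\phi_2$. This collapses everything to
\[
(x_f-c)^2+(y_f-d)^2 = r_{12}^2+r_{23}^2+2\,r_{12}r_{23}\cos\phi_2 .
\]
Since $r_{12}+r_{23}=r_1-r_3=-r_{31}$, the two extreme values $\cos\phi_2=\pm1$ give exactly $(r_{12}+r_{23})^2=r_{31}^2$ and $(r_{12}-r_{23})^2$, which are the two bounds in the statement; it remains only to decide which one is the maximum.

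The key step — and the only genuine subtlety — is the sign argument that pins this down. For a CCC path the turning-direction constraint forces $\sign\omega_1\neq\sign\omega_2$ and $\sign\omega_2\neq\sign\omega_3$, hence $\omega_1$ and $\omega_3$ share a sign opposite to that of $\omega_2$; because $r_i=v_i/\omega_i$ with $v_i>0$, the radii $r_1,r_3$ likewise share a sign opposite to $r_2$, so $r_{12}=r_1-r_2$ and $r_{23}=r_2-r_3$ have opposite signs and $r_{12}r_{23}\le 0$. Writing the two desired inequalities in the equivalent forms $r_{12}r_{23}\,(\cos\phi_2-1)\ge 0$ (lower bound) and $r_{12}r_{23}\,(\cos\phi_2+1)\le 0$ (upper bound), both follow at once from $r_{12}r_{23}\le 0$ together with $-1\le\cos\phi_2\le 1$. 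I expect the main obstacle to be phrasing this sign/parity reasoning cleanly, including the degenerate subcases where some of the $r_i$ coincide (so that one of $r_{12},r_{23}$ vanishes and the bounds collapse); the trigonometric reduction leading to the $\cos\phi_2$ form is routine bookkeeping, entirely parallel to the CSC case in Theorem~\ref{Th:GMDM_CSC_reachability}.
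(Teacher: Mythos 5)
Your proposal is correct, and it takes a genuinely different route from the paper. The paper argues through the \emph{inverse} problem: it reads off the existence conditions for $\theta_1$ and $\theta_2$ from the $\arcsin$ terms in (\ref{eq:CCCsolntheta1prop})--(\ref{eq:CCCsolntheta2prop}), squares both domain constraints, adds them, substitutes $q=a^2+b^2$, and factorizes the resulting quadratic to get $(q-(r_{12}+r_{23})^2)(q-(r_{12}-r_{23})^2)\leq 0$, before invoking the same sign fact you use ($r_{12}$ and $r_{23}$ have opposite signs for LRL/RLR) to order the two roots. You instead work with the \emph{forward} map: cancelling $c$ and $d$ against the forward solution and collapsing the cross terms to $r_{12}^2+r_{23}^2+2r_{12}r_{23}\cos\phi_2$, after which the two bounds drop out from $r_{12}r_{23}\leq 0$ and $-1\leq\cos\phi_2\leq 1$, together with $r_{12}+r_{23}=r_1-r_3=-r_{31}$. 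Your computation is essentially the law of cosines on the triangle of turning-circle centres and is arguably more transparent and more elementary (no squaring-and-adding of inequality pairs, no quadratic factorization); the paper's version has the virtue of tying the reachable set directly to the solvability of the inverse problem, which is what the planner actually tests. One cosmetic remark: the paper additionally records the constraint $|\phi_2|\in(\pi,2\pi)$ for valid CCC paths, which your argument does not need since any $\phi_2$ keeps $\cos\phi_2$ in $[-1,1]$; your non-strict bounds remain valid either way. Both proofs establish the same containment (reachable set inside the annulus), which is all the theorem claims.
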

\begin{proof}
See Appendix \ref{prf:GMDM_CCC_reachability}.
\end{proof}

Fig. \ref{fig:ReachCCC} visualizes the reachable set $\mathcal{R}_{CCC}$ derived from (\ref {eq:Multispeed_CCC_Reach_Solution}). The top row of Fig. \ref{fig:ReachCCC} shows $\mathcal{R}_{CCC}$ in SE(2) space with two reachability plots corresponding to LRL and RLR path types. As seen in each of these plots, the reachable region for a path type lies inside an annular region within a tube with inner and outer boundaries included, while everything else is unreachable. The bottom three rows of Fig. \ref{fig:ReachCCC} show the cross sections of $x_f$-$y_f$ planes for different $\theta_f$. As seen from each of these cross-sections, the reachable region of a path type lies inside an annulus of a closed circle with center $(c,d)$, inner radius $|r_{31}|$ and outer radius $|r_{12}-r_{23}|$.

\begin{rem}
The control inputs $\textbf{u}_i$, $i=1,2,3$, affect the radii $|r_i|$, which in turn affect the center ($c,d$)  and radii $|r_{31}|$ and $|r_{12} - r_{23}|$ of the annulus region in (\ref{eq:Multispeed_CCC_Reach_Solution}). Hence, the reachability of a CCC path type depends on the choice of controls. 
\end{rem}

\begin{cor}
The Dubins $LRL$ and $RLR$ paths each provide reachability inside a circle with center $(c,d)$ and radius $4|r_1|$.
\end{cor}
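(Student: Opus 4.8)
The plan is to derive this corollary by specializing Theorem~\ref{Th:GMDM_CCC_reachability} to the Dubins case. Recall that a Dubins $LRL$ (resp.\ $RLR$) path is the CCC path obtained when all three turning segments share the same turning-radius magnitude, i.e., $|r_1|=|r_2|=|r_3|$. The inequality (\ref{eq:Multispeed_CCC_Reach_Solution}) already characterizes the reachable set of any CCC path type as the closed annulus centered at $(c,d)$ with inner radius $|r_{31}|$ and outer radius $|r_{12}-r_{23}|$; thus all that remains is to evaluate these two radii under the equal-magnitude constraint together with the CCC alternation condition $\sign\omega_1\neq\sign\omega_2$, $\sign\omega_2\neq\sign\omega_3$.

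First I would pin down the signs of the $r_i$. For an $LRL$ path we have $\omega_1>0$, $\omega_2<0$, $\omega_3>0$, hence $r_1>0$, $r_2<0$, $r_3>0$; combined with $|r_1|=|r_2|=|r_3|$ this forces $r_2=-r_1$ and $r_3=r_1$. For an $RLR$ path the three signs are reversed, but the identical algebra again yields $r_2=-r_1$ and $r_3=r_1$. Consequently, in both cases $r_{31}=r_3-r_1=0$, so the inner radius of the annulus in (\ref{eq:Multispeed_CCC_Reach_Solution}) vanishes and the annulus collapses to a disk.

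Next I would compute the outer radius: $r_{12}=r_1-r_2=2r_1$ and $r_{23}=r_2-r_3=-2r_1$, so $r_{12}-r_{23}=4r_1$ and $(r_{12}-r_{23})^2=16r_1^2=(4|r_1|)^2$. Substituting $r_{31}=0$ and $(r_{12}-r_{23})^2=(4|r_1|)^2$ into (\ref{eq:Multispeed_CCC_Reach_Solution}) gives $0\le (x_f-c)^2+(y_f-d)^2\le (4|r_1|)^2$, which is exactly the claim that the reachable set lies inside the closed disk of radius $4|r_1|$ centered at $(c,d)$. There is no real obstacle here; the only point requiring care is the sign bookkeeping in the first step, since it is the CCC alternation condition (not merely $|r_1|=|r_2|=|r_3|$) that guarantees $r_{12}-r_{23}=4r_1$ rather than $0$, and hence yields a nondegenerate disk of the stated radius.
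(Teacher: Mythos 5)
Your proposal is correct and follows essentially the same route as the paper: specialize the CCC reachability bound of Theorem~\ref{Th:GMDM_CCC_reachability} using $r_1=-r_2=r_3$, so that $r_{31}=0$ and $r_{12}-r_{23}=4r_1$, collapsing the annulus to a disk of radius $4|r_1|$. Your only addition is the explicit sign bookkeeping justifying $r_2=-r_1$ and $r_3=r_1$ from the LRL/RLR alternation, which the paper states without elaboration.
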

\begin{proof}
For the Dubins LRL and RLR path types, $r_1=-r_2=r_3$, thus $r_{31}=0$ and $r_{12}-r_{23}=r_1-r_2-r_2+r_3=4r_1$. From ~(\ref{eq:Multispeed_CCC_Reach_Solution}) we get   
 $\big(x_f-c\big)^2 + \big(y_f-d\big)^2
        \leq (4r_1)^2$. 
\end{proof}

The reachable sets for the CSC and CCC path types in Theorems~\ref{Th:GMDM_CSC_reachability} and \ref{Th:GMDM_CCC_reachability} above were numerically verified by solving a large number of goal poses. 
From the results of  Theorems~\ref{Th:GMDM_CSC_reachability} and \ref{Th:GMDM_CCC_reachability}, it is clear that any individual GMDM path type from either the CSC or CCC class does not provide full reachability. However, the next theorem guarantees full  reachability of GMDM using a union of the LSL and RSR path types.

\begin{theorem}[Full Reachability]\label{Th:GMDM_full_reachability}
GMDM achieves full reachability of the SE(2) space using LSL and RSR path types.
\end{theorem}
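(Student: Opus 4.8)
The plan is to leverage Theorem~\ref{Th:GMDM_CSC_reachability} twice, once for the LSL path type and once for the RSR path type, and show that the union of the two reachable sets covers all of SE(2). By Theorem~\ref{Th:GMDM_CSC_reachability}, a pose $\textbf{p}_f$ fails to be reachable by a given CSC path type exactly when it lies strictly inside the open disk of radius $|r_{31}|$ centered at $(c,d)$ in the $x_f$-$y_f$ slice determined by $\theta_f$. So the key observation is that the center $(c,d)$ and the radius $|r_{31}|$ depend on the turn directions through the signs of $r_1$ and $r_3$: for LSL both $\omega_1,\omega_3>0$, while for RSR both $\omega_1,\omega_3<0$. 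I would fix a speed selection, hence fixed $|r_1|$ and $|r_3|$, and compute the two unreachable disks $D_{\mathrm{LSL}}$ and $D_{\mathrm{RSR}}$ explicitly from the definitions of $c$, $d$ in (\ref{eq:CD}) and $r_{31}=r_1-r_3$ in (\ref{eq:param1}).

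The core of the argument is then purely geometric: for each fixed $\theta_f$, show that the two open disks $D_{\mathrm{LSL}}$ and $D_{\mathrm{RSR}}$ are disjoint (as Fig.~\ref{fig:ReachTheorem} suggests), so that no pose can lie in both, and therefore every pose is reachable by at least one of the two path types. Writing $r_1^{L}=|r_1|$, $r_3^{L}=|r_3|$ for LSL and $r_1^{R}=-|r_1|$, $r_3^{R}=-|r_3|$ for RSR, the two centers differ by a vector whose components come from the $r_1\sin\theta_0$, $r_1\cos\theta_0$, $r_3\sin\theta_f$, $r_3\cos\theta_f$ terms in $c,d$; the change of sign in $r_1$ and $r_3$ flips those terms, so the distance between the centers is $2\sqrt{r_1^2 + r_3^2 - 2|r_1||r_3|\cos(\theta_f-\theta_0)}$ (or an equivalent closed form). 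Both disks have the same radius $|r_{31}| = \big||r_1|-|r_3|\big|$. I would then verify the separation inequality: distance between centers $\ge$ sum of radii, i.e. that this distance is at least $2\big||r_1|-|r_3|\big|$, which reduces after squaring to $r_1^2+r_3^2-2|r_1||r_3|\cos(\theta_f-\theta_0) \ge (|r_1|-|r_3|)^2$, i.e. $2|r_1||r_3|\big(1-\cos(\theta_f-\theta_0)\big)\ge 0$ — always true. Hence the two open unreachable disks are disjoint (touching only in the degenerate case $\theta_f=\theta_0$ on the circle boundaries, which are themselves reachable), so their complements cover the plane slice.

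The main obstacle I anticipate is bookkeeping the sign conventions carefully: $r_i = v_i/\omega_i$ carries the sign of $\omega_i$, and $\theta_{ij}$ involves a $\bmod$ with modulus $2\pi\,\sign(\omega_i)$, so I must be sure that when $\textbf{p}_f$ lies outside both unreachable disks the inverse solution of Proposition~\ref{propcscinverse} actually returns admissible durations $\tau_i\ge 0$ with $|\phi_i|<2\pi$ for at least one of LSL/RSR — i.e. that "outside the disk" in Theorem~\ref{Th:GMDM_CSC_reachability} is not just necessary but sufficient for that path type. Assuming the reachable-set characterization in Theorem~\ref{Th:GMDM_CSC_reachability} is exact (as stated and numerically verified in the text), the remaining work is just the disjointness computation above plus handling the boundary/degenerate cases, which gives full reachability and completes the proof.
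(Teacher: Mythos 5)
Your proposal is correct and follows essentially the same route as the paper's proof: apply Theorem~\ref{Th:GMDM_CSC_reachability} to the LSL and RSR types, note both unreachable open disks have radius $|r_{31}^{LSL}|$, and show the distance between their centers, $2\sqrt{(r_1^L)^2+(r_3^L)^2-2r_1^Lr_3^L\cos(\theta_0-\theta_f)}$, is at least the sum of the radii via the same $\cos(\cdot)\le 1$ bound. Your explicit caveat that the condition of Theorem~\ref{Th:GMDM_CSC_reachability} must be sufficient (not merely necessary) is the same tacit assumption the paper makes, so nothing further is needed.
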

\begin{proof}
See Appendix \ref{prf:GMDM_full_reachability}.
\end{proof}

Fig.~\ref{fig:ReachTheorem} visualizes and numerically validates Theorem \ref{Th:GMDM_full_reachability}. Fig. \ref{fig:reachthma} shows the unreachable regions of LSL and RSR path types in the SE(2) space, while Fig. \ref{fig:reachthmb} shows these regions in the cross-sections of the $x_f-y_f$ plane for different $\theta_f$. The plots are drawn for $\textbf{p}_0=(0,0,0)$ and $(r_1,r_3)=(0.1,1.0)$. Clearly, the unreachable regions of LSL and RSR path types are disjoint,  thus verifying that all poses are reachable when using at least these two path types. Numerical evaluations for other sets of turning radii show similar results.

\begin{rem}
The pair of $LSL$ and $RSR$ path types provides full reachability to GMDM. However, it can be easily shown that for any other pair, the unreachable regions of path types are not always disjoint, thus not providing full reachability.
\end{rem}

\vspace{-6pt}
\section{Results and Discussion}
\label{sec:results}
This section presents the comparative evaluation results of GMDM with the baseline models for time-optimal and time-risk optimal planning in different scenarios. The performance is measured by the solution quality (i.e., time/time-risk cost), computation time and collision risk of the produced path. 

The solutions of the Dubins model and GMDM are closed form and obtained analytically, whereas the solutions of the Wolek model required the IPOPT solver \cite{IPOPT}. All motion models are coded in C++. The computations were done on an Intel Core-i7 $7700$ processor with 32GB of RAM on Ubuntu $16.04$ LTS. A curvature-constrained vehicle with $v_{\min} = 0.3$ m/s, $v_{\max}=1.0$ m/s and $\omega_{\max} = 1.0$ rad/s is considered, with the associated turning radii of $r_{\min}=0.3$ m and $r_{\max}=1.0$ m. The results are generated by extensive Monte Carlo simulations and a detailed discussion on the advantages of GMDM for time/time-risk optimal planning is presented.

\vspace{0pt}

\subsection{Discussion of the Baseline Models and GMDM}

\subsubsection{The Dubins model}
This is the first baseline model for comparison. It produces time-optimal paths for constant-speed curvature-constrained vehicles. Let $\Gamma_D$ denote the set of Dubins path types: LSL, LSR, RSL, RSR, LRL, RLR. As discussed earlier, the Dubins model might produce suboptimal paths for multi-speed vehicles due to its inability to create sharp turns; however, it is the simplest model with closed form solution, thus suitable for onboard real-time implementation.

\begin{table}[t!]
\centering
\small
\caption{Sets of candidate path types.}
\label{tab:wolek34}
\begin{tabular}{ll|lll}
\multicolumn{2}{c}{$\Gamma_W$ (Wolek, et al. \cite{WCW16})} & \multicolumn{2}{c}{$\Gamma'_{G2}$ (GMDM$'$-2) \tnote{1}} \\ \bottomrule
No. & Path Type            & No.       & Path Type \\ \bottomrule
1   & L$^+$S$^+$L$^+$          & 1         & L$^+$S$^+$L$^+$ \\
2   & L$^+$S$^+$R$^+$          & 2         & L$^+$S$^+$R$^+$ \\
3   & R$^+$S$^+$L$^+$        &  3         & R$^+$S$^+$L$^+$ \\
4   & R$^+$S$^+$R$^+$          &  4         & R$^+$S$^+$R$^+$ \\
5   & L$^+$S$^+$L$^+$L$^-$      & 5         & L$^+$S$^+$L$^-$ \\
6   & L$^+$S$^+$R$^+$R$^-$      &  6         & L$^+$S$^+$R$^-$ \\
7   & R$^+$S$^+$L$^+$L$^-$       &  7         & R$^+$S$^+$L$^-$ \\
8   & R$^+$S$^+$R$^+$R$^-$      &  8         & R$^+$S$^+$R$^-$ \\
9   & L$^-$L$^+$S$^+$L$^+$      &  9         & L$^-$S$^+$L$^+$ \\
10  & L$^-$L$^+$S$^+$R$^+$      &  10        & L$^-$S$^+$R$^+$  \\
11  & R$^-$R$^+$S$^+$L$^+$      &  11        & R$^-$S$^+$L$^+$  \\
12  & R$^-$R$^+$S$^+$R$^+$       &  12        & R$^-$S$^+$R$^+$ \\
13  & L$^-$L$^+$S$^+$L$^+$L$^-$   &  13        & L$^-$S$^+$L$^-$  \\
14  & L$^-$L$^+$S$^+$R$^+$R$^-$    &  14        & L$^-$S$^+$R$^-$ \\
15  & R$^-$R$^+$S$^+$L$^+$L$^-$   &  15        & R$^-$S$^+$L$^-$ \\
16  & R$^-$R$^+$S$^+$R$^+$R$^-$  &  16        & R$^-$S$^+$R$^-$ \\
17  & L$^-$R$^-$L$^-$          &  17        & L$^+$R$^+$L$^+$ \\ 
18 & R$^-$L$^-$R$^-$ & 18 & R$^+$L$^+$R$^+$ \\
19 & L$^+$L$^-$L$^+$L$^+$ & 19 & L$^+$R$^+$L$^-$ \\
20 & L$^+$L$^-$L$^+$R$^+$ & 20 & R$^+$L$^+$R$^-$ \\
21 & R$^+$R$^-$R$^+$L$^+$ & 21 & L$^+$R$^-$L$^+$ \\
22 & R$^+$R$^-$R$^+$R$^+$  & 22 & R$^+$L$^-$R$^+$ \\
23 & L$^+$L$^+$L$^-$L$^+$ & 23 & L$^+$R$^-$L$^-$ \\
24 & L$^+$R$^+$R$^-$R$^+$ & 24 & R$^+$L$^-$R$^-$ \\
25 & R$^+$L$^+$L$^-$L$^+$  & 25 & L$^-$R$^+$L$^+$ \\
26 & R$^+$R$^+$R$^-$R$^+$ & 26 & R$^-$L$^+$R$^+$ \\
27 & L$^+$L$^-$L$^+$L$^+$L$^-$ & 27 & L$^-$R$^+$L$^-$ \\
28 & L$^+$L$^-$L$^+$R$^+$R$^-$  & 28 & R$^-$L$^+$R$^-$ \\
29 & R$^+$R$^-$R$^+$L$^+$L$^-$ & 29 & L$^-$R$^-$L$^+$ \\
30 & R$^+$R$^-$R$^+$R$^+$R$^-$ & 30 & R$^-$L$^-$R$^+$ \\
31 & L$^-$L$^+$L$^+$L$^-$L$^+$ & 31 & L$^-$R$^-$L$^-$ \\
32 & L$^-$L$^+$R$^+$R$^-$R$^+$ & 32 & R$^-$L$^-$R$^-$ \\
33 & R$^-$R$^+$L$^+$L$^-$L$^+$ & & \\
34 & R$^-$R$^+$R$^+$R$^-$R$^+$ & & \\
\bottomrule
\end{tabular}\vspace{-12pt}
\begin{tablenotes}
    \centering
    \item[1] {}
\end{tablenotes}
\end{table}

\begin{figure*}[t!]
    \centering
        \includegraphics[width=1\textwidth]{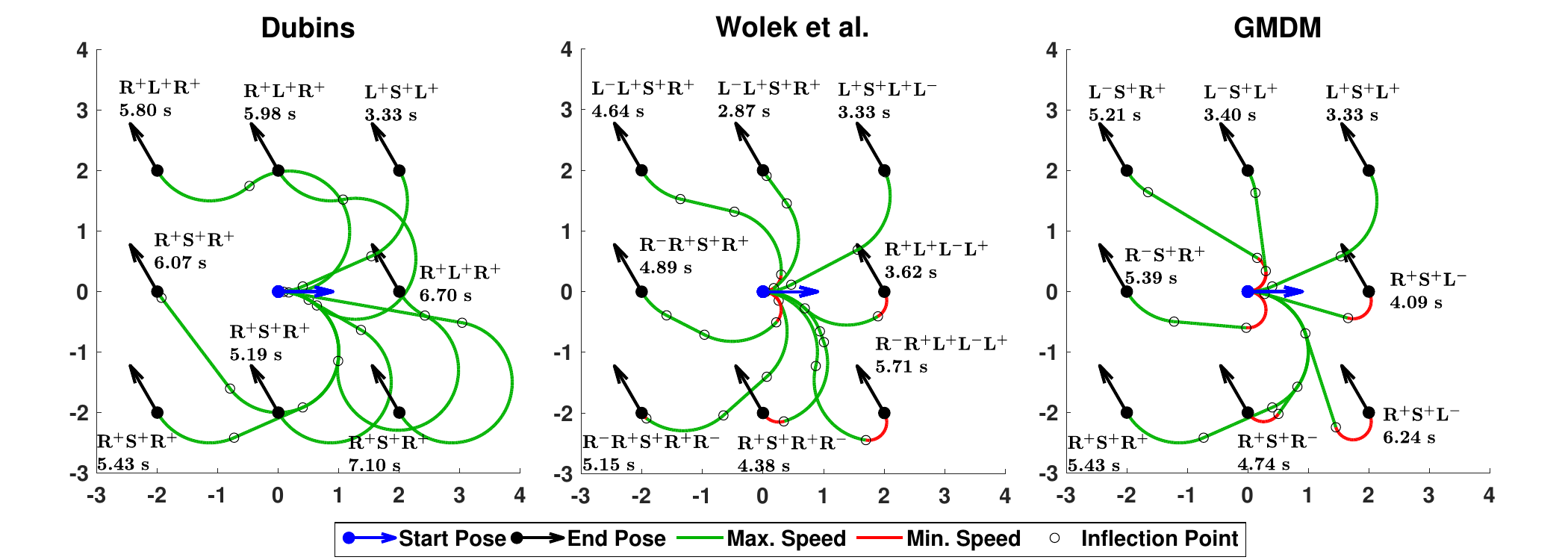}
         \caption{Visualization of paths produced by the three motion models (Dubins, Wolek's and GMDM$'$-$2$) from $\textbf{p}_0=(0,0,0)$ to different goal poses.}
         \label{fig:optimal_path_ex}\vspace{-6pt}
\end{figure*}

\vspace{6pt}
\subsubsection{The Wolek's model}
Wolek et al. \cite{WCW16} expanded the Dubins model for time-optimal planning of multi-speed vehicles. However, this model uses only the extremal (i.e., min and max) speeds which are shown to be sufficient for time-optimal planning in obstacle-free environments. As such, this model might produce sub-optimal results for time-risk optimal planning in obstacle-rich environments. Specifically, in this model, the L and R segments are of two types: bang ($^+$) and cornering ($^-$), which correspond to the max and min speeds, respectively. The S segments are always at the max speed. Clearly, the speeds other than min and max might be necessary on the turn segments to wrap tightly around the obstacles and on the straight line segments to reduce risk. Furthermore, there is no restriction on the number of segments for a path type. For example, the path L$^-$L$^+$S$^+$R$^+$R$^-$ means that the vehicle first turns left at min speed, then turns left at max speed, then continues on a straight line at max speed, then turns right at max speed, and finally turns right at min speed before reaching the goal. This kind of complex maneuver might be difficult to follow by the vehicle. Wolek et al. \cite{WCW16} provided a sufficient set of $84$ candidate path types that is guaranteed to yield the time-optimal path to any goal pose in an obstacle-free environment. For simplicity, Wolek et al. \cite{WCW16} further provided a smaller set of $34$ candidate path types that is most likely to yield the time-optimal path. Let $\Gamma_W$ denote this set of most-likely candidate paths \cite{WCW16}, as shown in Table~\ref{tab:wolek34}. Unlike Dubins, the solution of Wolek's model requires nonlinear optimization, thus making it impractical for onboard real-time implementation.

\vspace{6pt}
\subsubsection{GMDM}
This is an extension of the Dubins model that allows multi-speed configurations of the Dubins path types by selecting a different control input $\textbf{u}_i=(v_i,\omega_i)$ on each path segment $i=1,2,3$. For practical implementation, the set of GMDM path types is obtained by discretizing the control inputs for each path segment. Thus, we define the following:

{
\begin{defn} [GMDM-$k$]
    Let $k\in\mathbb{N}^+$ be the number of speeds to consider.  Let $\mathbb{V}_k=\{v_{\ell} \in [v_{min},v_{max}]: \ell=1,...k\}$ be the set of $k$ uniformly-spaced speeds, where
    \begin{equation}
        v_\ell = \begin{cases} 
        v_{max} &  k=1\\
        v_{min} + (\ell-1)\frac{v_{max}-v_{min}}{k-1} & k>1.
         \end{cases}
    \end{equation}
   Then, GMDM-k is defined as GMDM, where for each path segment i=1,2,3, a) the speed $v_i\in\mathbb{V}_k$, and b) the turning rate $|\omega_i|\in\{0,\omega_{max}\}$. Note: GMDM-1 denotes the Dubins model.
   
\end{defn}
}

\begin{rem}
 In general, there could be other ways to select speeds for the GMDM path segments, but these approaches are not discussed here.
\end{rem}

Let $\Gamma_{Gk}$ denote the set of GMDM-$k$ path types. Then, the total number of  GMDM-$k$ path types is $|\Gamma_{Gk}|$=$6k^3$. 

\begin{rem}
The set $\Gamma_D$ of Dubins path types is contained in the set $\Gamma_{Gk}$ of GMDM-$k$ path types, $\forall$ $k\in \mathbb{N}^+$, (i.e., $\Gamma_D\subseteq\Gamma_{Gk}$). The  equality holds for $k=1$ (i.e., $\Gamma_D=\Gamma_{G1}$). Thus,
the solution quality of GMDM paths in terms of time/time-risk costs is guaranteed to be better than or the same as the Dubins paths.

\end{rem}

{
\begin{defn} [GMDM$'$-$k$] 
   GMDM$'$-$k$ is  GMDM-k where the speed for the straight line segment is set to $v_{max}$.
\end{defn}
}

{ Let $\Gamma'_{Gk}\subset \Gamma_{Gk}$ be the set of GMDM$'$-$k$ path types. Then, the total number of  GMDM$'$-$k$ path types is $|\Gamma'_{Gk}|=2k^3 + 4k^2$.} 

{\begin{rem} For time-optimal planning we use GMDM$'$-$k$ since the straight line segments must be at max speed. However, for time-risk optimal planning we use GMDM-$k$ for minimizing time-risk costs. 
\end{rem}}

GMDM provides a closed form solution just like the Dubins model; thus, its computation is straightforward. 
Furthermore, the GMDM path types are simple with only three segments; thus, the time complexity of GMDM to obtain the solution of each individual path type is the same as that of Dubins. However, as compared to the Dubins model, GMDM has more path types depending on the number of possible speeds on each segment. Thus, the time complexity of GMDM typically falls between that of the Dubins and Wolek's models; however, GMDM computation is still real-time and significantly faster than the Wolek's model due to the closed-form solutions. 

While GMDM does not guarantee to produce time-optimal solutions in obstacle-free environments, the results later show that the solution quality of GMDM is significantly better than the Dubins model and approaches that of the Wolek's model (i.e., time-optimal solutions). Furthermore, the GMDM solutions are obtained in real-time with orders of magnitude faster computation than the Wolek's model. On the other hand, neither GMDM nor the baseline models guarantee to provide time-risk optimal solutions in obstacle-rich environments; however, the results later show that GMDM in fact produces significantly better solution quality (i.e., time-risk cost) than both baseline models. { This is because the Dubins model with single speed and the Wolek's model with two speeds cannot effectively minimize the time-risk cost, while GMDM with multiple speeds is capable of doing that.}

The Dubins model, the Wolek's model and GMDM use the path types from $\Gamma_D$, $\Gamma_W$, and $\Gamma_{Gk}$, respectively, to compute the different possible paths between any pair of poses. Then, each model selects the path with the least cost as the final solution.

\vspace{-6pt}
\subsection{Time-Optimal Planning}
\label{sec:Obstacle-freeEnv} In this section, we present the comparative evaluation results for time-optimal planning in obstacle-free and obstacle-rich environments without considering risk.  
According to the Wolek's model \cite{WCW16}, the time-optimal paths in obstacle-free environments can be generated using the extremal speeds (i.e., $v_{min}$ and $v_{max}$). Thus, for time-optimal planning we consider GMDM$'$-$2$ and its set $\Gamma'_{G2}\subset \Gamma_{G2}$ of path types that are built from bang (i.e., max speed) and cornering (i.e., min speed) arcs and max speed straight line segments. For notational simplicity we refer to GMDM$'$-$2$ as GMDM unless specified. Table~\ref{tab:wolek34} shows the set $\Gamma'_{G2}$ that consists of $32$ GMDM$'$-$2$ path types.

\vspace{6pt}
\subsubsection{Time-optimal planning in obstacle-free environments} 
First, we consider time-optimal planning in obstacle-free environments. We visualize the paths generated by different motion models and perform Monte Carlo simulations to compare their solution quality (i.e., travel time) and computation time.

\begin{figure}[t!]
    \centering
\subfloat[{Travel times for each motion model.}]{
 \includegraphics[width=0.36\textwidth]{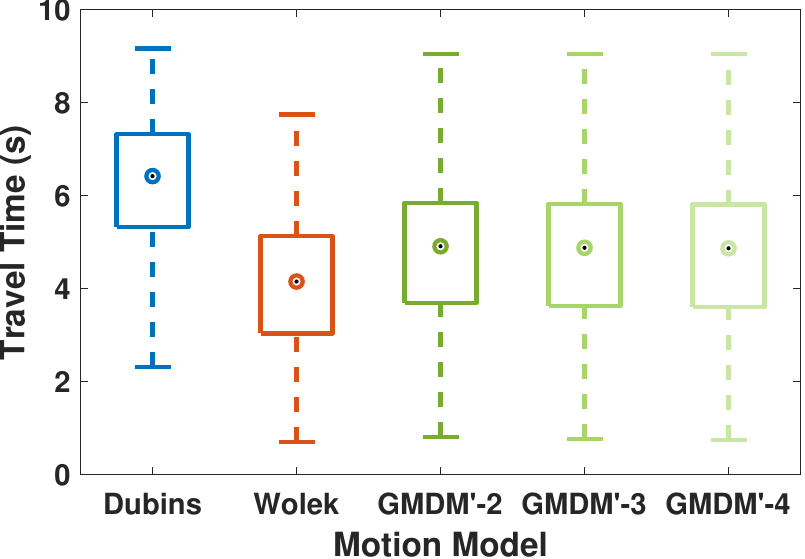}\label{fig:numspeedstudy_a}
 }\\\vspace{5pt}\hspace{-9pt}
 \subfloat[{Average computation times for each motion model.}]{
 \includegraphics[width=0.37\textwidth]{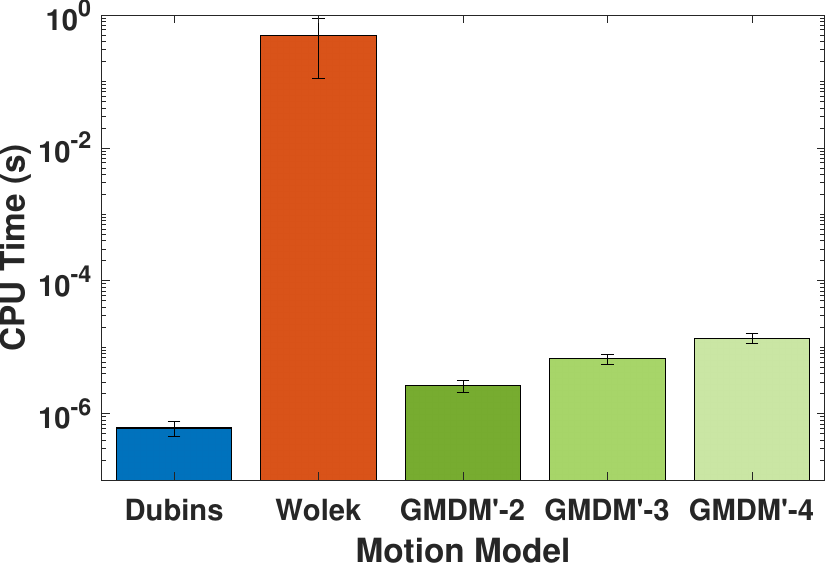}\label{fig:numspeedstudy_b}
 }
\caption{{ Comparative evaluation results of the three motion models for time-optimal planning in an obstacle-free environment. The results show the effects of the number of speeds of GMDM on the solution quality and computation time.}
         }   \vspace{-6pt}
\label{fig:numspeedstudy}
\end{figure}

\vspace{6pt}
$\bullet$ \textit{Visualization of paths}:
To visualize and compare the paths produced by the three motion models (i.e., the Dubins model, the Wolek's model and GMDM$'$-$2$), we consider different goal poses around the origin with the start pose $\textbf{p}_0$=$(0,0,0)$. Fig.~\ref{fig:optimal_path_ex} shows the paths produced by the three motion models to the goal poses and their travel times. Note that for simplicity GMDM$'$-$2$ is referred as GMDM. 
As seen, the Dubins paths to the goals are indirect, i.e., long and curvy,  due to the single-speed restriction. However, the Dubins model takes only $\sim 6\times 10^{-7}$~s of computation time to find the solution. 
The Wolek's paths, on the other hand, are time-optimal with sharp turns and shortest travel times.
However, the Wolek's model has a higher average computation time of $\sim 3.57\times 10^{-1}$~s. Finally, the GMDM paths are much more direct than the Dubins paths and approach the solution quality of the Wolek's paths. GMDM takes only $\sim 3\times 10^{-6}$~s to find the solution; thus, it is well-suited for real time applications. 
Taking an example, say the goal pose to the right of the start pose, the Dubins, Wolek's and GMDM paths have the travel time costs of $\sim6.70$~s, $\sim3.62$~s and $\sim4.09$~s, respectively.

\vspace{6pt}
$\bullet$ \textit{Comparative evaluation results}:
For further evaluation, we conducted Monte Carlo simulations for time-optimal planning from the start pose at the origin to 5000 randomly generated end poses. The $x-y$ positions of these end poses were generated from a uniform distribution over a disk of $3$ m radius centered at the origin, whereas the heading angle $\theta$ is generated from a uniform distribution over the interval $[0,2\pi)$.

\begin{figure}
    \centering
    \includegraphics[width=0.44\textwidth]{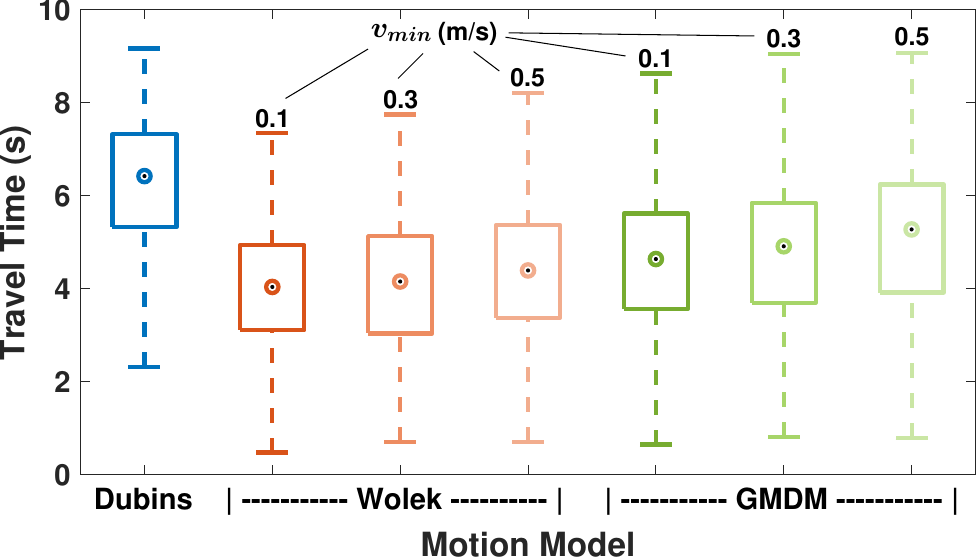}
    \caption{{  Comparative evaluation results showing the effects of different $v_{min}$ on the solution quality of the Wolek's model and GMDM. Note: $v_{max}=1.0$ m/s for all three models.}} \vspace{-12pt}
    \label{fig:minspeedstudy}
\end{figure}

\begin{figure*}[t!]
    \centering
    \subfloat[Visualization of paths produced by the three motion models (Dubins, Wolek's and GMDM$'$-$2$) for a TSP with $3$ points of interest.]{
        \includegraphics[width=0.9\textwidth]{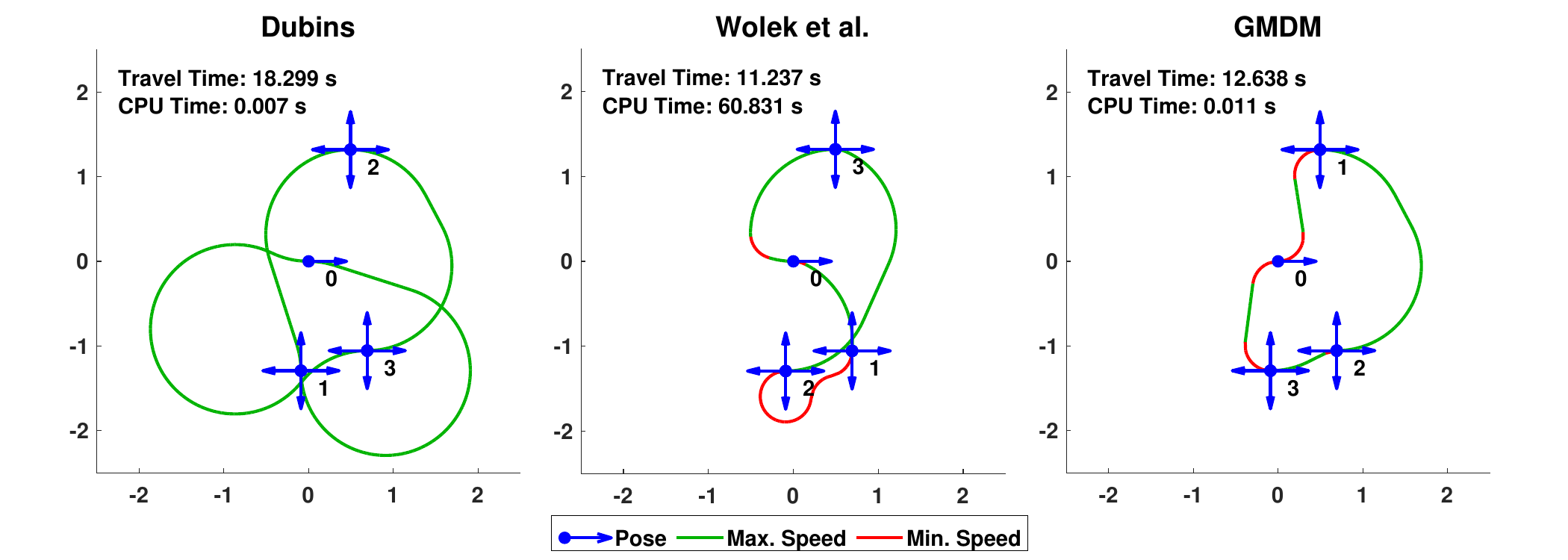}\label{fig:DubinsTSP}
        }\\
 \subfloat[Comparative evaluation results of the three motion models for a TSP with $5$ points of interest.]
 {
 \includegraphics[width=0.65\columnwidth]{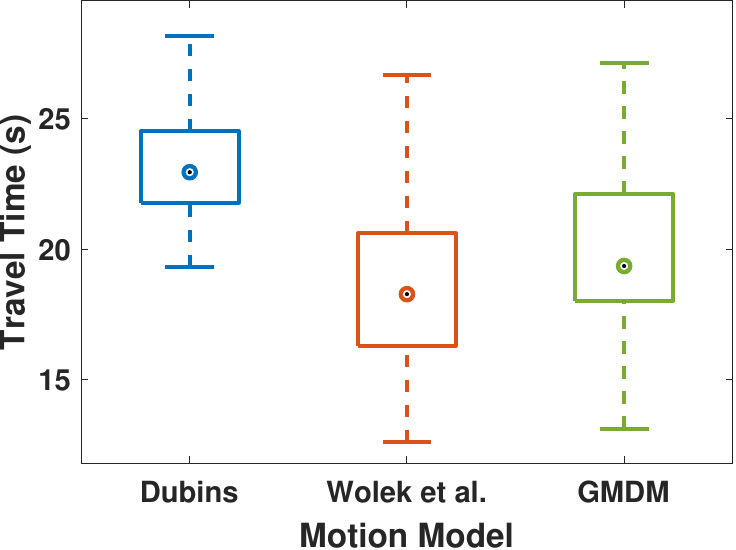}
 \label{fig:tspmontecarlo_a} \hspace{12pt} \includegraphics[width=0.66\columnwidth]{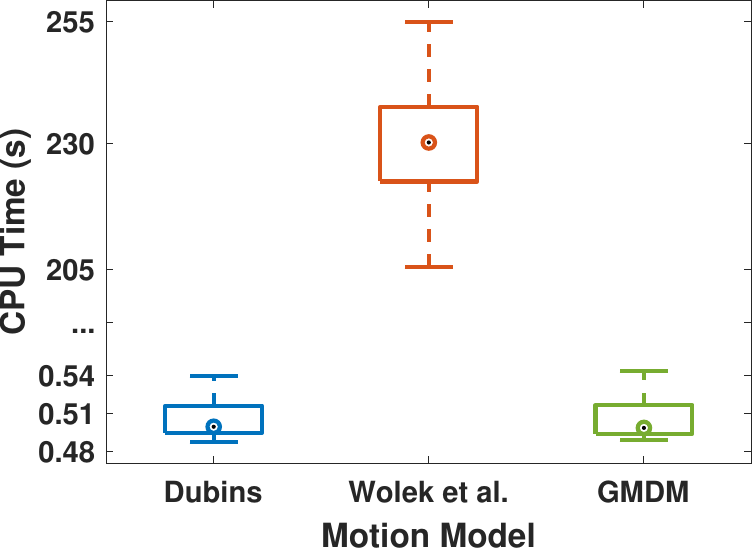}
 \label{fig:tspmontecarlo_b}
 }
        \caption{Comparison of TSP solutions produced by the three motion models (Dubins, Wolek's and GMDM$'$-$2$). 
         }   \vspace{-6pt}
\end{figure*}

The first study examines the effects of the number of speeds considered in GMDM on the solution quality (i.e., travel time) and computation time. In particular, the GMDM paths are selected from the sets $\Gamma'_{Gk}$, where $k=1,\dots,4$. Thus, the models compared are GMDM$'$-$1$ (the Dubins model), the Wolek's model, GMDM$'$-$2$,  GMDM$'$-$3$, and  GMDM$'$-$4$. 
Fig.~\ref{fig:numspeedstudy_a} shows the statistical results of the travel times via box plots. The dot marks the median, the box shows the middle \text{$50^{th}$} percentile and the horizontal lines show the min and max values.  Fig.~\ref{fig:numspeedstudy_b} shows the corresponding average computation times. As seen, GMDM$'$-$1$ (the Dubins model) provides the worst solution quality as expected, with a median travel time of $\sim 6.42$ s; however, it has the fastest computation time of $\sim 6\times 10^{-7}$ s. Next, the Wolek's model provides the time-optimal solution quality, with a median travel time $\sim 4.15$ s; however, it has the slowest computation time of $\sim 0.5$ s. On the other hand, GMDM$'$-$2$ provides much better solution quality than Dubins and approaches that of Wolek, yielding a median travel time of $\sim 4.91$ s, while needing $\sim 2.64\times 10^{-6}$~s to get a solution. GMDM$'$-$3$ and GMDM$'$-$4$ provided median travel time results of $\sim 4.88$ s and $\sim 4.87$ s, respectively, which are only slightly better than GMDM$'$-$2$; however, GMDM$'$-$3$ and GMDM$'$-$4$ had larger computation times of $\sim 6.670\times 10^{-6}$ s and $1.36\times 10^{-5}$ s, respectively. Thus, at least for time-optimal planning, GMDM$'$-2 clearly provides the best trade-off for solution quality and computation time. Therefore, for the remaining results of time-optimal planning we use GMDM$'$-2 ($\Gamma'_{G2})$ and refer it as GMDM for simplicity.

{ The second study examines the effect of different values of $v_{min}$ on the solution quality of the Wolek's model and GMDM$'$-$2$, and the results are shown in Fig.~\ref{fig:minspeedstudy}. As expected, the Wolek's model provides the best performance for $v_{min}=0.1$ m/s, yielding a median travel time of $\sim 4.04$ s. As $v_{min}$ increases to $0.3$ m/s and $0.5$ m/s, the median travel times slightly increase to $\sim 4.15$ s and $\sim 4.39$ s, respectively. GMDM shows a similar trend, yielding the best median travel time of $\sim 4.64$ s for $v_{min}=0.1$ m/s. Again, as $v_{min}$ increases to $0.3$ m/s and $0.5$ m/s, the median travel times for GMDM slightly increase to $\sim 4.91$ s and $\sim 5.28$ s. Both the Wolek's model and GMDM outperform the Dubins model; however, their solution quality approach that of the Dubins solution as $v_{min}$ approaches $v_{max}$. 
Based on these results, we select $v_{min}=0.3$ m/s for the remainder of this paper as it reasonably demonstrates the benefits of GMDM and Wolek.

We also performed a final study examining the effects of having a different number of uniformly-spaced turning rates considered for each segment of GMDM. The results showed no solution quality benefit when considering turning rates besides $|\omega|\in\{0,\omega_{max}\}$; however, considering more than the extreme turning rates yielded higher computation times, as expected. For brevity's sake, these results figures are omitted.
}

Overall, considering the travel and computation time costs, the GMDM paths provide much better solution quality as compared to the Dubins paths while requiring similar computation times. On the other hand, GMDM has a significant computational advantage over the Wolek's solutions while achieving similar path quality. Furthermore, GMDM is easier to implement due to the closed-form solutions. Also, its paths with only three segments are simpler and more appealing  as compared to the Wolek's paths with more than three segments; thus, they are easier to follow by onboard controllers.

\begin{figure*}[!t]
\centering
    \subfloat[Visualization of paths produced by RRT$^*$ using the three motion models (Dubins, Wolek's and GMDM$'$-$2$) in an obstacle-rich scenario.]{
        \includegraphics[width=0.84\textwidth]{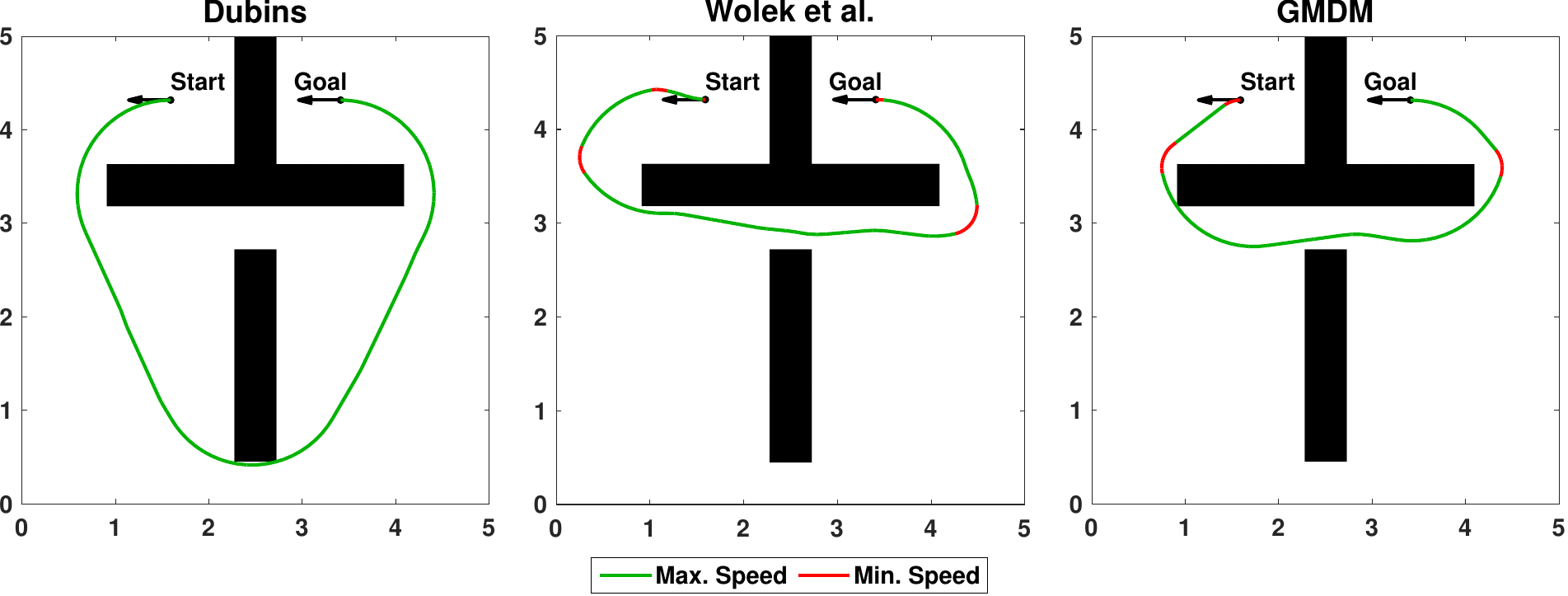}
        \label{fig:rrtstarpaths}
    }\\
    \centering
    \vspace{-6pt}
    \subfloat[Median performance vs. time.]{
        \includegraphics[width=0.275\textwidth]{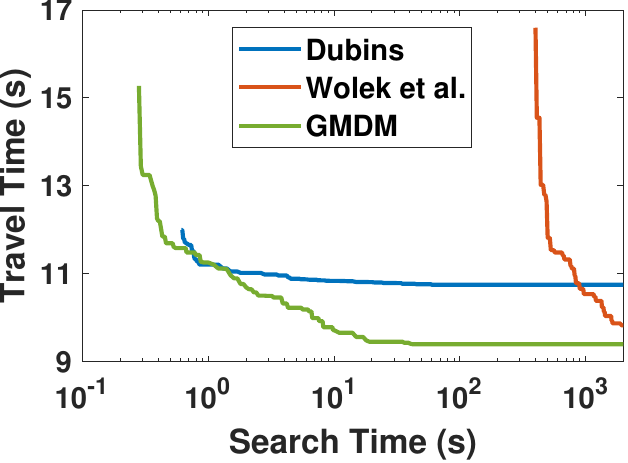}
        \label{fig:rrtstarconvrate}
    }
    \subfloat[Travel time cost.]{
        \includegraphics[width=0.275\textwidth]{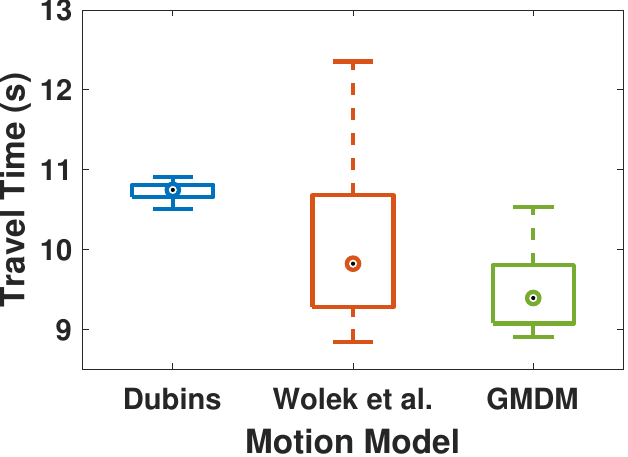}
        \label{fig:rrtstarboxplottimes}
    }
    \subfloat[Number of nodes in the search tree.]{
        \includegraphics[width=0.275\textwidth]{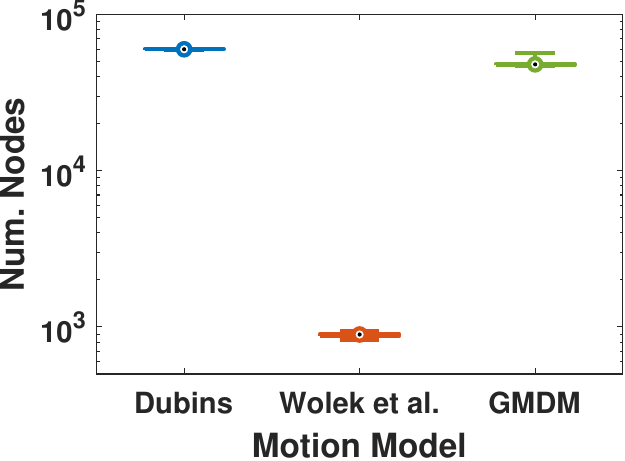}
        \label{fig:rrtstarboxplotnodes}
    }
\caption{Comparative evaluation results for time-optimal planning in obstacle-rich environments using RRT$^*$ with the three motion models (Dubins, Wolek's and GMDM$'$-$2$).}
\label{fig:TOP_rrtstar}  \vspace{-6pt}  
\end{figure*}

\vspace{6pt}

$\bullet$ \textit{Application to Traveling Salesman Problem}:
The above analysis provided insights into the types of paths produced by the three motion models along with the comparative evaluation of their travel and computation times for single destination problems. However, many path planning problems often need to consider the combinations of different intermediate waypoints before reaching the final destination. One such problem is the Dubins traveling salesman problem (TSP) \cite{Savla2008_DubinsTSP,Bull02011_DubinsTSP,Ny2012_DubinsTSPDiscrete,Cohen2017_DubinsTSPProgram} which is stated as follows: given a collection of $n$ points that must be visited, find (1) the sequence of points and (2) the heading at each point in the sequence that yields the time-optimal path for a curvature-constrained vehicle. It is clear that the overall path quality and computation time of this combinatorial optimization problem depends on the quality and computation time of the underlying motion model.

We consider the scenario shown in Fig.~\ref{fig:DubinsTSP}. The starting point of the vehicle is at the origin with $\textbf{p}_0=(0,0,0)$, which is marked with the number "0". There are three points of interest that must be visited after which the vehicle must return to the start pose. At each point, four headings \{$0$, $\pi/2$, $\pi$, $3\pi/2$\}rad are considered. The objective is to find the minimum time path that visits each of these points at a certain heading and finally returns back to the origin. To solve TSP using any underlying motion model, we have to first compute the paths and their time costs for every pose pair between different points of interest. A pose at a point of interest can connect to $8$ other poses. Thus, for a total of $12$ poses, there are $12 \times 8 =96$ pose pairs. Additionally, there are $12 + 12=24$ connections between the start pose and the other poses and vice versa. Thus, there are a total of $96+24=120$ pose pairs. 
For each motion model, once the paths and their time costs are computed for the above pose pairs, the optimal solution of TSP is found by searching for the sequence that has the least total travel time cost.

Fig.~\ref{fig:DubinsTSP} visualizes the paths produced by the three motion models (i.e., the Dubins model, the Wolek's model and GMDM$'$-$2$) for a TSP with $3$ points of interest. The numbers $1$, $2$ and $3$ indicate the order in which these points are visited. The Dubins path takes long and clumsy routes to connect each of the points of interest, resulting in a total travel time of $\sim 18.30$~s. While the Dubins path is the longest, it is computed in the fastest time of $\sim 0.007$~s. The Wolek's solution, on the other hand, provides a superior path quality, with direct paths connecting the points of interest with a total travel time of $\sim 11.24$~s. However, it takes $\sim 60.83$~s of computation time, which might not be acceptable for dynamic path planning problems. Finally, the quality of GMDM TSP solution is very close to the Wolek's solution with a total travel time of $\sim 12.64$~s, but it is obtained significantly faster in $\sim 0.011$~s.

For further validation, we conducted a Monte Carlo study of a TSP with $5$ points of interest. { For this study, 30 scenarios were considered.} For each scenario, five points of interest were randomly distributed around the start pose within a $5$~m radial distance. Each point has four possible headings as before. The objective is to find the minimum time path that travels through each of these points and returns back to the origin. Fig.~\ref{fig:tspmontecarlo_b} shows the statistical comparison results by box plots of the travel and computation times for the above Monte Carlo simulations. 
Fig.~\ref{fig:tspmontecarlo_a} shows that the GMDM's solution quality (i.e., travel time) is superior to the Dubins solutions and approaches that of the Wolek's time-optimal solutions. At the same time, Fig.~\ref{fig:tspmontecarlo_b} shows that GMDM is significantly faster, similar to Dubins, and took only $\sim0.5$~s to get the TSP solution as compared to the Wolek's model which required $\sim230$~s to get the solution.

Overall, the above results demonstrate that in obstacle-free environments:  (1) GMDM provides superior path quality as compared to the Dubins model while approaching the quality of the time-optimal Wolek's paths, and (2) GMDM has very low computational requirements similar to Dubins while being significantly faster than the Wolek's model; thus it is suitable for real-time path planning and replanning applications~\cite{SMART_Shen2023}.

\vspace{6pt}
\subsubsection{Time-optimal planning in obstacle-rich environments}\label{sec:MultispeedDubinsT*}
Next, we consider time-optimal planning in obstacle-rich environments without considering risk. We visualize the paths generated by different motion models and perform Monte Carlo simulations to compare their solution quality (i.e., travel time) and computation time. 
For obstacle-rich scenarios a high-level planner is needed to compute the time-optimal path. This planner uses the Dubins, Wolek's, and GMDM as the underlying motion models to connect any pair of sample poses. { In particular, we use RRT$^*$ \cite{Karaman2011_RRTPRM*} which is an asymptotically-optimal sampling-based motion planner that can produce time-optimal paths as long as the underlying motion model can produce time-optimal paths between any two neighboring sampled poses. Thus, RRT$^*$ with the Dubins model cannot produce time-optimal paths for multi-speed vehicles. On the other hand, RRT$^*$ with the Wolek's model can produce time-optimal paths asymptotically. Finally, since GMDM is only near-optimal, RRT$^*$ with GMDM is not guaranteed to produce time-optimal paths in the limit. However, since GMDM is orders of magnitude faster than the Wolek's model, RRT$^*$ with GMDM converges significantly faster than with the Wolek's model. This is because faster execution of GMDM allows RRT$^*$ to place more nodes in the total allotted time and find a better solution as compared to the Wolek's model. } 

\vspace{6pt}
$\bullet$ \textit{Visualization of paths}:
Fig.~\ref{fig:rrtstarpaths} shows an obstacle-rich scenario with the start and goal poses. As seen in Fig.~\ref{fig:rrtstarpaths}, the Dubins model produces a long path around the bottom obstacle due to its inability to make a sharp turn. The Wolek's model produces a better path that goes around the top obstacle to reach the goal. This path has a shorter travel time since it turns rapidly at slow speed and goes between the two obstacles. Finally, GMDM produces the best path that wraps around the top obstacle to quickly reach the goal in the fastest travel time. 

\vspace{6pt}
$\bullet$ \textit{Comparative evaluation results}:
Since RRT$^*$ places random samples in the environment, { $100$ Monte Carlo simulations were performed for comparative evaluation of the three motion models}.
Fig. \ref{fig:rrtstarconvrate} shows the convergence plot for each motion model. As seen, the Dubins plot converged the fastest, although the solution quality is low. On the other hand, the Wolek's plot did not converge in the allotted time, thus yielding a mediocre solution quality. Finally, the GMDM plot converged orders of magnitude faster than the Wolek's plot, while yielding the best solution quality as compared to both the baseline models. Figs. \ref{fig:rrtstarboxplottimes} and \ref{fig:rrtstarboxplotnodes} show the results of Monte Carlo simulations on this scenario. Fig. \ref{fig:rrtstarboxplottimes} shows the travel time box plots of the three models. Clearly, GMDM produces the overall best travel times while the Dubins paths took the longest travel times. { Fig. \ref{fig:rrtstarboxplotnodes} shows the number of nodes created using RRT$^*$. Clearly, GMDM  enabled the creation of a large number of nodes by RRT$^*$ due to faster computation and thus yielding better solution quality in the allotted computation time. On the other hand, the Wolek's model enabled the creation of much smaller number of nodes by RRT$^*$ due to its high computation time, thus yielding slow convergence. Overall, the results show that RRT$^*$ with GMDM provides the best solution quality in the least amount of time, thus making it suitable for practical applications. }

\vspace{-6pt}
\subsection{Time-Risk Optimal Planning}
In this section, we present the comparative evaluation results for time-risk optimal planning in obstacle-rich environments. 

\vspace{6pt}
\subsubsection{Motivation for time-risk optimal planning}
As discussed earlier, the joint time-risk optimal planning becomes critical in obstacle-rich environments to produce short but also safe paths. However, this requires flexibility in changing speed along the path to reduce risks near obstacles or through narrow passages yet providing reasonable travel times. 

In this paper, we use the T$^\star$ planner \cite{Song2019_tstar} to find approximate time-risk optimal paths in obstacle-rich environments. T$^\star$ uses the Dubins, Wolek's, and GMDM as the underlying motion models to generate the candidate paths between two neighboring poses. The poses are placed in a uniformly discretized $\text{SE}(2)$ configuration space. Then, the time-risk cost is computed for all candidate paths between any two neighboring poses as described in Section~\ref{sec:probform}. The risk-free collision threshold is $t^*=3$ s and the risk-weight is $\lambda=2$. Finally, the A$^*$ search is performed to find the overall time-risk optimal path. We refer the reader to \cite{Song2019_tstar} for more details on T$^\star$. Note: T$^\star$ is a deterministic planner, unlike RRT$^*$. Thus, only one trial is needed to obtain the results per motion model.

\begin{figure}[t!]
       \centering
        \includegraphics[width=0.95\columnwidth]{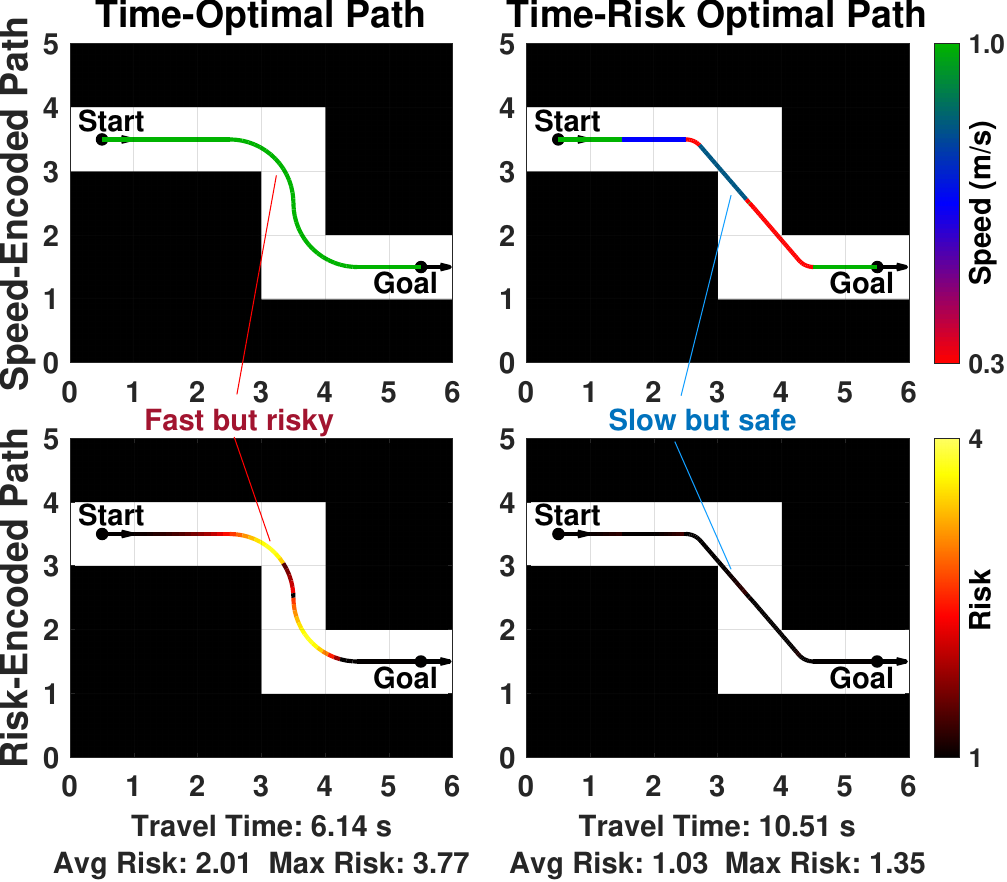}
        
    \caption{Visualization of the GMDM-$3$ paths encoded with the speed and risk information. The T$^\star$ \cite{Song2019_tstar} planner is used for time-optimal and time-risk optimal planning.
    }
    \label{fig:gmdmtstar_results1}\vspace{-12pt}
\end{figure}

\begin{figure*}[t!]
    \centering
    \subfloat[GMDM-1 (Dubins).]{
        \includegraphics[width=0.28\textwidth]{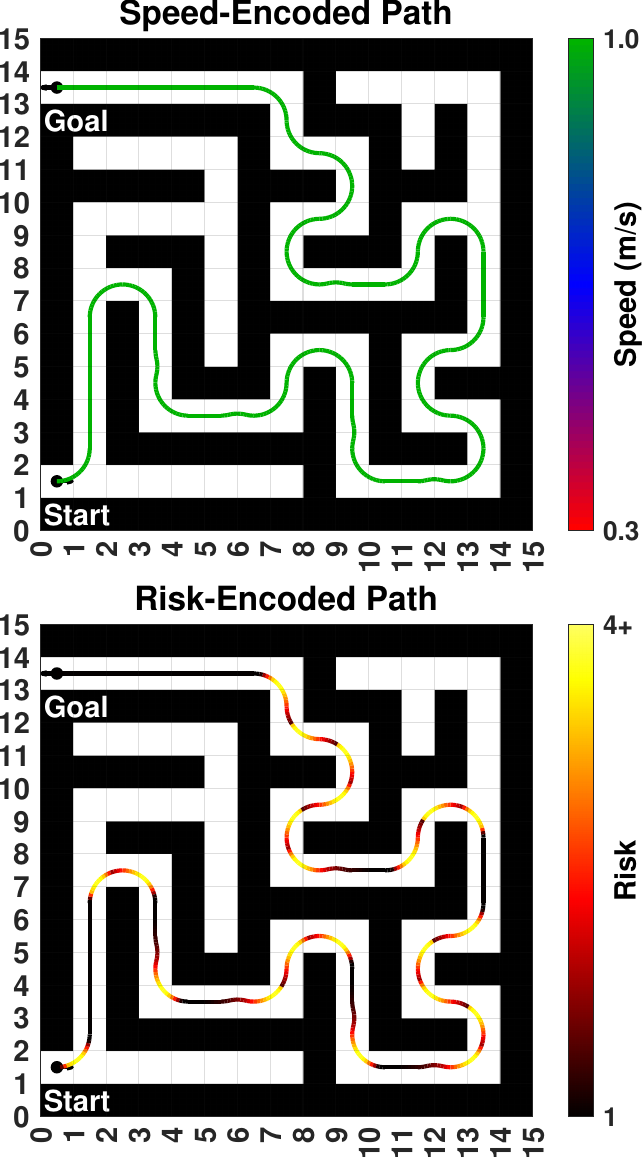}
        \label{fig:tstarmaze_dubins}
        }
    \subfloat[Wolek et al.]{
        \includegraphics[width=0.28\textwidth]{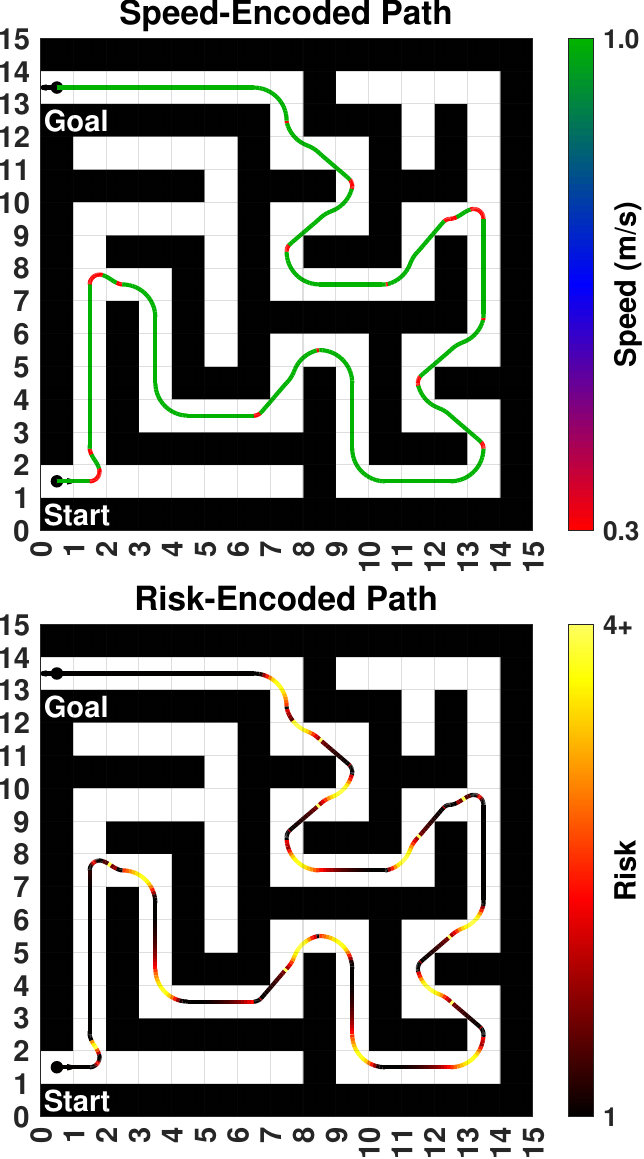}
        \label{fig:tstarmaze_wolek}
        }
    \subfloat[GMDM-2.]{
        \includegraphics[width=0.28\textwidth]{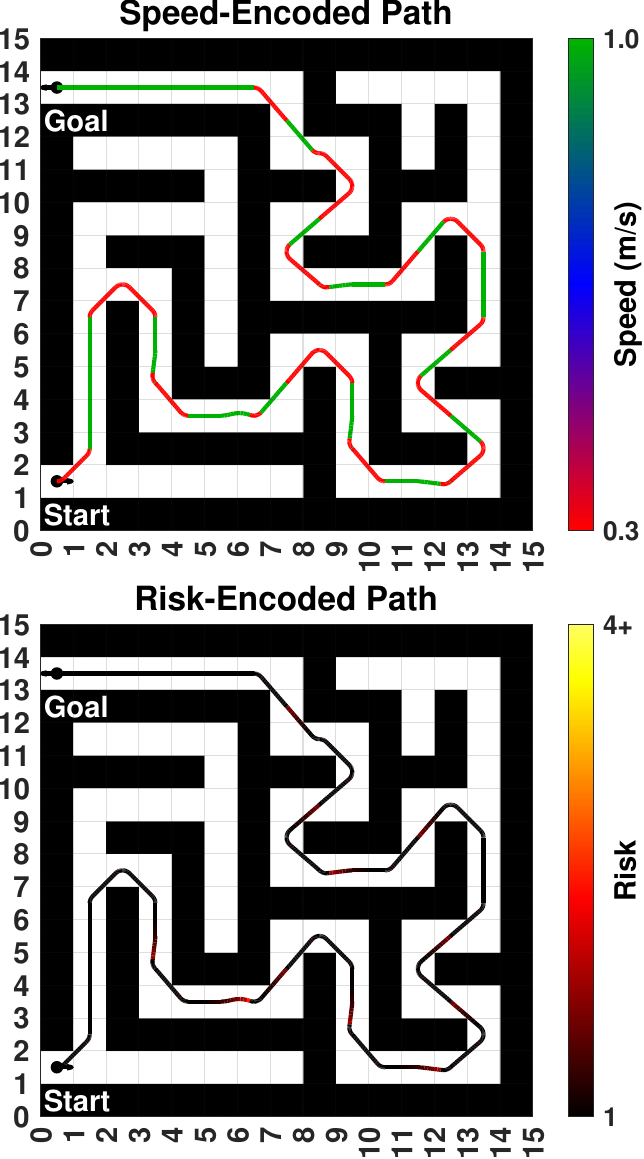}
        \label{fig:tstarmaze_gmdm2}
        }\\
    \subfloat[GMDM-3.]{
        \includegraphics[width=0.28\textwidth]{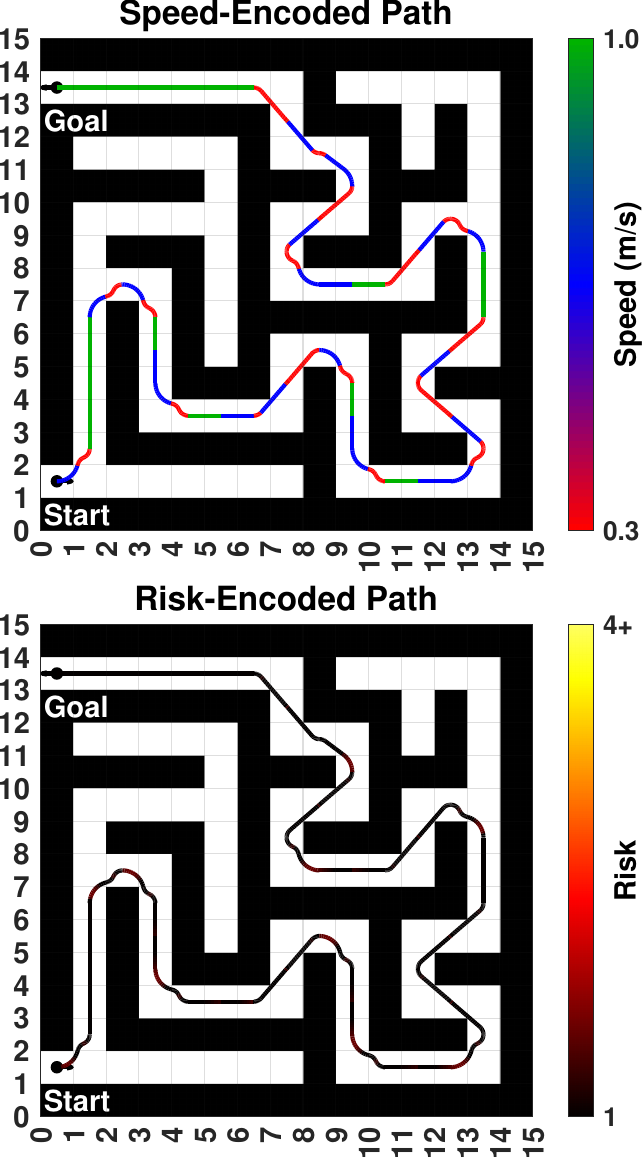}
        \label{fig:tstarmaze_gmdm3}
        }
    \subfloat[GMDM-4.]{
        \includegraphics[width=0.28\textwidth]{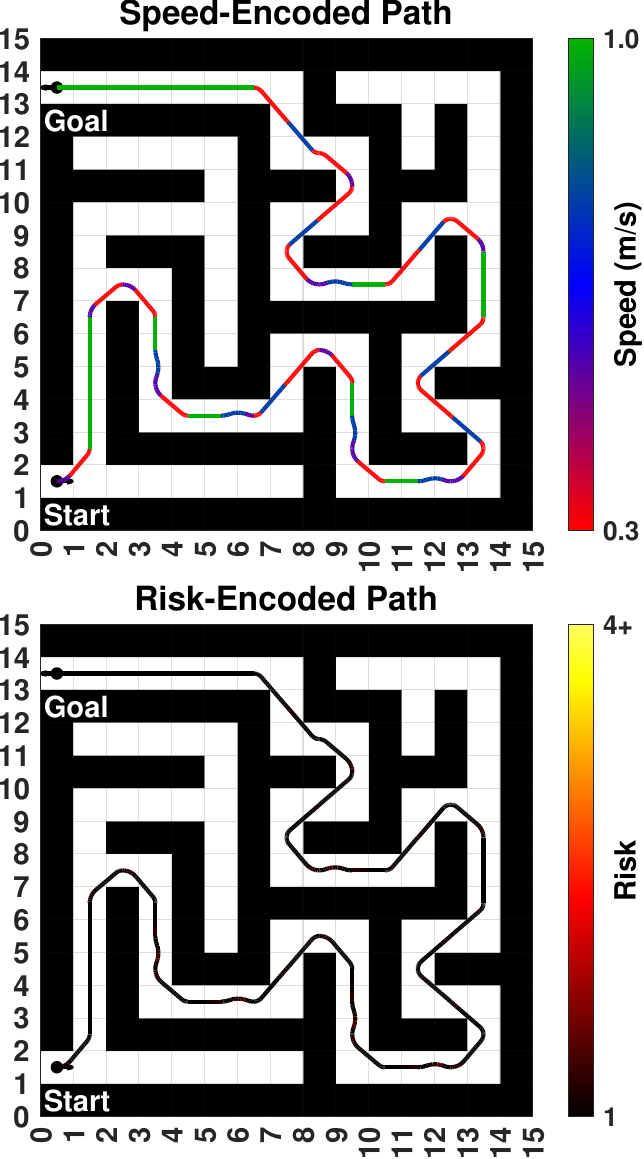}
        \label{fig:tstarmaze_gmdm4}
        }
    \subfloat[Results summary.]{
        \includegraphics[width=0.28\textwidth]{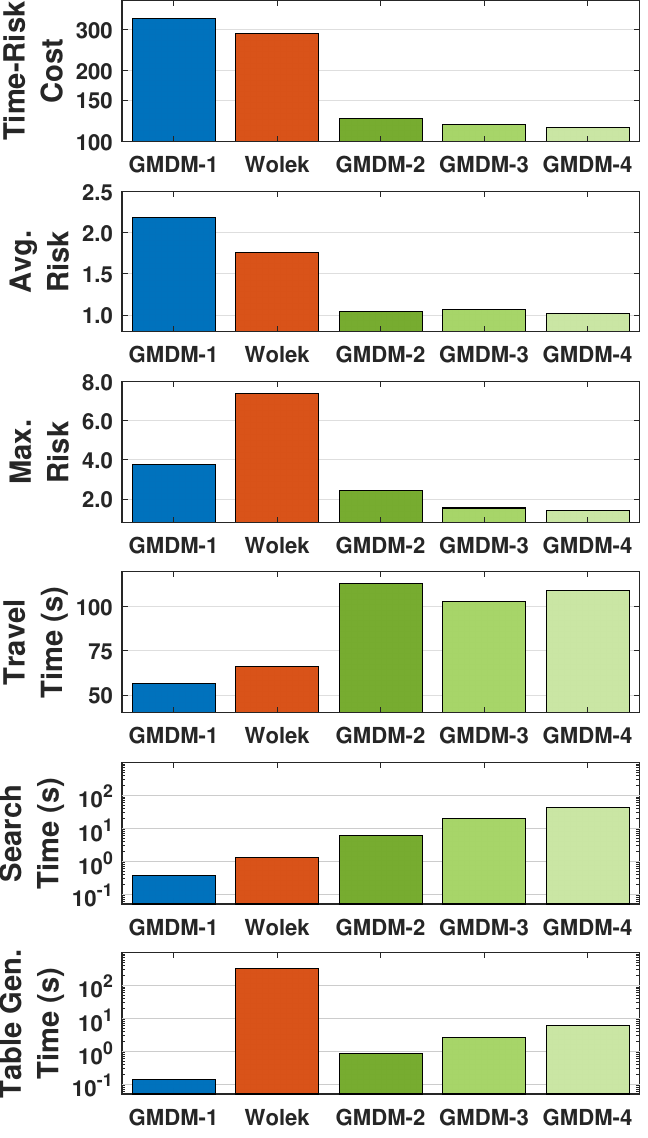}
        \label{fig:tstarmaze_results}
        }
        \caption{Comparison of the time-risk optimal paths generated by T$^\star$ \cite{Song2019_tstar} using different motion models (GMDM-1 (the Dubins model), the Wolek's model, GMDM-2, GMDM-3 and GMDM-4).}
        \label{fig:tstarmaze} \vspace{-12pt}
\end{figure*}

Fig.~\ref{fig:gmdmtstar_results1} visualizes the time-optimal and time-risk optimal paths produced by T$^\star$ \cite{Song2019_tstar} with GMDM-$3$ as the motion model. While the top row of Fig.~\ref{fig:gmdmtstar_results1} shows the speed-encoded time-optimal and time-risk optimal paths, the bottom row shows the corresponding risk-encoded paths. As seen in the top row, the time-optimal path runs only at the max-speed and yields a shorter travel time. On the other hand, the time-risk optimal path runs at three different speeds and yields a higher travel time. The time-risk optimal path chooses the min speed to make sharp turns and to reduce the collision risks. Thus, it runs at max speed when it is safe (i.e., when the approach time to an obstacle is high), min speeds during turning and in high-risk regions, and moderate speeds in other regions. As seen in the bottom row, the time-optimal path has high risk in most regions, while the time-risk optimal path has minimum risk and is safe. Fig.~\ref{fig:gmdmtstar_results1} shows that the average and max risks of the time-risk optimal path are significantly reduced.

\vspace{6pt}
\subsubsection{Comparative evaluation results}
As mentioned before, GMDM produces significantly better paths in terms of time-risk cost due to its added flexibility on choosing multiple speeds. While the Dubins paths have limited maneuverability because of the single speed constraint, the Wolek's paths also have limited capability in reducing the time-risk cost due to the use of only two speeds (i.e., min and max) and because the straight line segments are always at max speed, which increases the collision risk. 
In contrast, the GMDM paths enable appropriate speed selection to keep a balance between time and risk costs. In addition, the turning segments in GMDM can have multiple turning radii depending on the speeds, thus providing better maneuverability around obstacles.

Fig.~\ref{fig:tstarmaze} compares the time-risk optimal paths produced by GMDM-1 (the Dubins model), the Wolek's model, GMDM-2, GMDM-3 and GMDM-4. These models are implemented in the T$^\star$ planner \cite{Song2019_tstar} for the construction of the time-risk optimal paths in a complex maze scenario. For the T$^\star$ framework, the map is divided into cells of size $1\times 1$~m$^2$ with 8-connectivity. Each cell has 8 possible orientations that are evenly spaced at intervals of $\pi/4$. { The computations in T$^\star$ are done using an a priori look-up table generation of all possible path types between any pair of neighboring poses on the grid.} This speeds up the search time for path generation later.

The Dubins model used $v_{max}$, the Wolek's model used $v_{min}$ and $v_{max}$, and GMDM-2, GMDM-3 and GMDM-4 used 2, 3 and 4 speeds, respectively, which are uniformly-spaced in $[v_{min}, v_{max}]$. The results in Figs. \ref{fig:tstarmaze_dubins} and \ref{fig:tstarmaze_results} show that the GMDM-1 (the Dubins model) path travels fastest; however, it has the highest time-risk cost due to high average and max risks on the path. Figs. \ref{fig:tstarmaze_wolek} and \ref{fig:tstarmaze_results} show that the Wolek's path slows down at corners and reduces risk only slightly but is still very high due to the fast straight line segments, thus leading to an overall high time-risk cost. Figs. \ref{fig:tstarmaze_gmdm2} and \ref{fig:tstarmaze_results} show that the GMDM-2 path adapts two speeds throughout the entire path and hence able to reduce the risk significantly, thus leading to a much smaller time-risk cost. As seen, GMDM-2 compromises the travel time for better safety. The computation time of GMDM-2 is higher than that of Wolek because in T$^\star$ a look-up table is generated a priori containing all possible paths between any pair of poses in the local neighborhood on the grid. However, the computation time of GMDM-2 including the { a priori lookup table generation ("Table Gen." in Fig.~\ref{fig:tstarmaze_results})} and the search times, is smaller than the Wolek's model. Finally, Figs. \ref{fig:tstarmaze_gmdm3}-\ref{fig:tstarmaze_results} show that GMDM-3 and GMDM-4 adapt multiple speeds throughout the path to further reduce the risk and overall time-risk cost at the expense of slightly higher computation time. 

{ Overall, the time-risk cost monotonically reduces with the increase in the number of speeds of GMDM. Note that T$^\star$ minimizes the joint travel time and risk costs, thus the individual costs could change in a non-monotonic manner.} It is clear that GMDM provides enhanced maneuverability with multiple speeds while substantially reducing the risks, thereby providing superior time-risk optimal paths. Furthermore, the use of multiple speeds and thus turning radii gives smoother paths that can better adapt to the shape of the obstacles. 

\section{ Practical Considerations}\label{sec:limits}
{

This section discusses the practical considerations for the implementation of GMDM in real applications. One important consideration is that GMDM is a first-order kinematic model. While GMDM allows for the use of multiple linear speeds and turning rates and thus can provide good approximations of paths for a large class of vehicles \cite{L06}, there are instantaneous discontinuities in the curvature (like in Dubins) and speed when switching from one motion primitive to the next. Although such a path can be reasonably tracked by vehicles like differential-drive robots \cite{Balkcom2002}, other vehicles like quadrotors might not be able to track these paths accurately or feasibly due to motor dynamics and high-order differential smoothness requirements \cite{Mellinger2011_MinSnap,Mellinger2012_MinSnap,Richter_2016MinSnap}. 

In such cases, GMDM should first be utilized to produce the global path quickly, after which smoothing or other techniques can be used to make the path tractable for the vehicle. A few options are available to produce dynamically feasible trajectories built upon GMDM paths. In applications where continuous curvature is required, the use of Fermat's spiral \cite{Lekkas2013_FermatSpiral} or clothoid arcs \cite{Fraichard2004_CCSteer} can be utilized; similarly, the linear speeds can be smoothed considering acceleration while maintaining the constraints on the curvature as described in \cite{Kucerov2020_MultiRadiiDubins}. However, these paths do not consider higher-order derivatives like jerk and snap. In this regard, GMDM-polynomial paths can be developed as an extension of the Dubins-polynomial paths presented in \cite{Bry2015_MinSnap}. This approach first plans the high-level path using GMDM and then optimally produces a dynamically feasible polynomial trajectory that satisfies the continuity constraints up to the specified derivative. Finally, deep and reinforcement learning-based approaches can be used to create policies that minimize the tracking error of a given GMDM path and the dynamically produced trajectory \cite{Almeida2019_RLMinSnap,Song2021_DroneRace,Hanover2024_TRODroneSurvey}.

Another consideration is that while GMDM is a suitable high-level motion model for a large class of  2D vehicles (e.g., vehicles with Ackermann steering geometry \cite{AckermannModel}, surface vessels \cite{SurfaceVehicle},  etc.), there are other Dubins-like vehicles that operate in 3D environments (e.g., autonomous underwater vehicles \cite{Zeng2015_AUVPlanning_Survey}, fixed-wing aircraft \cite{FixedWingAircraftEx}, etc.). One strategy to construct a simplified multi-speed model for 3D environments involves utilizing GMDM motions on a plane with an independent double integrator for the altitude kinematics. 
A similar approach for using Dubins was proposed by Karaman and Frazzoli in \cite{Karaman2010_KinodynamicRRT*}. Another approach is to derive and solve the kinematic equations extending GMDM in 3D environments, similar to the extension of the Dubins model described in \cite{Dubins3DModel}. 
These approaches to extend GMDM for 3D environments could be combined with the techniques described earlier in this section to ensure the construction of smooth 3D trajectories.

}

\vspace{0pt}
\section{Conclusions and Future Work}
\label{sec:MultispeedDubinsConclusion}
This paper developed a new motion model, called Generalized Multi-speed Dubins Motion Model (GMDM), that extends the Dubins model to incorporate multiple speeds for fast and safe maneuvering. GMDM allows each path segment to have any speed and turning rate. GMDM is mathematically proven to provide full reachability with closed form solutions. It is shown that GMDM solutions are suitable for real-time implementation. For constant speed, GMDM reduces to the original Dubins model. To the best of our knowledge, no existing model offers these capabilities. The effectiveness of GMDM was demonstrated for both time-optimal and time-risk optimal motion planning problems in various scenarios. For the time-optimal motion planning problem, GMDM generates paths with a solution quality approaching that of the optimal paths and gives a significant improvement over the Dubins model. 
In obstacle-rich environments, GMDM provides shorter paths while achieving a substantially lower risk. 

In addition to the extensions of GMDM described in Section~\ref{sec:limits}, future work includes using GMDM in advanced motion planners in real-world planning applications, e.g.,  dynamic environments~\cite{SMART_Shen2023}, human-robot interaction \cite{DARE_Yang2023} and dynamic target tracking \cite{POSE-R_Hare_2020}.  
{ GMDM could also be extended by considering non-constant speed on each segment to improve the solution quality of time-risk optimal planning.} Furthermore, GMDM can be extended to higher-order motion models (e.g., considering acceleration).

\vspace{-6pt}
\begin{appendix}
\subsection{Solution of the Forward Problem}\label{AppendixA}
\subsubsection{Proof of Proposition \ref{propcscforward}}\label{proofpropcscforward}
\begin{proof}
For a CSC path type, the forward equation is given as $\textbf{p}_f=C_{\textbf{u}_3,\tau_3}(S_{\textbf{u}_2,\tau_2}(C_{\textbf{u}_1,\tau_1}(\textbf{p}_0)))$, where
  \begin{subequations}
  \begin{align}
    \begin{split}\label{eq:CSCp1}
    \textbf{p}_1=C_{\textbf{u}_1,\tau_1}(\textbf{p}_0),
    \end{split}\\
    \begin{split}\label{eq:CSCp2}
    \textbf{p}_2=S_{\textbf{u}_2,\tau_2}(\textbf{p}_1),
    \end{split}\\
    \begin{split}\label{eq:CSCpf}
  \textbf{p}_f=C_{\textbf{u}_3,\tau_3}(\textbf{p}_2).
  \end{split}
    \end{align}
    \end{subequations}
  From (\ref{eq:CSCp1}), (\ref{eq:CMotionPrimitive}) and (\ref{eq:param1})  we get
\begin{subequations}
    \label{eq:CSCxytheta1}
    \begin{align}
    \begin{split}
    \label{eq:x1CSC} 
        x_1 ={}& x_0-r_{1}\big(\sin{\theta_0}-\sin{(\theta_0+\phi_1)}\big),
    \end{split}\\
    \begin{split}
    \label{eq:y1CSC}
        y_1 ={}&  y_0+r_{1}\big(\cos{\theta_0}-\cos{(\theta_0+\phi_1)}\big),
    \end{split}\\
    \begin{split}
    \label{eq:theta1CSC}
        \theta_1 ={}& \bmod(\theta_0+\phi_1,2\pi).
    \end{split}
    \end{align}
    \end{subequations}
From (\ref{eq:CSCp2}), (\ref{eq:CSCxytheta1}), (\ref{eq:SMotionPrimitive})  and (\ref{eq:param1}) we get
\begin{subequations} \label{eq:CSCxytheta2}
    \begin{align}
    \begin{split}
    \label{eq:x2CSC} 
        x_2 ={}& x_0-r_{1}\big(\sin{\theta_0}-\sin{(\theta_0+\phi_1)}\big) + \delta_2\cos{(\theta_0+\phi_1)},
    \end{split}\\
    \begin{split}
    \label{eq:y2CSC}
        y_2 ={}&  y_0+r_{1}\big(\cos{\theta_0}-\cos{(\theta_0+\phi_1)}\big) + \delta_2\sin{(\theta_0+\phi_1)},
    \end{split}\\
    \begin{split}
    \label{eq:theta2CSC}
         \theta_2 ={}& \bmod(\theta_0+\phi_1,2\pi). 
    \end{split}
    \end{align}
    \end{subequations}
From (\ref{eq:CSCpf}), (\ref{eq:CSCxytheta2}), (\ref{eq:CMotionPrimitive})  and (\ref{eq:param1}) we get
\begin{subequations}
    \label{eq:CSCxythetaf}
    \begin{align}
    \begin{split}
    \label{eq:xfCSC} 
        x_f ={}& x_0-r_{1}\sin{\theta_0}-r_{31}\sin{(\theta_0+\phi_1)} + \\ {}& \delta_2\cos{(\theta_0+\phi_1)}+r_{3}\sin{(\theta_0+\phi_1+\phi_3)},
    \end{split}\\
    \begin{split}
    \label{eq:yfCSC}
         y_f ={}&  y_0+r_{1}\cos\theta_0+ r_{31}\cos(\theta_0+\phi_1)+ \\
        & \delta_2\sin(\theta_0+\phi_1) -r_{3}\cos(\theta_0+\phi_1+\phi_3),
    \end{split}\\
    \begin{split}
    \label{eq:thetafCSC} 
         \theta_f ={}&  \bmod(\theta_0+\phi_1+\phi_3,2\pi).
    \end{split}
    \end{align}
    \end{subequations}
\end{proof}

\vspace{-6pt}
\subsubsection{Proof of Proposition \ref{propcccforward}} \label{proofpropcccforward}
\begin{proof}
For a CCC path type the forward equation is given as $\textbf{p}_f=C_{\textbf{u}_3,\tau_3}(C_{\textbf{u}_2,\tau_2}(C_{\textbf{u}_1,\tau_1}(\textbf{p}_0)))$ where
  \begin{subequations}
  \begin{align}
    \begin{split}\label{eq:CCCp1}
    \textbf{p}_1=C_{\textbf{u}_1,\tau_1}(\textbf{p}_0),
    \end{split}\\
    \begin{split}\label{eq:CCCp2}
    \textbf{p}_2=C_{\textbf{u}_2,\tau_2}(\textbf{p}_1),
    \end{split}\\
    \begin{split}\label{eq:CCCpf}
  \textbf{p}_f=C_{\textbf{u}_3,\tau_3}(\textbf{p}_2).
  \end{split}
    \end{align}
    \end{subequations}
  From (\ref{eq:CCCp1}), (\ref{eq:CMotionPrimitive}) and (\ref{eq:param1}) we get
\begin{subequations}
    \label{eq:CCCxytheta1}
    \begin{align}
    \begin{split}
    \label{eq:x1CCC} 
        x_1 ={}& x_0-r_{1}\big(\sin{\theta_0}-\sin{(\theta_0+\phi_1)}\big),
    \end{split}\\
    \begin{split}
    \label{eq:y1CCC}
        y_1 ={}&  y_0+r_{1}\big(\cos{\theta_0}-\cos{(\theta_0+\phi_1)}\big),
    \end{split}\\
    \begin{split}
    \label{eq:theta1CCC}
         \theta_1 ={}&  \bmod(\theta_0+\phi_1,2\pi). 
    \end{split}
    \end{align}
    \end{subequations}
From  (\ref{eq:CCCp2}), (\ref{eq:CCCxytheta1}),  (\ref{eq:CMotionPrimitive}) and (\ref{eq:param1}) we get
\begin{subequations}
    \label{eq:CCCxytheta2}
    \begin{align}
    \begin{split}
    \label{eq:x2CCC} 
        x_2 ={}& x_0-r_{1}\sin{\theta_0}+ r_{12}\sin{(\theta_0+\phi_1)}+ \\ & r_{2}\sin{(\theta_0+\phi_1+\phi_2)},
    \end{split}\\
    \begin{split}
    \label{eq:y2CCC}
        y_2 ={}&  y_0+r_{1}\cos{\theta_0}- r_{12}\cos{(\theta_0+\phi_1)}- \\ &  r_{2}\cos{(\theta_0+\phi_1+\phi_2)},
    \end{split}\\
    \begin{split}
    \label{eq:theta2CCC}
         \theta_2 ={}&  \bmod(\theta_0+\phi_1+\phi_2,2\pi).
    \end{split}
    \end{align}
    \end{subequations}
From  (\ref{eq:CCCpf}), (\ref{eq:CCCxytheta2}),  (\ref{eq:CMotionPrimitive}) and (\ref{eq:param1}) we get
\begin{subequations}
    \label{eq:CCCxythetaf}
    \begin{align}
    \begin{split}
    \label{eq:xfCCC} 
        x_f ={}& x_0-r_{1}\sin{\theta_0}+ r_{12}\sin{(\theta_0+\phi_1)} + \\ & r_{23}\sin{(\theta_0+\phi_1+\phi_2)}   +r_{3}\sin{(\theta_0+\phi_1+\phi_2+\phi_3)},
    \end{split}\\
    \begin{split}
    \label{eq:yfCCC}
        y_f ={}& y_0+r_{1}\cos{\theta_0}- r_{12}\cos{(\theta_0+\phi_1)}- \\ &  r_{23}\cos{(\theta_0+\phi_1+\phi_2)} -r_{3}\cos{(\theta_0+\phi_1+\phi_2+\phi_3)},
    \end{split}\\
    \begin{split}
    \label{eq:thetafCCC} 
         \theta_f ={}& \bmod(\theta_0+\phi_1+\phi_2+\phi_3,2\pi). 
    \end{split}
    \end{align}
    \end{subequations}
\end{proof}

\vspace{-18pt}
\subsection{Solution of the Inverse Problem}\label{AppendixB}
\subsubsection{Proof of Proposition \ref{propcscinverse}} \label{proofpropcscinverse}
\begin{proof}
Using (\ref{eq:AB}) and (\ref{eq:theta1CSC}), we rewrite (\ref{propxfcsc}) and (\ref{propyfcsc}) as
    \begin{subequations}
     \label{eq:CSC_mod}
     \begin{align}
        \label{eq:CSC_mod1}
         -r_{31}\sin{\theta_1}+\delta_2\cos{\theta_1}={}& a, \\
         \label{eq:CSC_mod2}
        r_{31}\cos{\theta_1} + \delta_2\sin{\theta_1} ={}& b. 
     \end{align}
    \end{subequations}

Taking the squares of (\ref{eq:CSC_mod1}) and (\ref{eq:CSC_mod2}), adding them and rearranging, we get
        \begin{equation} \label{eq:CSCsolndelta2}
        \delta_2 = \sqrt{a^2+b^2-r^2_{31}}.
    \end{equation}

    Next, (\ref{eq:CSC_mod1})$\cdot$$\sin{\theta_1}$ - (\ref{eq:CSC_mod2})$\cdot$$\cos{\theta_1}$ gives 
   \begin{equation}
        a\sin{\theta_1}- b\cos{\theta_1}=-r_{31}. 
   \end{equation}

Then applying the following trigonometric identity 
\begin{equation}
\label{eq:linearsinusoids}
A\sin{\alpha}+B\cos\alpha=\sqrt{A^2+B^2}\sin(\alpha+\atantwo(B,A)),
\end{equation}  
we get 
\begin{equation}
        \label{eq:CSCsolntheta1}
        \theta_1=\arcsin\Big(\frac{-r_{31}}{\sqrt{a^2+b^2}}\Big)-\atantwo(-b,a).
\end{equation}

Since {$\phi_1=\omega_1\tau_1=\theta_{10}$}, 
we can solve for $\tau_1$ using (\ref{eq:CSCsolntheta1}). Similarly, since $\delta_2=v_2\tau_2$, we can solve for $\tau_2$ using (\ref{eq:CSCsolndelta2}). Finally, since  $\phi_3=\omega_3\tau_3=\theta_{31}$, 
we can solve for $\tau_3$ using (\ref{eq:CSCsolntheta1}). The solutions for $\tau_1$, $\tau_2$, and $\tau_3$ are given as
    \begin{equation}
     \tau_1= \frac{\theta_{10}}{\omega_1},
         \tau_2= \frac{\delta_2}{v_2}, \ \textrm {and} \
          \tau_3= \frac{\theta_{31}}{\omega_3}.
    \end{equation}

\end{proof}

\vspace{-0pt}
\subsubsection{Proof of Proposition \ref{propcccinverse}} \label{proofpropcccinverse}
\begin{proof}
Using (\ref{eq:AB}), (\ref{eq:theta1CCC}) and (\ref{eq:theta2CCC}),  we rewrite (\ref{propxfccc}) and (\ref{propyfccc}) as 
    \begin{subequations}
     \label{eq:CCC_mod}
     \begin{align}
        \label{eq:CCC_mod1}
         a-r_{12}\sin{\theta_1}={}&r_{23}\sin\theta_2,\\
         \label{eq:CCC_mod2}
         b+r_{12}\cos{\theta_1}={}&-r_{23}\cos\theta_2.
     \end{align}
    \end{subequations}
Taking the squares of (\ref{eq:CCC_mod1}) and (\ref{eq:CCC_mod2}), adding them and rearranging we get

\begin{equation}
a\sin\theta_1 - b\cos\theta_1 = (a^2 + b^2 + r_{12}^2 - r_{23}^2)/(2r_{12}).
\end{equation}
Using (\ref{eq:linearsinusoids}) and the fact that $|\phi_2|\in(\pi,2\pi)$ \cite{L06}, we get 
    \begin{equation}
    \label{eq:CCCsolntheta1}
        \theta_1=\pi-\arcsin\Big(\frac{a^2+b^2+r_{12}^2-r_{23}^2}{2r_{12}\sqrt{a^2+b^2}}\Big)-\atantwo(-b,a).
    \end{equation}
Next, we rewrite (\ref{eq:CCC_mod1}) and (\ref{eq:CCC_mod2}) as
   \begin{subequations}
     \label{eq:CCC_modagain}
     \begin{align}
        \label{eq:CCC_mod1again}
         a-r_{23}\sin\theta_2={}&r_{12}\sin{\theta_1},\\
         \label{eq:CCC_mod2again}
         b+r_{23}\cos\theta_2={}&-r_{12}\cos{\theta_1}.
     \end{align}
    \end{subequations}
Taking the squares of (\ref{eq:CCC_mod1again}) and (\ref{eq:CCC_mod2again}), adding them and rearranging we get
\begin{equation}
a\sin\theta_2 - b\cos\theta_2 = (a^2 + b^2 + r_{23}^2 - r_{12}^2)/2r_{23}.
\end{equation}
Again using (\ref{eq:linearsinusoids}) and the fact that $|\phi_2|\in(\pi,2\pi)$ \cite{L06}, we get
 \begin{equation}
    \label{eq:CCCsolntheta2}
        \theta_2 =\pi-\arcsin\Big(\frac{a^2+b^2+r_{23}^2-r_{12}^2}{2r_{23}\sqrt{a^2+b^2}}\Big)-\atantwo(-b,a). 
    \end{equation}
Since {$\phi_1=\omega_1\tau_1=\theta_{10}$},
we can solve for $\tau_1$ using (\ref{eq:CCCsolntheta1}). Similarly, since {$\phi_2=\omega_2\tau_2=\theta_{21}$,} 
we can solve for $\tau_2$ using (\ref{eq:CCCsolntheta1}) and (\ref{eq:CCCsolntheta2}). Finally, since {$\phi_3=\omega_3\tau_3=\theta_{32}$}, 
we can solve for $\tau_3$ using (\ref{eq:CCCsolntheta2}). The solutions for $\tau_1$, $\tau_2$, and $\tau_3$ are given as
    \begin{equation}
         \tau_1= \frac{\theta_{10}} {\omega_1}, 
         \tau_2=\frac{\theta_{21}}{\omega_2}, 
         \tau_3= \frac{\theta_{32}}{\omega_3}.
     \end{equation}

\end{proof}

\vspace{-12pt}
\subsection{Reachability Proofs} \label{AppendixC}
\subsubsection{Proof of Theorem \ref{Th:GMDM_CSC_reachability} }
\label{prf:GMDM_CSC_reachability}

\begin{proof} 
From the CSC solution in (\ref{eq:CSCsolns}), it is clear that the solutions for $\tau_i$ exist as long as the solutions for $\theta_1$ and $\delta_2$ exist. Note that $a$, $b$, and $r_{31}$ are real by their definitions.  The solution for $\theta_1$ exists if the domain of the $\arcsin$ term in (\ref{eq:CSCsolntheta1prop}) is between -1 and 1, i.e.,
\begin{equation}
        \label{eq:MultiSpeed_CSCL_Reach_Step}
        -1\leq \frac{-r_{31}}{\sqrt{a^2+b^2}} \leq 1.    
\end{equation}
This implies that $a^2 + b^2 \geq r_{31}^2$. This condition also makes sure that $\delta_2$ is real in (\ref{eq:CSCsolndelta2prop}).  By substituting $a=x_f-c$ and $b=y_f-d$ in the above condition we get $(x_f-c)^2 + (y_f-d)^2  \geq  r_{31}^2$. 
\end{proof}

\subsubsection{Proof of Theorem \ref{Th:GMDM_CCC_reachability}}
\label{prf:GMDM_CCC_reachability}

\begin{proof}
From the CCC solution in (\ref{eq:CCCsolns}), it is clear that the solutions for $\tau_i$ exist as long as the solutions for $\theta_1$ and $\theta_2$ exist. Note that $a$, $b$, $r_{12}$, and $r_{23}$ are real by their definitions. The solutions for $\theta_1$ and $\theta_2$ exist if the domains of the $\arcsin$ terms in (\ref{eq:CCCsolntheta1prop}) and (\ref{eq:CCCsolntheta2prop}) are between -1 and 1, i.e.,
\begin{subequations}
    \label{eq:Multispeed_CCC_Reach_Step}
    \begin{align}
    \label{eq:Multispeed_CCC_Reach_Step_a}
        -1 \leq \frac{a^2+b^2+r_{12}^2-r_{23}^2}{2r_{12}\sqrt{a^2+b^2}} \leq 1, \\
        \label{eq:Multispeed_CCC_Reach_Step_b}
        -1 \leq \frac{a^2+b^2-r_{12}^2+r_{23}^2}{2r_{23}\sqrt{a^2+b^2}} \leq 1.
    \end{align}
\end{subequations}
The above inequalities (\ref{eq:Multispeed_CCC_Reach_Step_a}) and (\ref{eq:Multispeed_CCC_Reach_Step_b}) imply that
\begin{subequations}
    \label{eq:Multispeed_CCC_Reach_Step2}
    \begin{align}
    \label{eq:Multispeed_CCC_Reach_Step_a2}
        \left(\frac{a^2+b^2+r_{12}^2-r_{23}^2}{2r_{12}\sqrt{a^2+b^2}}\right)^2 \leq 1, \\
        \label{eq:Multispeed_CCC_Reach_Step_b2}
        \left(\frac{a^2+b^2-r_{12}^2+r_{23}^2}{2r_{23}\sqrt{a^2+b^2}}\right)^2 \leq 1.
    \end{align}
\end{subequations}
Thus,
\begin{subequations}
    \begin{align}
        \left({a^2+b^2+r_{12}^2-r_{23}^2}\right)^2 \leq 4r_{12}^2(a^2+b^2), \\
        \left({a^2+b^2-r_{12}^2+r_{23}^2}\right)^2 \leq 4r_{23}^2(a^2+b^2).
    \end{align}
\end{subequations}
Putting $q=a^2+b^2$ we get 
\begin{subequations}
    \label{eq:Multispeed_CCC_Reach_Step3}
    \begin{align}
    \label{eq:Multispeed_CCC_Reach_Step_a3}
        \left(q+(r_{12}^2-r_{23}^2)\right)^2 \leq 4r_{12}^2q, \\
        \label{eq:Multispeed_CCC_Reach_Step_b3}
        \left(q-(r_{12}^2-r_{23}^2)\right)^2 \leq 4r_{23}^2q.
    \end{align}
\end{subequations}
Rearranging we get 
\begin{subequations}
    \label{eq:Multispeed_CCC_Reach_Step4}
    \begin{align}
    \label{eq:Multispeed_CCC_Reach_Step_a4}
        q^2+(r_{12}^2-r_{23}^2)^2 + 2q(r_{12}^2-r_{23}^2)-4r_{12}^2q\leq 0, \\
        \label{eq:Multispeed_CCC_Reach_Step_b4}
        q^2+(r_{12}^2-r_{23}^2)^2 - 2q(r_{12}^2-r_{23}^2)-4r_{23}^2q\leq 0.
    \end{align}
\end{subequations}
Adding we get 
\begin{equation}
    \label{eq:Multispeed_CCC_Reach_Step5}
        q^2 - 2(r_{12}^2+r_{23}^2)q +(r_{12}^2-r_{23}^2)^2\leq 0.
\end{equation}
Factorizing we get 
\begin{equation}
    \label{eq:Multispeed_CCC_Reach_Step6}
        (q -  (r_{12}+r_{23})^2)(q-(r_{12}-r_{23})^2)\leq 0.
\end{equation}
For LRL path type, $r_1>0$, $r_2<0$ and $r_3>0$. Thus, $r_{12}\geq 0$ and $r_{23}\leq 0$. For RLR path type $r_1<0$, $r_2>0$ and $r_3<0$. Thus, $r_{12}\leq 0$ and $r_{23}\geq 0$. For both these path types   
\begin{equation}
    \label{eq:Multispeed_CCC_Reach_Step7}
        (r_{12}+r_{23})^2 \leq (r_{12}-r_{23})^2.
\end{equation}
Therefore, from (\ref{eq:Multispeed_CCC_Reach_Step6}) and (\ref{eq:Multispeed_CCC_Reach_Step7}) we get
\begin{equation}
    \label{eq:Multispeed_CCC_Reach_Step8}
    (r_{12}+r_{23})^2   \leq  q \leq (r_{12}-r_{23})^2.
\end{equation}
Simplifying and substituting for $q$ we get, 
\begin{equation}
     r_{31}^2   \leq  (x_f-c)^2 + (y_f-d)^2 \leq (r_{12}-r_{23})^2.
\end{equation}

\end{proof}

\vspace{-6pt}
\subsubsection{Proof of Theorem \ref{Th:GMDM_full_reachability}}
\label{prf:GMDM_full_reachability}
\begin{proof} 
Consider the LSL and RSR path types. It is sufficient to show that the unreachable regions of these two path types are disjoint. This implies that the union of their reachable regions covers the entire SE(2) space. Based on the CSC reachability condition in Theorem \ref{Th:GMDM_CSC_reachability}, the unreachable regions of LSL and RSR path types for any final heading $\theta_f$ are described by open circles. Let $q^{LSL}\triangleq(c^{LSL},d^{LSL})$ and $q^{RSR}\triangleq(c^{RSR},d^{RSR})$ be the centres, and $|r_{31}^{LSL}|$ and $|r_{31}^{RSR}|$ be the radii of the unreachable circular regions of LSL and RSR path types, respectively.  We need to show that the distance between the centers of these circles is larger than or equal to the sum of their radii.  Note that for any given control, $r_1^{R}=-r_1^{L}<0$ and $r_3^{R}=-r_3^{L}<0$.

The sum of radii of these circles is 
\begin{equation} 
\label{sumofradii}
 |r^{LSL}_{31}|+|r^{RSR}_{31}|= |r^{L}_{3}-r^{L}_{1}|+|r^{R}_3-r^{R}_1|= 2|r^{L}_{3}-r^{L}_{1}|=2|r^{LSL}_{31}|.
\end{equation}
The distance between the centres of these circles is given as
\begin{equation}
    \label{eq:GMDM_FullReachShow1}
   dist(q^{LSL},q^{RSR}) = \sqrt{(c^{LSL}-c^{RSR})^2+(d^{LSL}-d^{RSR})^2}. 
   \end{equation}
Then using (\ref{eq:CD}) we get, 
\begin{equation} 
\label{distcenters}
\begin{split}   
dist(q^{LSL},q^{RSR})  &  = 2\sqrt{(r^L_{1})^2+(r^L_{3})^2-2r^L_{1}r^L_{3}\cos(\theta_0-\theta_f)} \\
&  \geq 2\sqrt{(r_{1}^{L})^2+(r_{3}^{L})^2-2(r_{1}^{L})(r_{3}^{L})}\\
& = 2 |r_{3}^{L}-r_{1}^{L}|\\
& =  2|r_{31}^{LSL}|.  
    \end{split}
\end{equation}

Thus, from (\ref{sumofradii}) and (\ref{distcenters}) the unreachable open circular regions of LSL and RSR path types are non-overlapping.    
\end{proof}

\end{appendix}

\bibliographystyle{IEEEtran}
\bibliography{References}

\begin{IEEEbiography}[{\includegraphics[width=1in,height=1.25in,clip,keepaspectratio]{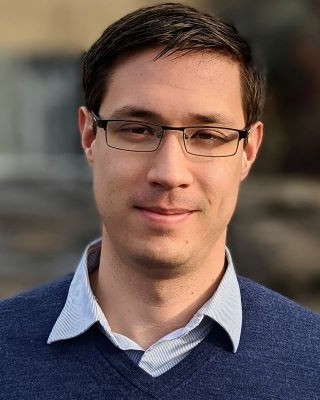}}]{James P. Wilson} received the B.S.E. degree in electrical engineering from the University of Connecticut, Storrs, CT, USA, in 2014, where he also received his Ph.D. degree in electrical engineering in 2023 with the Department of Electrical and Computer Engineering. He currently works as a Senior Research Engineer at RTX Technology Research Center, East Hartford, CT, USA. He is also an Adjunct Professor in the Department of Electrical and Computer Engineering at the University of Connecticut. His current research interests include data analysis and machine learning, robotics, motion planning, fault diagnosis and prognosis, and supervisory control in complex interconnected cyber-physical systems.
\end{IEEEbiography}

\begin{IEEEbiography}[{\includegraphics[width=1in,height=1.25in,clip,keepaspectratio]{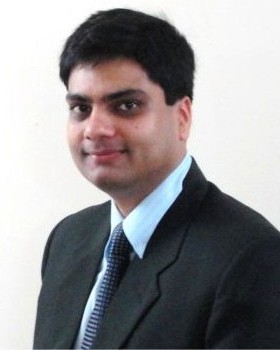}}]{Shalabh Gupta} received the B.E. degree in mechanical engineering from IIT Roorkee, India, in 2001. He received the M.S. degrees in mechanical and electrical engineering and the Ph.D. degree in mechanical engineering from the Pennsylvania State University, University Park, PA, USA, in 2004, 2005, and 2006, respectively. He is currently an Associate Professor at the Department of Electrical and Computer Engineering, University of Connecticut. His current research interests include distributed autonomy, cyber–physical systems, robotics, network intelligence, data analytics, information fusion, and fault diagnosis in complex systems. Dr. Gupta has published around 110 peer-reviewed journal and conference papers. He is a member of the IEEE, IEEE Systems, Man and Cybernetics Society, and ASME.
\end{IEEEbiography}

\begin{IEEEbiography}[{\includegraphics[width=1in,height=1.25in,clip,keepaspectratio]{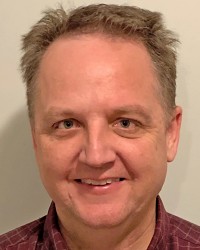}}]{Thomas A. Wettergren}  (Senior Member, IEEE) received the B.S. degree in electrical engineering and the Ph.D. degree in applied mathematics from Rensselaer Polytechnic Institute, Troy, NY, USA, in 1991 and 1995, respectively.
He works at the Naval Undersea Warfare Center, Newport, RI, USA, where he currently serves as the U.S. Navy Senior Technologist (ST) for Operational and Information Science. He also is an Adjunct Professor of Industrial
and Systems Engineering at the University of
Rhode Island, Kingston, RI, USA. He is the coauthor of the book titled Information-driven Planning and Control (MIT Press, 2021). His research interests include development of new analytical and computational methods for mathematical modeling, optimal planning, and
adaptive control of distributed groups. Dr. Wettergren is a member of the Society for Industrial and Applied
Mathematics (SIAM). He was a recipient of the NAVSEA Scientist of the Year, the Assistant Secretary of the Navy Top Scientists of the Year, and the IEEE-USA Harry Diamond awards.
\end{IEEEbiography}

\end{document}